\documentclass{article}
\usepackage[utf8]{inputenc}
\usepackage{fullpage}

\usepackage{amssymb}
\usepackage{amsmath}
\usepackage{amsthm}
\usepackage{latexsym}
\usepackage{graphicx}
\usepackage{color}
\usepackage{graphics}
\usepackage[round]{natbib}
\bibliographystyle{plainnat}
\usepackage{cancel}
\usepackage{thm-restate}
\usepackage{mathtools}
\usepackage[mathscr]{euscript}
\usepackage{hyperref}
\usepackage{tikz,wrapfig}
\usepackage{cleveref}

\usepackage{enumitem}

\usepackage[algo2e,ruled]{algorithm2e}
\usepackage{algorithm}
\usepackage[noend]{algpseudocode}

\usepackage{MnSymbol}
\DeclareMathAlphabet\mathbb{U}{msb}{m}{n}
\usepackage{xpatch}

\def\Rset{\mathbb{R}}

\let\Pr\undefined

\DeclareMathOperator*{\Pr}{\mathbb{P}}

\DeclareMathOperator*{\E}{\mathbb E}

\DeclareMathOperator{\poly}{poly}

\DeclarePairedDelimiter{\abs}{\lvert}{\rvert}

\newcommand{\Ber}{\mathsf{Ber}}

\newcommand{\cA}{\mathcal{A}}
\newcommand{\cC}{\mathcal{C}}
\newcommand{\cD}{\mathcal{D}}

\newcommand{\cL}{\mathcal{L}}

\newcommand{\cP}{\mathcal{P}}

\newcommand{\cY}{\mathcal{Y}}

\newcommand{\bx}{{\mathbf x}}

\newcommand{\bn}{{\mathbf n}}
\newcommand{\bp}{{\mathbf p}}

\newcommand{\R}{\mathfrak R}

\newcommand{\ForecastHedge}{\textsc{ForecastHedge}}
\newcommand{\ForecastFTPL}{\textsc{ForecastFTPL}}

\newcommand{\wt}{\widetilde}

\newcommand{\eps}{\varepsilon}
\newcommand{\ignore}[1]{}

\newcommand{\wtu}{\widetilde{u}}

\DeclareMathOperator{\sgn}{\mathsf{sgn}}
\DeclareMathOperator{\Reg}{\mathsf{Reg}}
\DeclareMathOperator{\AgentReg}{\mathsf{AgentReg}}
\DeclareMathOperator{\AgentSwapReg}{\mathsf{AgentSwapReg}}
\DeclareMathOperator{\MaxAgentReg}{\mathsf{UCal}}

\DeclareMathOperator{\Cal}{\mathsf{Cal}}
\DeclareMathOperator{\VCal}{\mathsf{VCal}}
\DeclareMathOperator{\VReg}{\mathsf{VReg}}

\newcommand{\sr}{\ell}
\newcommand{\srset}{\cL}
\newcommand{\ramp}{\mathsf{ramp}}

\usepackage{bm}

\newtheorem{theorem}{Theorem}
\newtheorem{definition}{Definition}
\newtheorem{lemma}[theorem]{Lemma}
\newtheorem{corollary}[theorem]{Corollary}
\newtheorem{proposition}[theorem]{Proposition}

\theoremstyle{remark}
\newtheorem{remark}{Remark}
\newtheorem{question}{Question}

\title{U-Calibration: Forecasting for an Unknown Agent}

\author{
Robert Kleinberg\thanks{Cornell University and Google Research. Email: \texttt{rdk@cs.cornell.edu}. This paper was written while the author was a Visiting Faculty Researcher at Google Research.}
\and
Renato Paes Leme\thanks{Google Research. Email: \texttt{renatoppl@google.com}}
\and
Jon Schneider\thanks{Google Research. Email: \texttt{jschnei@google.com}}
\and
Yifeng Teng\thanks{Google Research. Email: \texttt{yifengt@google.com}}
}
\begin{document}

\maketitle

\begin{abstract}
We consider the problem of evaluating forecasts of binary events whose predictions
are consumed by rational agents who take 
an action in response to a prediction, but whose utility is unknown to the forecaster. We show that optimizing forecasts for a single scoring rule (e.g., the Brier score) cannot guarantee low regret for all possible agents. In contrast, forecasts that are well-calibrated guarantee that all agents incur sublinear regret. However, calibration is not a necessary criterion here (it is possible for miscalibrated forecasts to provide good regret guarantees for all possible agents), and calibrated forecasting procedures have provably worse convergence rates than forecasting procedures targeting a single scoring rule.

Motivated by this, we present a new metric for evaluating forecasts that we call
\emph{U-calibration}, equal to the maximal regret of the sequence of forecasts
when evaluated under any bounded scoring rule. We show that sublinear
U-calibration error is a necessary and sufficient condition for all agents to 
achieve sublinear regret guarantees. We additionally demonstrate how to compute
the U-calibration error efficiently and provide an online algorithm that
achieves $O(\sqrt{T})$  U-calibration error (on par with optimal rates for optimizing for a single scoring rule, and bypassing lower bounds for the traditionally calibrated learning procedures). Finally, we discuss generalizations
to the multiclass prediction setting.\footnote{Accepted for presentation at the Conference on Learning Theory (COLT) 2023.}

\end{abstract}

\section{Introduction}

Imagine a weather forecaster who predicts the weather every day. On the morning of the $t$-th day, the forecaster reveals their prediction $p_{t} \in [0, 1]$ for whether it will rain that afternoon (e.g., they might say there is a ``30\% chance of rain that afternoon''). Then, that afternoon, it either rains or it doesn't (in which case we set $x_{t} = 0$ or $x_{t} = 1$ respectively). After many days, we have a large amount of information about both the forecaster's predictions ($p_t$) and the outcomes of the predicted events ($x_t$). Using this information, how should we measure the quality of this forecaster's predictions? Conversely, what sorts of metrics should a good forecaster strive to optimize?

Understanding how to evaluate repeated forecasts is a problem that has been well-studied in many areas, including statistics, computer science, and learning. The most commonly used techniques for performing this evaluation roughly fall into one of two approaches. The first approach is to reward a predictor according to a \textit{proper scoring rule}. A proper scoring rule is a function $\sr: [0, 1] \times \{0, 1\} \rightarrow \Rset$ which takes a prediction $p$ and an outcome $x$, and provides the predictor with a 
``score'' of $\sr(p, x)$. For example, the Brier scoring rule (aka the quadratic scoring rule) penalizes the predictor with a score given by

\begin{equation}\label{eq:brier}
\sr(p, x) = (x - p)^2.
\end{equation}

In order for a scoring rule to be proper, it should incentivize the predictor to predict the true probability (to the best of their knowledge) of the outcome of the corresponding event. Formally, if $x$ is a binary random variable with probability $p$, then $\E_{x}[\sr(p, x)]$ should be  less than $\E_{x}[\sr(p', x)]$ for any $p' \neq p$. It can be checked that the Brier scoring rule \eqref{eq:brier} is proper in this sense, as are many other scoring rules. This motivates measuring the quality of a forecaster by averaging the score of their predictions (e.g. assigning them a score of $\frac{1}{T}\sum_{t=1}^{T}\sr(p_t, x_t)$); if we do this, then to minimize their score it is in the forecaster's interest to predict their true belief about each outcome. 

The second approach is to check how \textit{calibrated} the forecaster is. Intuitively, calibration captures the idea that when the forecaster predicts rain with a probability of 30\%, it should rain about 30\% of the time. In particular, if we aggregate all the forecaster's predictions where the forecaster predicts a specific probability $p$, we should expect roughly a $p$ fraction of the corresponding outcomes to occur. One common way to formalize this is via the following definition of \mbox{($L_1$-)}\textit{calibration error}\footnote{%
    More generally, one can define the $L_p$ calibration error as the 
    $p$-norm of the vector of differences between the probability predicted
    at each time $t$ and of the empirical frequency of positive outcomes
    in all the time steps when the same prediction was made. The calibration
    error formulated in Equation~\eqref{eq:calibration} corresponds to the
    $L_1$ calibration error. The implication that calibration implies low agent regret is only valid for $L_1$-calibration.
} 
as

\begin{equation}\label{eq:calibration}
\Cal = \sum_{p \in [0, 1]} |pn_p - m_p|,
\end{equation}

\noindent
where here $n_p = |\{t \,;\, p_{t} = p\}|$ (the number of times the forecaster predicted $p_t$) and $m_p = |\{t \,;\, p_{t} = p \textrm{ and } x_{t} = 1\}|$ (the number of times the forecaster predicted $p_t$ and the event occurred). Here the term corresponding to each $p$ can be thought of as the error of the forecaster on predictions where they predicted $p$, scaled up by the number of times they predicted $p$ (note that the outer sum is finite since only a finite number of probabilities are ever predicted by the forecaster). 

Both of these approaches to evaluating forecasts suffer from 
drawbacks. To use a proper scoring rule the forecast evaluator
must choose which scoring rule to use. There are infinitely 
many possibilities --- the logarithmic score, Brier score,
and spherical score being three of the most well known --- 
and it is not clear how to assess the benefits and drawbacks
of one proper scoring rule versus another, much less how to 
design a scoring rule optimally for a given application (see e.g. \cite{hartline2020optimization}).

Calibration error gives a canonical way to 
measure forecast accuracy without making any arbitrary choices,
but it lacks a decision-theoretic foundation. In other words, 
the assumption that minimizing calibration error is a desirable goal
for a forecaster or a user of the forecaster's predictions
has no clear justification in terms of those parties' utilities.
Furthermore, algorithms for minimizing calibration error tend to suffer 
from slow convergence. For example, in the binary sequence
prediction problem the forecaster's calibration error in the
worst case is known to be bounded below by $\Omega(T^{0.528})$ (\cite{qiao2021stronger}) 
and above by $O(T^{2/3})$ (\cite{foster1998asymptotic}).
Thus, there is still a very significant gap between the best known
upper and lower bounds, but we already know for certain that the
optimal bound for $L_1$-calibration error is asymptotically greater
than the $O(T^{1/2})$ regret bound that is more typical for other
problems in online learning theory.

In this paper we introduce a new metric 
for forecast evaluation, {\em U-calibration}, 
that overcomes these shortcomings. Informally, 
U-calibration of a forecast sequence 
is defined by evaluating the forecaster's 
regret simultaneously with respect to all 
bounded proper scoring rules and taking the 
maximum regret. Tautologically, U-calibration
implies low scoring rule regret, regardless
of which (bounded) scoring rule is used for
forecast evaluation. U-calibration also has
the following desirable features.
\begin{enumerate}
    \item {\bf Decision-theoretic foundation.}
    Consider an agent facing a repeated decision 
    problem by choosing actions which are best
    responses to the predictions supplied by
    the forecaster. We show in \Cref{sec:agents_to_sr}
    that if the prediction sequence
    has a low (i.e., sublinear) U-calibration score
    with respect to the outcome sequence, then the
    agent will have sublinear regret regardless of
    their utility function. Furthermore, we 
    show that the property of U-calibration is 
    {\em necessary and sufficient} for this 
    ``universal regret minimization'' property. 
    \item {\bf Sublinear U-calibration is achievable.} 
    Achieving sublinear U-calibration requires achieving
    sublinear regret against infinitely many scoring
    rules simultaneously, raising the question of whether
    this property is even attainable in the worst case.
    We show (\Cref{thm:cal_suffices}) that the
    U-calibration score is bounded by a small constant times the 
    $L_1$ calibration error. Hence, any calibrated
    forecasting algorithm can be used to achieve the
    property of U-calibration. 
    \item {\bf Superior rates.} As noted earlier, 
    no forecasting algorithm can achieve $O(T^{1/2})$
    calibration error, even in the case of binary 
    outcomes~\citep{qiao2021stronger}.
    U-calibration does not suffer from this limitation:
    in \Cref{sec:algs} we present a randomized forecasting
    algorithm whose expected U-calibration score is
    $O(T^{1/2})$.
    \item {\bf U-calibration score is easy to compute.}
    Although the informal definition above requires
    maximizing regret over an infinite set of scoring
    rules, we show in \Cref{sec:computing_agent_cal}
    that the U-calibration score of a sequence of forecasts
    and outcomes can be computed in polynomial time. Moreover, we show the U-calibration of a sequence of outcomes is closely related to the regret of the worst ``V-shaped'' scoring rule, differing from this value by at most a factor of 2 (\Cref{thm:vcal_approx}).
\end{enumerate}
To sum up, in any situation in which forecasts are used
to facilitate decision making, a U-calibrated forecast 
sequence is as helpful as an ordinarily calibrated one. 

However, unlike $L_1$-calibration, U-calibration comes
at essentially no cost in terms of regret: there is a forecasting algorithm
which guarantees that agents best-responding to the forecasts
will have $O(\sqrt{T})$ regret, just as if the agents were
directly observing the outcome sequence and 
running optimal full-information learning algorithms 
to make their decisions. 

Finally, it is natural to wonder whether these results extend to predictions over multiple outcomes. In Section \ref{sec:multiclass_u_cal} we define a U-calibration metric in this setting with a similar universal regret minimization property. As in the binary case, we show that the multiclass U-calibration error of a sequence of forecasts is efficiently computable (Section \ref{sec:computing_agent_cal}). Unlike the binary case, however, the structure of worst-case scoring rules seems far more complex in the multi-class setting, and there is no obvious analogue of V-shaped scoring rules. In particular, we show the multiclass U-calibration problem does not reduce to several instances of the binary U-calibration problem: it is possible to be well-calibrated for each outcome individually while having large multi-class U-calibration (Theorem \ref{thm:binary_no_multiclass}). Furthermore, we show that when there are at least $4$ classes, there is no finite-parameter generating basis of all multiclass proper scoring rules the way V-shaped scoring rules form a 1-parameter generating basis for binary scoring rules (Theorem \ref{thm:extremal}).

Nonetheless, we provide a randomized forecasting algorithm which guarantees $O(K\sqrt{T})$ U-calibration error for predictions over $K$ outcomes, with the caveat that this guarantee is slightly weaker than in the binary case -- whereas in the binary case our algorithm minimizes the expected worst-case (over all bounded scoring rules) regret, our multiclass algorithm only guarantees a bound on the worst-case expected regret (Section \ref{sec:multiclass_algs}). Still, this bound is much better than the corresponding $O(T^{K/(K+1)})$ bound on ($K$-multiclass) $L_1$-calibration error attained by existing calibrated forecasting algorithms \citep{foster1997calibrated, blum2008regret}. 

\subsection{Related Work}

The problem of evaluating forecasters and their predictions has a long history spanning many fields. \cite{savage1971elicitation} is one of the first works to introduce proper scoring rules in their generality, but specific scoring rules (e.g. the quadratic scoring rule) appear as far back as \cite{brier1950verification}. Likewise, the idea of employing calibration as a method to evaluate forecasters dates at least as far back as \cite{dawid1982well}. Following from this is a fairly extensive literature \citep{seidenfeld1985calibration, schervish1989general, oakes1985self} discussing which metrics (e.g., scoring rules or calibration error) one should use for evaluating forecasters. Most similar to the perspective we take in this paper is an introductory section of \cite{foster2021forecast} titled ``The Economic Utility of Calibration'', which qualitatively remarks that an agent consuming predictions may benefit from these predictions being calibrated in some sense. \cite{foster2021forecast} do not explore this idea further, instead using this remark to motivate a separate (non-utility-theoretic) procedure called ``forecast hedging''.

The perspective of viewing forecasting as an online learning problem is relatively more recent, largely initiated by \cite{foster1998asymptotic} (who demonstrated an online procedure for producing calibrated forecasts with $O(T^{2/3})$ calibration error; see also \cite{hart2022calibrated}) and \cite{foster1997calibrated}, who showed that calibrated play in games leads to correlated equilibria. Very recently, \cite{qiao2021stronger} proved a lower bound of $\Omega(T^{0.528})$ on the calibration error of any forecaster.

Several variants of calibration have been introduced to deal with the property that calibration error is incredibly sensitive to the precise values of predictions -- perturbing each prediction by a random negligible constant can cause the calibration error to increase by $\Omega(T)$. \cite{kakade2004deterministic} define ``weak calibration'', \cite{foster2018smooth} define ``smooth calibration'', \cite{foster2021forecast} define ``continuous calibration'', and \cite{blasiok2022unifying} define several ``consistent calibration measures'' (many of these notions have additional properties, such as guaranteeing convergence to specific classes of equilibria). Our notion of U-calibration is also robust to slight perturbations but is not captured by any of these existing notions; see Appendix \ref{sec:other-calibration} for a discussion.

The problem of computing U-calibration can be thought of as an optimization problem over scoring rules, a class of problems which has received recent attention in the literature for independent reasons (e.g., it is a useful model for settings such as peer grading). Of most relevance to us is \cite{hartline2020optimization} (where V-shaped scoring rules also play an important role as a solution concept). Other relevant papers include \cite{hartline2022optimal} (a follow-up to \cite{hartline2020optimization} that studies combinatorial settings) and \cite{neyman2021binary} (which optimizes scoring rules that incentivize precision).

Calibration has found a rich collection of applications to problems of group fairness 
through the lens of \emph{multicalibration} \cite{hebert2018multicalibration}. Of this line of work, 
the most related seems to be the very relevant line of work on \emph{omnipredictors} \cite{gopalan2022omnipredictors, gopalan2022loss, gopalan2023characterizing}. An omnipredictor as defined in \cite{gopalan2022omnipredictors} is a predictor (taking as input some features and outputting a probabilistic prediction) that achieves low regret compared to some reference class $\cC$ of hypotheses for \textit{any} loss function in some given class $\cL$ of convex loss functions once the prediction is appropriately transformed. Despite some minor differences in problem set-up (this line of research considers an off-line/contextual model whereas we consider an online / context-free model), this notion of omnipredictor is very similar to a U-calibrated forecaster. These works show that omnipredictors can be constructed from multi-calibrated predictors (in a similar sense as Theorem \ref{thm:cal_suffices}, which shows that calibrated forecasters are U-calibrated). In contrast, we show it is possible to measure U-calibration error and construct online U-calibrated forecasters without directly requiring calibration. It is an interesting question if any of the techniques we discuss in this paper directly extend to the omnipredictor setting.

\section{Model and Preliminaries}
\label{sec:prelims}

\subsection{Scoring rules}
A scoring rule $\sr(p,x)$ is a penalty charged to a forecaster when they predict the probability $p \in [0,1]$ of a binary event $x \in \{0,1\}$. We say it is a  \textit{proper scoring rule} if
$$\E_{x \sim \mathrm{Ber}(p)}[\sr(p, x)] \leq \E_{x \sim \mathrm{Ber}(p)}[\sr(p', x)], \forall p' \neq p$$
where $\mathrm{Ber}(p)$ is a Bernoulli variable of bias $p$. A scoring rule is a \textit{strictly proper scoring rule} if this inequality is strict, i.e., $\E_{x \sim \mathrm{Ber}(p)}[\sr(p, x)] < \E_{x \sim \mathrm{Ber}(p)}[\sr(p', x)]$. Intuitively, a (strictly) proper scoring rule $\sr$ (strictly) incentivizes the forecaster to report the true probability of an event. 

We overload the notation by extending the function linearly to $[0,1]^2$. Let $$\sr(p; q) = \E_{x \sim \Ber(q)}[\sr(p, x)] = (1-q)\sr(p,0) + q\sr(p,1)$$ be the expected penalty from predicting $p$ for a binary event with true probability $q$. Finally, define the univariate form $$\sr(p) = \sr(p; p) = (1-p)\sr(p,0) + p\sr(p,1)$$ as the expected penalty from predicting $p$ for a binary event with true probability $p$. To disambiguate the functions $\sr(p)$ and $\sr(p, x)$, we will refer to the first as the \textit{univariate form} of the scoring rule and the second as the \textit{bivariate form} of the scoring rule. 

The following characterization by \cite{gneiting2007strictly} shows that scoring rules are (essentially) uniquely specified by their univariate form, which may be any concave function (see Appendix \ref{app:omitted} for a proof).

\begin{lemma}\label{lem:sr_conv}
Given any scoring rule $\sr$, the univariate form $\sr(p)$ is a concave function over the interval $[0, 1]$. Moreover, given any concave function $f:[0, 1]\rightarrow \Rset$, there exists a  scoring rule $\sr$ such that $\sr(p) = f(p)$ for $p \in [0, 1]$. Finally, if $\ell(p)$ is differentiable, then we can recover the bivariate form $\ell(p, x)$ via the equations

\begin{equation}\label{eq:scoring_rule_via_derivative}
\sr(p, 0) = \sr(p) - p\sr'(p) \qquad \qquad 
\sr(p, 1) = \sr(p) + (1-p)\sr'(p).
\end{equation}
\end{lemma}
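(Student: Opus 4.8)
The plan is to prove the three assertions in turn; all of them rest on the classical duality between concave functions and infima of affine functions that underlies the characterization of proper scoring rules (cf.\ \citet{gneiting2007strictly}). \emph{Concavity of the univariate form.} First I would reformulate properness as a minimization identity: by definition $\sr(p) = \sr(p;p)$, and properness asserts exactly that $\sr(p;p) \le \sr(p';p)$ for every $p'\in[0,1]$, so
\[
\sr(p) \;=\; \inf_{p'\in[0,1]} \sr(p';p), \qquad\text{where}\quad \sr(p';p) = (1-p)\,\sr(p',0) + p\,\sr(p',1).
\]
For each fixed $p'$ the map $p \mapsto \sr(p';p)$ is affine in $p$; hence $\sr(\cdot)$ is a pointwise infimum of affine functions and is therefore concave on $[0,1]$.

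\emph{Realizing an arbitrary concave $f$.} Given concave $f:[0,1]\to\Rset$, at each $p$ pick a supergradient $g(p)$, i.e.\ a slope satisfying $f(q) \le f(p) + g(p)(q-p)$ for all $q$ (such a slope exists by concavity of $f$; see the remark below about endpoints). Define
\[
\sr(p,0) = f(p) - p\,g(p), \qquad \sr(p,1) = f(p) + (1-p)\,g(p).
\]
A one-line computation then gives $\sr(p;q) = (1-q)\,\sr(p,0) + q\,\sr(p,1) = f(p) + g(p)(q-p)$, which by the choice of $g(p)$ is $\ge f(q)$, with equality at $q=p$. Hence $\sr(q;q) = f(q) \le \sr(p;q)$ for all $p$, so $\sr$ is proper, and taking $q=p$ shows $\sr(p) = f(p)$.

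\emph{Recovering the bivariate form.} Suppose now $\sr$ is proper with $\sr(\cdot)$ differentiable. The map $q \mapsto \sr(p;q) = (1-q)\,\sr(p,0) + q\,\sr(p,1)$ is affine in $q$, lies weakly above the concave function $\sr(\cdot)$ by properness, and agrees with it at $q=p$; it is therefore a supporting line of $\sr(\cdot)$ at $p$, and by differentiability this supporting line is unique --- the tangent line of slope $\sr'(p)$ through $(p,\sr(p))$. Matching slope and value yields $\sr(p,1)-\sr(p,0) = \sr'(p)$ together with $(1-p)\,\sr(p,0) + p\,\sr(p,1) = \sr(p)$; solving this $2\times 2$ linear system gives precisely \eqref{eq:scoring_rule_via_derivative}.

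I expect the only delicate point to be the behaviour of the supergradient $g$ at the endpoints $p\in\{0,1\}$. At an interior point a real-valued concave function has finite one-sided derivatives (the left one at least as large as the right one), so $g(p)$ may be taken to be any value between them, and it is forced to equal $f'(p)$ at all but countably many points --- which is the sense in which the scoring rule is ``essentially'' unique. At an endpoint the relevant one-sided derivative can be infinite, so in general the construction produces a scoring rule valued in $\Rset \cup \{\pm\infty\}$ (just as the logarithmic score assigns $+\infty$ to predicting $0$ when the event occurs); apart from tracking this subtlety, the remaining work is the routine algebra indicated above.
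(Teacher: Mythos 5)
Your proof takes essentially the same route as the paper's: concavity follows from writing $\sr(p) = \inf_{p'} \sr(p';p)$ as an infimum of affine functions, and the converse direction constructs the bivariate form from a supergradient and verifies properness by direct computation. One small point where you are slightly more careful than the paper: for the third assertion, the paper's proof only exhibits a bivariate form compatible with the formulas and does not separately argue that these are the \emph{only} compatible values of $\sr(p,0),\sr(p,1)$ when $\sr(\cdot)$ is differentiable; your supporting-line argument (the affine map $q\mapsto\sr(p;q)$ majorizes the concave $\sr(\cdot)$ and touches it at $q=p$, hence must be the unique tangent when the derivative exists) explicitly establishes this uniqueness before solving the resulting $2\times 2$ system. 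Your remark on endpoint supergradients and possibly infinite values is a reasonable caveat but is orthogonal to the lemma as stated, since the paper restricts attention to bounded scoring rules in the surrounding text.
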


Unless otherwise specified, we will only concern ourselves with \textit{bounded} scoring rules whose range lies in the interval $[-1, 1]$. This will imply a bound on the derivative of the univariate form:

\begin{corollary}\label{cor:derivative_bound}
For any scoring rule with range $\ell(p,x) \in [-1,1]$ the derivative of the univariate form is bounded:  $\ell'(p) \leq 2$.
\end{corollary}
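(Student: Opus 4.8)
The plan is to read off the derivative of the univariate form directly from the reconstruction formulas in \Cref{lem:sr_conv}. Subtracting the two equations in \eqref{eq:scoring_rule_via_derivative} gives
\[
\sr(p,1) - \sr(p,0) = \bigl(\sr(p) + (1-p)\sr'(p)\bigr) - \bigl(\sr(p) - p\sr'(p)\bigr) = \sr'(p),
\]
so the derivative of the univariate form is exactly the spread between the two outcome-conditional penalties $\sr(p,1)$ and $\sr(p,0)$.

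Given this identity, the argument is immediate. First I would invoke \Cref{lem:sr_conv} to note that $\sr(\cdot)$ is concave on $[0,1]$, hence differentiable off a countable set, so that $\sr'(p)$ is well defined wherever the statement is applied. Then, using the hypothesis that the bivariate form is bounded, $\sr(p,0),\sr(p,1)\in[-1,1]$, the identity above yields $\sr'(p) = \sr(p,1) - \sr(p,0) \le 1-(-1) = 2$; the symmetric computation gives $\sr'(p)\ge -2$ as well, so in fact $|\sr'(p)|\le 2$ at every point of differentiability.

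The only point requiring any care --- and the natural (if modest) obstacle --- is the behavior at the countably many points where $\sr$ fails to be differentiable; there one wants the bound to hold for both one-sided derivatives (equivalently, for every element of the superdifferential). This follows because properness says precisely that the affine map $q\mapsto (1-q)\sr(p,0)+q\sr(p,1)$ lies weakly above the concave function $\sr(\cdot)$ with equality at $q=p$, i.e. it is a supporting line of the hypograph of $\sr$ at $p$; hence its slope $\sr(p,1)-\sr(p,0)$ is sandwiched between the right derivative $\sr'_+(p)$ and the left derivative $\sr'_-(p)$. Since that slope lies in $[-2,2]$ by boundedness and the superdifferential at $p$ is exactly the interval $[\sr'_+(p),\sr'_-(p)]$, no point of $[0,1]$ carries a (one-sided or sub-)derivative of absolute value exceeding $2$, which gives the claimed bound in full generality.
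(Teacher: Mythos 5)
Your proof is correct and follows essentially the same route as the paper's: subtract the two reconstruction formulas in \eqref{eq:scoring_rule_via_derivative} to obtain $\sr'(p) = \sr(p,1) - \sr(p,0)$, then bound this difference by $2$ using $\sr(p,x) \in [-1,1]$. The additional care you take at points of non-differentiability (arguing that the slope of the supporting line must lie in the superdifferential interval $[\sr'_+(p), \sr'_-(p)]$) is a sound and welcome elaboration, but it goes beyond what the paper's one-line proof addresses, since \Cref{lem:sr_conv} is stated under a differentiability hypothesis.
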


\begin{proof}
    By equation \eqref{eq:scoring_rule_via_derivative} we have $\sr'(p) = \sr(p,1) - \sr(p,0) \in [-2,2]$ since $\sr(p,x) \in [-1,1]$.
\end{proof}

There are many different scoring rules that are commonly used in practice (e.g., Brier, logarithmic, spherical, etc.). The only scoring rule we will mention by name is the Brier scoring rule, defined by $\sr_{sq}(p, x) = (x-p)^2$, which has the univariate form $\sr_{sq}(p) = p(1-p)$.

\subsection{Forecasters and Agents}

We consider the following repeated game (which takes place over $T$ rounds) between three players: an Adversary, a Forecaster, and an Agent. The Adversary begins the game\footnote{For simplicity, we work in the oblivious model where the adversary must fix the sequence of outcomes at the very beginning of the game (this is the strongest model for our negative results). } by selecting for each $1 \leq t \leq T$, the outcome of a binary event $x_{t} \in \{0, 1\}$. 

The Forecaster's goal is to predict the outcomes of the events $x_{t}$ accurately. At the beginning of round $t$, the Forecaster outputs a prediction $p_{t} \in [0, 1]$ for $x_t$ as a (randomized) function of the previous predictions $p_1, \hdots, p_{t-1}$ and outcomes $x_1, \hdots, x_{t-1}$. We will discuss shortly several options for measuring the quality of the Forecaster's predictions.

Finally, the Agent must use the prediction $p_t$ provided by the Forecaster to choose an action $a_t$ (in some finite set of possible actions $\cA$) to take on round $t$. The utility of this action for the Agent depends on both the choice of action and the outcome of the event. Formally, we assume the existence of a bounded utility function $u: \cA \times \{0, 1\} \rightarrow [-1, 1]$ such that the agent receives utility $u(a, x)$ for playing action $a$ when outcome $x$ occurs. The agent trusts the Forecaster and chooses the action $a_{t}$ which maximizes $\E_{x \sim \Ber(p_t)}[u(a_t, x)]$ (i.e., the optimal action under the assumption that the outcome $x_t$ truly has probability $p_t$ of occurring). The Agent would like to maximize their total utility $\sum_{t} u(a_t, x_t)$. In practice, since the Agent's actions directly follow from the Forecaster's predictions, this will be one way we evaluate the Forecaster's predictions. 

We define the base rate frequency for the event occurring as:
$$\beta = \frac{1}{T}\sum_{t} x_t.$$ We will consider several methods for evaluating the Forecaster, each of which compares the Forecaster to the hypothetical \textit{base rate forecaster}, who predicts $p_{t} = \beta$ every round\footnote{In this sense each of our metrics is a form of \textit{regret}, as they compare our online Forecaster to the best fixed-prediction forecaster in hindsight.}. These are:

\begin{enumerate}[leftmargin=0.4cm]
    \item \textbf{Brier score / scoring rule regret}. One reasonable objective for the Forecaster is to minimize their total Brier score. We define the \textit{regret} of the Forecaster to be the difference between their total Brier score and the Brier score of the base rate forecaster. That is, for a sequence of $T$ binary events $\bx$ and corresponding predictions $\bp$ by the Forecaster, we define

    \begin{equation}
    \Reg(\bp, \bx) = \sum_{t=1}^{T} \sr_{sq}(p_t, x_t) - \sum_{t=1}^{T} \sr_{sq}(\beta, x_t).
    \end{equation}

    \noindent
    We will omit the parameters $\bp$ and $\bx$ when they are clear from the context. We say that the Forecaster has \textit{low regret} if (in expectation over the randomness in the Forecaster's algorithm) $\Reg = o(T)$, high regret if $\Reg \geq \Omega(T)$,  and \textit{negative regret} if $\Reg \leq -\Omega(T)$. A low regret Forecaster is at least as good (up to sublinear in $T$ terms) as the base rate forecaster and a negative regret Forecaster has a significant (linear in $T$) advantage over the base rate forecaster (when evaluated via Brier scores).
    
    Of course, we can extend this definition to an arbitrary fixed scoring rule $\sr$ and similarly write

    \begin{equation}\label{eq:sr_reg}
    \Reg_{\sr}(\bp, \bx) = \sum_{t=1}^{T} \sr(p_t, x_t) - \sum_{t=1}^{T} \sr(\beta, x_t).
    \end{equation}

    \noindent
    Likewise, we say that a Forecaster has \textit{low regret for (scoring rule) $\sr$} if $\Reg_{\sr} = o(T)$, \textit{high regret for (scoring rule) $\sr$} if $\Reg_{\sr} \geq \Omega(T)$, and \textit{negative regret for (scoring rule) $\sr$} if $\Reg_{\sr} \leq -\Omega(T)$. 
    
    \item \textbf{Calibration}. As in the introduction, we define the calibration of the Forecaster via

    \begin{equation}\label{eq:calibration_redef}
    \Cal(\bp, \bx) = \sum_{p \in [0, 1]} |pn_p - m_p|,
    \end{equation}

    \noindent
    where $n_p = |\{t \,;\, p_{t} = p\}|$ (the number of times the forecaster predicted $p_t$) and $m_p = |\{t \,;\, p_{t} = p \textrm{ and } x_{t} = 1\}|$ (the number of times the forecaster predicted $p_t$ and the event occurred). We say a Forecaster is \textit{well-calibrated} if $\Cal = o(T)$, and \textit{poorly calibrated} if $\Cal \geq \Omega(T)$. Note that the base rate forecaster has zero calibration error, so again this can be thought of as the difference between the Forecaster's performance and the base rate forecaster's performance. 

    \item \textbf{Agent utility}. Finally, we compare the Agent's utility under following the Forecaster's predictions with their counterfactual utility from following the base rate forecaster's predictions. In particular, we define the Agent's regret (for an agent with utility function $u$) as

    \begin{equation}\label{eq:agent_reg}
    \AgentReg_u(\bp, \bx) = \sum_{t=1}^{T} u(a_{\beta}, x_t) - \sum_{t=1}^{T} u(a_t, x_t),
    \end{equation}

    \noindent
    where $a_t = \arg\max_{a_t \in \cA} \E_{x \sim \Ber(p_t)}[u(a_t, x)]$ and $a_{\beta} = \arg\max_{a_{\beta} \in \cA} \E_{x \sim \Ber(\beta)}[u(a_{\beta}, x)]$. As with the scoring rules, we say that the Forecaster has \textit{low regret for the agent} if $\AgentReg_u = o(T)$, \textit{high regret for the agent} if $\AgentReg_u \geq \Omega(T)$, and \textit{negative regret for the agent} if $\AgentReg_u \leq -\Omega(T)$. In fact, as we will see in Section \ref{sec:agents_to_sr}, $\AgentReg_u$ is a special case of scoring rule regret for a properly defined scoring rule $\sr$. 
\end{enumerate}

It follows from known results in the online learning and optimization literature that the above low regret guarantees are all achievable -- see e.g. \citep{foster1997calibrated, arora2012multiplicative}.

\subsection{Agents as scoring rules}\label{sec:agents_to_sr}

We now show that optimizing the utility of the Agent corresponds to minimizing a specific scoring rule, thus connecting the benchmarks $\AgentReg_u$ and $\Reg_{\sr}$. Define $\wt{u}(p, x) = u(a(p), x)$ where 

$$a(p) = \arg\max_{a \in \cA} \E_{x \sim \Ber(p)}[u(a, x)]$$

\noindent
is the optimal action for the agent if the true probability of the event is $p$. In other words, $\wt{u}(p, x)$ is the utility the agent receives when receiving a prediction $p$ for an event with actual outcome $x$. We have the following lemma.

\begin{lemma}\label{lem:agent_to_sr}
Let $\sr(p, x) = -\wt{u}(p, x)$. Then $\sr$ is a proper scoring rule and $\AgentReg_u = \Reg_{\sr}$. Moreover, if $\sr$ is a proper scoring rule such that $\sr(p)$ is piecewise linear, there exists a utility function $u$ such that $\wt{u}(p, x) = -\sr(p, x)$. 
\end{lemma}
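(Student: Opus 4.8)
The plan is to prove the two halves of Lemma~\ref{lem:agent_to_sr} separately. For the first direction, I want to show that $\sr(p,x) = -\wt u(p,x)$ is a proper scoring rule. By definition, $\E_{x\sim\Ber(q)}[\sr(p,x)] = -\E_{x\sim\Ber(q)}[u(a(p),x)]$, where $a(p)$ is the agent's best response assuming bias $p$. Since $a(q) \in \cA$ is the action maximizing $\E_{x\sim\Ber(q)}[u(a,x)]$ by definition of $a(\cdot)$, we have $\E_{x\sim\Ber(q)}[u(a(p),x)] \le \E_{x\sim\Ber(q)}[u(a(q),x)]$, which immediately gives $\E_{x\sim\Ber(q)}[\sr(q,x)] \le \E_{x\sim\Ber(q)}[\sr(p',x)]$ for all $p'$ — i.e., $\sr$ is proper. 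The regret identity $\AgentReg_u = \Reg_{\sr}$ is then just bookkeeping: $u(a_t,x_t) = \wt u(p_t,x_t) = -\sr(p_t,x_t)$ and $u(a_\beta,x_t) = \wt u(\beta,x_t) = -\sr(\beta,x_t)$, so plugging into \eqref{eq:agent_reg} and \eqref{eq:sr_reg} and matching signs yields the claim. One subtlety worth a remark: best responses may not be unique, so $a(p)$ and hence $\wt u$ depends on a tie-breaking choice; any fixed tie-breaking rule works, and the argument above goes through verbatim.

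For the converse, suppose $\sr$ is a proper scoring rule whose univariate form $\sr(p)$ is piecewise linear (and, say, bounded). I want to exhibit a finite action set $\cA$ and a utility $u:\cA\times\{0,1\}\to[-1,1]$ so that $\wt u(p,x) = -\sr(p,x)$. The idea is to take one action per linear piece of $\sr(p)$. Concretely, write $\sr(p)$ as the (pointwise) minimum of finitely many affine functions $\sr(p) = \min_{i} (c_i - d_i p)$ — this is possible precisely because $\sr(p)$ is concave (Lemma~\ref{lem:sr_conv}) and piecewise linear. For each piece $i$, define an action $a_i$ with $u(a_i, 0) = -c_i$ and $u(a_i,1) = -(c_i - d_i) = -c_i + d_i$, so that $\E_{x\sim\Ber(p)}[u(a_i,x)] = -(c_i - d_i p)$. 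Then $\max_i \E_{x\sim\Ber(p)}[u(a_i,x)] = -\min_i(c_i - d_i p) = -\sr(p)$, so the agent facing bias $p$ picks the piece $i$ achieving the minimum, i.e.\ the linear piece that is active at $p$. On that piece, by the recovery formulas \eqref{eq:scoring_rule_via_derivative} applied to the affine function $c_i - d_i p$ (whose derivative is $-d_i$), we get $\sr(p,0) = c_i$ and $\sr(p,1) = c_i - d_i$ at every $p$ where piece $i$ is active; hence $u(a_i, x) = -\sr(p,x)$ there, which is exactly $\wt u(p,x) = -\sr(p,x)$. Finally, since $\sr$ is bounded in $[-1,1]$ and its derivative is bounded by $2$ (Corollary~\ref{cor:derivative_bound}), the values $c_i, c_i - d_i$ lie in a bounded range, and after a harmless affine rescaling (or simply noting $\sr(p,x)\in[-1,1]$ forces $c_i = \sr(p,0)\in[-1,1]$ and $c_i - d_i = \sr(p,1)\in[-1,1]$ for active pieces) we can take the utilities to lie in $[-1,1]$; pieces that are never active can be discarded.

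The main obstacle is the converse direction, and specifically making sure the correspondence between linear pieces and actions is clean: I need the active piece at $p$ to have exactly the affine form that the recovery formulas \eqref{eq:scoring_rule_via_derivative} produce from $\sr(p)$, so that the \emph{bivariate} form matches and not merely the univariate form. This works because on the interior of a linear piece $\sr$ is differentiable with $\sr'(p) = -d_i$, and \eqref{eq:scoring_rule_via_derivative} is linear in $(\sr(p),\sr'(p))$, so it returns precisely $(c_i, c_i - d_i)$; at the finitely many breakpoints the scoring rule's bivariate form is only pinned down up to the usual subgradient ambiguity, matching the agent's tie-breaking freedom, so there is no real issue. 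A secondary point to handle carefully is the boundedness of $u$, but as noted this is forced by $\sr(p,x)\in[-1,1]$ on the relevant pieces. I would present the construction, verify $\E_{x\sim\Ber(p)}[u(a_i,x)]$ equals the $i$-th affine function, invoke concavity to identify the argmax with the active piece, and then apply \eqref{eq:scoring_rule_via_derivative} to conclude.
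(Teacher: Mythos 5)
Your proof is correct and follows essentially the same route as the paper's: the forward direction is the same optimality-of-$a(p)$ argument, and the converse uses the same one-action-per-linear-piece construction (with $c_i - d_i p$ in place of the paper's $r_i p + s_i$). You are somewhat more careful than the paper in explicitly verifying that the \emph{bivariate} forms match (via \eqref{eq:scoring_rule_via_derivative} on the active piece) and in noting the tie-breaking/boundedness points, which the paper glosses over, but the underlying idea is identical.
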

\begin{proof}
To show $\sr$ is a proper scoring rule, we must show that $\E_{x \sim \mathrm{Ber}(p)}[\sr(p, x)] \leq \E_{x \sim \mathrm{Ber}(p)}[\sr(p', x)]$ for any $p' \neq p$. Equivalently, we must show that $\E_{x \sim \mathrm{Ber}(p)}[u(a(p), x)] \geq \E_{x \sim \mathrm{Ber}(p)}[u(a(p'), x)]$ for any $p' \neq p$. But since $a(p) = \arg\max_{a \in \cA}\E_{x \sim \Ber(p)}[u(a, x)]$, this inequality immediately follows.

Furthermore, note that in the definition of $\AgentReg_u$ in \eqref{eq:agent_reg}, $u(a_{\beta}, x_{t}) = -\sr(\beta, x_t)$ and $u(a_t, x_t) = -\sr(p_t, x_t)$. Making these substitutions, it is clear that $\AgentReg_u = \Reg_{\sr}$. 

In the other direction, if $\sr(p)$ is piecewise linear and concave (since $\sr$ is a proper scoring rule), then we can write $\sr(p) = \min_{i \in [K]}(r_ip + s_i)$ for some collection of $K$ linear functions $r_ip + s_i$. Consider the agent with $\cA = [K]$ and $u(a, x) = -(r_ix + s_i)$. Then $\wt{u}(p, x) = \max_{a \in \cA} -(r_ap + s_a) = - \min_{i \in [K]}(r_ip + s_i) = -\sr(p, x)$.
\end{proof}

In the remainder of this paper, we will take the perspective of a Forecaster who does not know the Agent's utility function $u$, yet nevertheless wants to guarantee low regret for the agent. That is, the Forecaster would like an arbitrary Agent to be (approximately) at least as well off by trusting the Forecaster's predictions than by simply assuming events occur at the base rate. Equivalently (by Lemma \ref{lem:agent_to_sr}), the Forecaster would like to have low regret with respect to all (bounded) scoring rules $\sr$. 

Two questions immediately arise: 1. Is it sufficient for the Forecaster to have low regret with respect to some specific scoring rule (e.g. the Brier scoring rule)? and 2. Is it sufficient for the Forecaster to be well calibrated? We address these in the next section.

\section{Calibration versus scoring rules}
\label{sec:calibration_vs_scoring}

\subsection{Low Brier scores can lead to high agent regret}

We begin by addressing the question of whether it is sufficient for the Agent to follow a Forecaster with low Brier score (specifically, low Brier score compared to the base rate forecaster). We show that the answer is \textit{no}; there are cases where an Agent can lose $\Omega(T)$ utility by following some specific Forecaster over the base rate forecaster, even if this Forecaster has an equal or better Brier score than the base rate forecaster.

\begin{theorem}\label{thm:low_brier}
There exists a sequence of $T$ binary events $\bx$, $T$ forecasts $\bp$, and a utility function $u$ where $\Reg(\bp, \bx) = -\Omega(T)$ but $\AgentReg_u(\bp, \bx) = \Omega(T)$. 
\end{theorem}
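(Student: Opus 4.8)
The plan is to exhibit an explicit instance in which the Forecaster beats the base‑rate forecaster on the Brier score precisely \emph{because} it is extremely accurate on a minority of rounds, while being just inaccurate enough on the majority of rounds to steer a threshold‑type Agent into the wrong action. Fix a constant $\beta \in (1/4, 1/2)$ and let the outcome sequence $\bx$ consist of $\beta T$ rounds with $x_t = 1$ and $(1-\beta)T$ rounds with $x_t = 0$ (assume $\beta T$ is an integer), so the base rate equals $\beta$. Take the Agent with two actions $\cA = \{0,1\}$ and the ``match the outcome'' utility $u(0,0) = u(1,1) = 1$, $u(0,1) = u(1,0) = 0$; then the best response is $a(p) = 1$ iff $p > 1/2$, and since $\beta < 1/2$ the base‑rate action is $a(\beta) = 0$. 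Finally, let the Forecaster predict $p_t = 1$ on every round with $x_t = 1$ and $p_t = 1/2 + \epsilon$ on every round with $x_t = 0$, where $\epsilon > 0$ is a small enough constant that $(1/2+\epsilon)^2 < \beta$ (possible since $1/4 < \beta$). Note that $a(1) = 1$ and $a(1/2+\epsilon) = 1$, so the Agent who follows the Forecaster plays action $1$ on \emph{every} round, with no tie‑breaking ambiguity.

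The two computations are then short. For the Agent, $\sum_t u(a(\beta), x_t) = \sum_t u(0, x_t) = (1-\beta)T$ (one point per round with $x_t = 0$), whereas $\sum_t u(a(p_t), x_t) = \sum_t u(1, x_t) = \beta T$, so $\AgentReg_u = (1-\beta)T - \beta T = (1-2\beta)T = \Omega(T)$ since $\beta < 1/2$. For the Brier score, the base‑rate forecaster incurs $\beta T(1-\beta)^2 + (1-\beta)T\beta^2 = \beta(1-\beta)T$, while the Forecaster incurs $\beta T \cdot 0 + (1-\beta)T(1/2+\epsilon)^2$, hence $\Reg = (1-\beta)T\big[(1/2+\epsilon)^2 - \beta\big] < 0$, and in fact $\Reg = -\Omega(T)$ by the choice of $\epsilon$. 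Concretely, $\beta = 3/8$ with $\epsilon \to 0$ gives $\AgentReg_u = T/4$ and $\Reg \to -5T/64$.

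The one conceptual point — and the reason the instance needs the asymmetry $\beta \neq 1/2$ — is the tension that any way of improving the Brier score makes the forecasts ``more informative'', which tends to \emph{help} a threshold agent rather than hurt it. We sidestep this by hurting the Agent only through its \emph{benchmark}: when $\beta$ is bounded away from the Agent's threshold $1/2$, the fixed base‑rate action is already nearly optimal on a $(1-\beta)$‑fraction of rounds, so merely nudging the Agent off of it on those rounds costs $\Omega(T)$ utility, even though on each individual round the Forecaster's prediction is ``not wrong'' in any absolute sense. Balancing the Brier gain on the $x_t=1$ rounds (which needs $\beta > 1/4$) against the Agent loss on the $x_t=0$ rounds (which needs $\beta < 1/2$) is exactly what pins down the admissible window $\beta \in (1/4, 1/2)$; checking that both inequalities hold there is the only real work, and it is immediate from the displayed formulas, so I do not anticipate a genuine obstacle.
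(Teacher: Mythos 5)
Your construction is correct, and it reaches the same conclusion by a genuinely different trick. The paper takes a balanced outcome sequence ($\beta = 1/2$) and makes the \emph{agent} asymmetric: a 9-to-1 wager whose best-response threshold sits at $p = 0.1$, far from $1/2$, so that the forecaster's 20\%-wrong predictions push the agent across that threshold on a $\Theta(T)$ fraction of rounds while the Brier score still improves. You instead keep the agent perfectly symmetric (the ``match-the-outcome'' utility, threshold at $1/2$) and make the \emph{base rate} asymmetric ($\beta \in (1/4, 1/2)$), choosing forecasts that all land just above $1/2$ so the agent always plays $1$ even though the base-rate action $0$ is better on the majority of rounds; the Brier improvement comes from nailing the $x_t = 1$ rounds exactly. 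The two constructions are in a sense dual -- the paper skews the utility while you skew the frequencies -- and both isolate the same phenomenon: Brier regret is a smooth average of $(p_t - x_t)^2$ while agent regret is controlled by a single threshold, and one can slide predictions across that threshold at negligible Brier cost. Your version is slightly more elementary (the symmetric $\{0,1\}$-valued utility requires no odds-choosing), while the paper's version has the pedagogical advantage of keeping $\beta = 1/2$, making the ``the forecaster is literally more accurate, yet the agent loses'' tension maximally stark. Your bookkeeping checks out: the window $(1/4, 1/2)$ is exactly what is forced by $(1/2+\epsilon)^2 < \beta$ (so Brier regret is negative) and $\beta < 1/2$ (so $a(\beta) = 0$ and the agent loses by always playing $1$), and both are strict with room to spare at $\beta = 3/8$.
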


\begin{proof}
Consider the sequence of $T$ binary events where for the first half of the $T$ events $x_t = 1$ and for the second half of the $T$ events $x_t = 0$. In both halves, the Forecaster will correctly predict $p_t = x_t$ for $80\%$ of the events, and incorrectly predict $p_t = 1-x_t$ for the remaining $20\%$ of the events. Note that the total Brier score of these forecasts is equal to $\sum_t \sr_{sq}(p_t, x_t) = 0.2T$ (the Forecaster incurs a penalty of $1$ every time they predict incorrectly), which is less than the Brier score of the base rate Forecaster (who always predicts $1/2$ and incurs a penalty of $1/4$ every round. It's therefore the case that $\Reg(\bp, \bx) = -0.05T = -\Omega(T)$.

To define $u$, we will offer the Agent two actions (which we can think of as wagers at 9-to-1 odds); either they can bet that $x_t = 0$, whereupon they receive a reward of $0.1$ if they are correct and a penalty of $0.9$ if they are incorrect, or bet that $x_t = 1$, whereupon they receive a reward of $0.9$ if they are correct and a penalty of $0.1$ if they are incorrect. Formally, we can write $u(a, x) = (-1)^{a}(0.1(1-x) - 0.9x) = (-1)^{a}(0.1-x)$, where the Agent's action $a$ is their prediction for $x$. See Figure \ref{fig:brier_example}. Note that the Agent will predict $a_t = 1$ in exactly the rounds where the forecast $p_t \geq 0.1$. 

An Agent following the base rate forecaster will always predict $a_t = 1$ (since $1/2 \geq 0.1$), and receive a total utility of $(0.9)(T/2) - (0.1)(T/2) = 0.4T$. On the other hand, an Agent following the forecasts described above will predict $a_t = p_t$ and receive a total utility of $(0.9)(0.4T) - (0.9)(0.1T) + (0.1)(0.4T) - (0.1)(0.1T) = 0.3T$. It follows that in this example, $\AgentReg_u = 0.1T = \Omega(T)$.
\end{proof}

 \begin{figure}[h]
\centering
\scalebox{0.75}{
\begin{tikzpicture}[scale=3]

\draw (0,0) -- (1,0);
\draw (0,-1) -- (0,1);
\draw[dashed] (0,0.1) -- (1,-0.9);
\draw[dashed] (0,-0.1) -- (1,0.9);
\draw[line width=1.5, blue] (0,.1) -- (.1,0) -- (1,.9);
\node at (1.2,.8) {$u(0,p)$};
\node at (1.2,-.8) {$u(1,p)$};
\node at (1.2,0.) {$p$};
\end{tikzpicture}}
\caption{Utility function in the proof of Theorem \ref{thm:low_brier}}
\label{fig:brier_example}
 \end{figure}
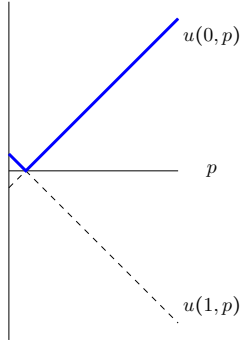

In fact, the property of Theorem \ref{thm:low_brier} extends to any scoring rule, not just the Brier scoring rule. That is, there is no single scoring rule $\sr$ where a Forecaster's forecasts outperforming the base rate forecasts on scores from $\sr$ implies that an arbitrary Agent should follow these forecasts over the base rate forecasts. 

\begin{theorem}\label{thm:any_sr_reg}
Given any bounded proper scoring rule $\sr$, there exists another proper scoring rule $\tilde\sr$ such that for any sufficiently large $T$, there exists a sequence of $T$ forecasts $\bp$ and binary events $\bx$ such that $\Reg_{\sr}(\bp, \bx) = o(T)$ but $\Reg_{\tilde\sr}(\bp, \bx) = \Omega(T)$.
\end{theorem}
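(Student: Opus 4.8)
The plan is to give a one-parameter generalization of the construction behind \Cref{thm:low_brier}. Fix the bounded proper scoring rule $\sr$ and let $f(p) = \sr(p)$ be its univariate form, which is concave by \Cref{lem:sr_conv}. Take outcomes $x_t = 1$ for $t \le T/2$ and $x_t = 0$ for $t > T/2$, so the base rate is $\beta = \tfrac12$, and let the forecaster use only the two predictions $0$ and $1$: in each half, predict $p_t = x_t$ on a $\gamma$-fraction of the rounds and $p_t = 1-x_t$ on the remaining $(1-\gamma)$-fraction, for a parameter $\gamma \in [\tfrac12, 1)$ to be chosen below (the ``$80\%$'' in \Cref{thm:low_brier} was $\gamma = 0.8$; rounding the fractions to integers perturbs every quantity below by only $O(1)$). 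By construction $n_0 = n_1 = T/2$, while the empirical frequency of $x_t = 1$ among rounds with $p_t = 1$ is exactly $\gamma$ and among those with $p_t = 0$ is exactly $1-\gamma$ --- i.e., the forecasts are deliberately overconfident.

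The first step is to evaluate $\Reg_\sr$ on this instance. Grouping rounds by the predicted value and expanding $\sr(p,x_t)$ over the four $(p_t,x_t)$ cells, a direct computation gives
\[
\Reg_\sr(\bp,\bx) \;=\; \tfrac{T}{2}\bigl(-A + (1-\gamma)B\bigr) + O(1),
\]
where $A = \sr(\tfrac12,0) + \sr(\tfrac12,1) - \sr(0,0) - \sr(1,1) = 2f(\tfrac12) - f(0) - f(1)$ and $B = \bigl(\sr(0,1)-\sr(0,0)\bigr) - \bigl(\sr(1,1)-\sr(1,0)\bigr)$. Concavity of $f$ gives $A \ge 0$, and adding the two properness inequalities $\sr(0,0) \le \sr(1,0)$ and $\sr(1,1) \le \sr(0,1)$ gives $B \ge 0$; moreover $B > 0$ forces $A > 0$, since $A = 0$ together with concavity would make $f$ affine on $[0,1]$ and then \eqref{eq:scoring_rule_via_derivative} would force $B = 0$. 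Accordingly set $\gamma^\ast := \max\{\tfrac12,\, 1 - A/B\}$ when $B > 0$, and $\gamma^\ast := \tfrac12$ when $B = 0$ (in which case $A = 0$ as well and $\Reg_\sr \equiv 0$ for every forecast). In all cases $\gamma^\ast \in [\tfrac12, 1)$ and $(1-\gamma^\ast)B \le A$, so taking $\gamma = \gamma^\ast$ yields $\Reg_\sr(\bp,\bx) = O(1) = o(T)$.

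Finally I exhibit $\tilde\sr$. Take $\tilde\sr = \tilde\sr_\theta$ to be the scoring rule whose univariate form is the tent $\tilde\sr_\theta(p) = -\lvert p-\theta\rvert$; this is concave, hence a genuine proper scoring rule by \Cref{lem:sr_conv} (equivalently, it is the negated utility of the two-action agent whose best response switches at $\theta$, via \Cref{lem:agent_to_sr}), and $\tilde\sr_\theta(p,x) = \theta - x$ for $p \ge \theta$ while $\tilde\sr_\theta(p,x) = x - \theta$ for $p < \theta$. Since our forecasts lie in $\{0,1\}$ and $\beta = \tfrac12$, for any $\theta \in (0,\tfrac12)$ the base-rate forecaster always plays the high action while our forecaster's action swings fully with $p_t$, and another short computation over the four cells gives
\[
\Reg_{\tilde\sr_\theta}(\bp,\bx) \;=\; T\bigl(1 - \theta - \gamma^\ast\bigr) + O(1).
\]
Because $\gamma^\ast < 1$ is a fixed constant depending only on $\sr$, choosing $\theta^\ast := (1-\gamma^\ast)/2 \in (0,\tfrac12)$ makes $1 - \theta^\ast - \gamma^\ast = (1-\gamma^\ast)/2$ a positive constant, so $\Reg_{\tilde\sr_{\theta^\ast}}(\bp,\bx) = \Omega(T)$, as required.

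The only nonroutine ingredients are the two regret computations --- which are pure bookkeeping over the four $(p_t,x_t)$ cells together with the base-rate terms --- and the implication $B > 0 \Rightarrow A > 0$. The latter is the crux: it is exactly what guarantees $\gamma^\ast < 1$, i.e., that even the best instance of this shape for $\sr$ must keep the forecasts strictly noisy, leaving room for a threshold agent (equivalently, a V-shaped scoring rule) to be harmed by their overconfidence. I would double-check the degenerate endpoint cases ($B = 0$, and the rounding when $T$ is odd) but expect no real difficulty there.
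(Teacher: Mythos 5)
Your construction is essentially the one in the paper's proof, with one genuine and pleasant difference: the paper exhibits an adversarial scoring rule $\tilde\sr$ by clipping the original, $\tilde\ell(p) = \min(\ell(p),\ \ell(0) + p(\ell(1)-\ell(0)) + \epsilon)$, which leaves the forecaster's score on $\{0,1\}$ unchanged but strictly lowers the base-rate benchmark; you instead realize $\tilde\sr$ as a V-shaped scoring rule $\tilde\sr_\theta(p) = -|p-\theta|$. Both parametrize the same overconfident $\{0,1\}$-forecaster and tune the error fraction so that $\Reg_\sr \approx 0$ (your $\gamma^\ast = 1 - A/B$ is the paper's $f = A/B$ in disguise, and your side computation $A \leq B/2$ is exactly what the paper hand-waves as ``by concavity''), so the core construction is shared. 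Your choice of a V-shaped $\tilde\sr$ is arguably cleaner since it foreshadows the role V-shaped rules play in the rest of the paper, whereas the paper's $\tilde\ell$ is ad hoc.

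One step you flag as ``the crux'' is not quite right, though: the implication $B > 0 \Rightarrow A > 0$ fails for proper (but not strictly proper) scoring rules with an affine univariate form whose off-diagonal boundary values are non-canonical. Concretely, take $\sr(p,x) = cx + d$ for $p \in (0,1)$, but set $\sr(0,1) = c+d+1$ and $\sr(1,0) = d+1$. One checks directly that this is a proper scoring rule with univariate form $\sr(p) = cp + d$ (so $A = 0$), yet $B = 2 > 0$. Your argument ``$A = 0$ makes $f$ affine and then \eqref{eq:scoring_rule_via_derivative} forces $B = 0$'' implicitly assumes the bivariate form is recovered from \eqref{eq:scoring_rule_via_derivative}, but at the boundary points $p \in \{0,1\}$ the quantities $\sr(0,1)$ and $\sr(1,0)$ are not pinned down by the univariate form. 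For such a scoring rule $\gamma^\ast = 1$, and perfect forecasts $p_t = x_t$ give \emph{negative} V-regret for every $\theta$, so your instance does not work. To be fair, the paper's proof has the identical gap --- its assertion that ``the case where $\ell(p)$ is linear is trivial since any Forecaster has $\Reg_\ell(\bp,\bx) = 0$'' is false for this same $\sr$ --- so both proofs tacitly assume the canonical boundary values from \eqref{eq:scoring_rule_via_derivative}. The fix is easy in either case: perturb the forecasts to $\{\epsilon, 1-\epsilon\}$ for any $\epsilon \in (0,\theta)$, which zeroes out $\Reg_\sr$ for such degenerate $\sr$ while leaving $\Reg_{\tilde\sr_\theta}$ unchanged; you may want to record this explicitly rather than deduce $A>0$ from $B>0$.
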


\begin{proof}
The case where $\ell(p)$ is linear is trivial since any Forecaster has $\Reg_\ell(\bp, \bx) = 0$.
If $\ell(p)$ is non-linear, then $\ell'(0) > \ell'(1)$. Let $x_t = 0$ for $T/2$ rounds and $x_t = 1$ for $T/2$ rounds in any given order. The benchmark is $T \ell(1/2)$. Consider a Forecaster that always predicts $p_t \in \{0,1\}$, predicting incorrectly $p_t = 1-x_t$ for a fraction $f \in [0, 1]$ of the rounds and correctly otherwise in a balanced way such that the number of correct predictions of $0$s and $1$s is the same. The score of this Forecaster is:
$$(1-f) T \left( \frac{\ell(0) + \ell(1)}{2} \right) + f T \left( \frac{\ell(0) + \ell'(0) + \ell(1) - \ell'(1)}{2} \right) =  T \left( \frac{\ell(0) + \ell(1)}{2} \right) + f T \left( \frac{\ell'(0) - \ell'(1)}{2} \right)$$
Now, choose $f \in (0,1)$ such that the above expression is equal to $T \ell(1/2)$. This is always possible since:
$$  \frac{\ell(0) + \ell(1)}{2} < \ell\left(\frac{1}{2}\right) <  \frac{\ell(0) + \ell'(0) + \ell(1) - \ell'(1)}{2} $$
by concavity and the fact that $\ell'(0) > \ell'(1)$.  Since the performance of this Forecast matches the performance of the base rate forecast, $\Reg_{\ell}(\bp, \bx) = 0$. Now, construct a scoring rule $\tilde{\ell}$ which leads to the same algorithm performance but has an improved benchmark. For example:
$$\tilde\ell(p) = \min(\ell(p), \ell(0) + p (\ell(1) - \ell(0)) + \epsilon)$$
for some very small $\epsilon$. The performance of the algorithm is still the same since $\tilde\ell(p,x) = \ell(p,x)$ is still the same for $p \in \{0,1\}$ but the base rate forecaster has now performance $T \tilde\ell(1/2) = T[\frac{1}{2}(\ell(0) + \ell(1)) + \epsilon]$. Hence 
$$\Reg_{\tilde\ell}(\bp, \bx) = T\left( \ell\left(\frac{1}{2}\right) - \frac{\ell(0) + \ell(1)}{2} - \epsilon \right)$$
which is linear for sufficiently small values of $\epsilon$. 
\end{proof}

\subsection{Calibration leads to sublinear agent regret}

Despite this, it is the case that agents cannot go wrong by trusting forecasters that are well-calibrated -- in particular, we show that the regret of any agent is bounded above by a small multiple of the calibration error of the Forecaster. Intuitively, this follows from the fact that in well-calibrated forecasts, if an Agent often sees a prediction of exactly $p$, the empirical probability of the event will be very close to $p$.

\begin{theorem}\label{thm:cal_suffices}
For any sequence of $T$ binary events $\bx$, predictions $\bp$, and bounded agent (with utility $u$), we have that $\AgentReg_u(\bp, \bx) \leq 4\Cal(\bp, \bx)$. In particular, if $\bp$ and $\bx$ satisfy $\Cal(\bp, \bx) = o(T)$, then for any bounded agent, $\AgentReg_u(\bp, \bx) = o(T)$. 
\end{theorem}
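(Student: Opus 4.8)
The plan is to reduce to scoring-rule regret and then show $\Reg_\sr(\bp,\bx)\le 4\Cal(\bp,\bx)$ for every bounded proper scoring rule $\sr$. Since $u$ takes values in $[-1,1]$, Lemma~\ref{lem:agent_to_sr} tells us that $\sr(p,x):=-\wt u(p,x)$ is a proper scoring rule with range in $[-1,1]$ and that $\AgentReg_u=\Reg_\sr$; so the theorem (including the ``in particular'' clause) follows once we bound $\Reg_\sr$ by $4\Cal$ for an arbitrary bounded proper $\sr$.

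To do this I would group the $T$ rounds by the value $p$ that the Forecaster predicted, and for each such $p$ let $\rho_p=m_p/n_p$ denote the empirical frequency of $x_t=1$ among those $n_p$ rounds. Two elementary facts drive the argument. First, expanding definitions gives the identity $\sr(p;q)=\sr(p)+(q-p)\,d_p$ where $d_p:=\sr(p,1)-\sr(p,0)$; in particular the total contribution of the $p$-rounds to $\Reg_\sr$ equals $n_p\big(\sr(p;\rho_p)-\sr(\beta;\rho_p)\big)$, since $\sum_{t:p_t=p}\sr(p,x_t)=n_p\,\sr(p;\rho_p)$ and likewise for $\beta$. Second, propriety says exactly that for every $r$ the affine map $q\mapsto \sr(r)+(q-r)d_r$ lies above the (concave) univariate form $\sr(\cdot)$, i.e.\ $d_r$ is a supergradient of $\sr(\cdot)$ at $r$; and boundedness of $\sr$ gives $|d_r|=|\sr(r,1)-\sr(r,0)|\le 2$ for all $r\in[0,1]$, which is essentially Corollary~\ref{cor:derivative_bound}.

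Putting these together: using propriety at $r=\rho_p$ in the form $\sr(\beta;\rho_p)\ge\sr(\rho_p;\rho_p)=\sr(\rho_p)$ and then $\sr(p)\le\sr(\rho_p)+d_{\rho_p}(p-\rho_p)$, the $p$-contribution is at most
\[
n_p\big(\sr(p;\rho_p)-\sr(\rho_p)\big)=n_p\big(\sr(p)-\sr(\rho_p)+(\rho_p-p)d_p\big)\le n_p\,(p-\rho_p)\big(d_{\rho_p}-d_p\big)\le 4\,n_p\,|p-\rho_p|=4\,|n_p p-m_p|.
\]
Summing over the (finitely many) predicted values $p$ yields $\Reg_\sr\le 4\sum_p|n_p p-m_p|=4\Cal$, as desired.

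I do not expect a serious obstacle here; the only things to be careful about are (a) invoking the identity $\sr(p;q)=\sr(p)+(q-p)d_p$ and the supergradient property of $d_r$ cleanly --- both are immediate from the definitions and, importantly, hold at all of $[0,1]$, so no differentiability or interior-point hypotheses are needed --- and (b) restricting the outer sum to values actually predicted so that $\rho_p$ is defined. As an alternative that avoids grouping, one can telescope: write $\sr(p_t,x_t)=\sr(p_t)+(x_t-p_t)d_{p_t}$ and $\sr(\beta,x_t)=\sr(\beta)+(x_t-\beta)d_\beta$, use $\sum_t(x_t-\beta)=0$ to cancel the $d_\beta$ term, bound $\big|\sum_t(x_t-p_t)d_{p_t}\big|\le 2\Cal$ termwise, and control $\sum_p n_p\sr(p)-T\sr(\beta)$ by Jensen (concavity) followed by the supergradient inequality at $\beta$ --- again losing $2\Cal$, for a total of $4\Cal$.
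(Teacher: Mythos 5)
Your proposal is correct and follows essentially the same route as the paper: reduce to scoring-rule regret via Lemma~\ref{lem:agent_to_sr}, group rounds by predicted value to rewrite $\Reg_\sr$ as $\sum_p n_p\bigl(\sr(p;\rho_p)-\sr(\beta;\rho_p)\bigr)$, and then bound each term by $4\,n_p|p-\rho_p|$ using propriety together with the fact that $|\sr(r,1)-\sr(r,0)|\le 2$. Your phrasing in terms of $d_r=\sr(r,1)-\sr(r,0)$ as an explicit supergradient is a clean way to express what the paper does with $\sr'$ (and correctly sidesteps the cosmetic differentiability issue); the decomposition $(p-\rho_p)(d_{\rho_p}-d_p)$ is the same two supergradient applications, just combined in one line.
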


\begin{proof}
Let $\sr(\bp, \bx)$ be the scoring rule corresponding to this agent, so (by Lemma \ref{lem:agent_to_sr}) we wish to show that $\Reg_{\ell}(\bp, \bx) = o(T)$. We will need the following fact about bounded scoring rules $\ell$. For any $p, \hat{p} \in [0, 1]$, the following inequality holds:

\begin{equation}\label{eq:sr_conv3}
\sr(\hat{p}) \leq \sr(p;\hat{p}) \leq \sr(\hat{p}) + 4\left|p - \hat{p}\right|
\end{equation}

The first inequality in \eqref{eq:sr_conv3} follows from the fact that the scoring rule is proper, i.e., for a fixed $\hat{p}$ $\sr(p;\hat{p})$ is minimized when $p = \hat{p}$. To prove the second inequality, first write $\sr(p; \hat{p})$ in the form $\sr(p) + (\hat{p} - p)\sr'(p)$ (as in the proof of Lemma \ref{lem:sr_conv}), and then apply the fact that $|\sr'(p)| \leq 2$ (Corollary \ref{cor:derivative_bound}) to show that $\sr(p; \hat{p}) \leq \sr(p) + 2|\hat{p} - p|$. Finally, from concavity of $\sr$ (and Corollary \ref{cor:derivative_bound} again), we have that $\sr(p) \leq \sr(\hat{p}) + (p - \hat{p})\sr'(\hat{p}) \leq \sr(\hat{p}) + 2|p - \hat{p}|$. Combining these two inequalities we obtain \eqref{eq:sr_conv3}.

Now, note that

\begin{eqnarray*}
    \Reg_{\sr}(\bp, \bx) &=& \sum_{t=1}^{T} \sr(p_t, x_t) - \sum_{t=1}^{T} \sr(\beta, x_t) \\
    &=& \sum_{p \in [0, 1]} \sum_{t; p_t = p}\left(\sr(p, x_t) - \sr(\beta, x_t)\right) \\
    &=& \sum_{p \in [0, 1]} \left((n_p - m_p)(\sr(p,0) - \sr(\beta,0)) + m_p(\sr(p,1) - \sr(\beta,1))\right) \\
    &=& \sum_{p \in [0, 1]} n_p\left(\sr\left(p;\frac{m_p}{n_p}\right) - \sr\left(\beta; \frac{m_p}{n_p}\right)\right) \\
    &\leq& 4 \sum_{p \in [0, 1]} n_p \left| p - \frac{m_p}{n_p}\right| = 4 \Cal(\bp, \bx).
\end{eqnarray*}

\noindent
Here the last inequality follows from applying  \eqref{eq:sr_conv3}.

\end{proof}

\section{U-calibration}
\label{sec:u_calibration}

In the previous section, we have shown that if our goal is to simultaneously achieve sublinear agent regret for all possible agents (equivalently, achieve sublinear scoring rule regret for all possible scoring rules), it suffices that we employ a calibrated forecasting procedure. This begs the question: is a calibrated forecasting procedure \textit{necessary} for obtaining sublinear agent regret for all possible agents?

In particular, can we obtain regret better than what is possible under a calibrated forecast? Using an algorithm (such as \cite{foster1998asymptotic} or \cite{blum2007external}) we can obtain $O(T^{2/3})$ calibration error and hence $O(T^{2/3})$ regret simultaneously for all possible agents. At the same time, \cite{qiao2021stronger} recently showed a lower bound of $\Omega(T^{0.528})$ for calibrated forecasts.

In this section, we show that calibration is not necessary to obtain low regret for all possible agents. In fact, it is possible to bypass the lower bound of \cite{qiao2021stronger}  and obtain an algorithm with regret $O(T^{1/2})$ for all possible agents, asymptotically matching the optimal guarantee obtainable if we were to know the utility function in advance.

To get some intuition for why calibration may not be necessary, note that the calibration error function $\Cal(\bp, \bx)$ is extremely sensitive to small perturbations in the predictions $\bp$, whereas (for any bounded agent) $\AgentReg(\bp, \bx)$ is not. We formally show this in the following lemma.

\begin{lemma}\label{lem:cal_not_necessary}
There exists a sequence of $T$ predictions $\bp$ and binary events $\bx$ where $\Cal(\bp, \bx) = \Omega(T)$ but for any choice of bounded scoring rule $\sr$, $\Reg_{\sr}(\bp, \bx) = o(T)$.
\end{lemma}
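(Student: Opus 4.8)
The idea is to take a forecast sequence that is deliberately "spread out" in prediction space so that the calibration error is artificially inflated by the $L_1$ penalty, while the actual statistical content of the forecasts (the empirical frequency of $1$'s conditioned on nearby predictions) is essentially correct. The cleanest construction: let the outcome sequence be balanced ($x_t = 1$ for $T/2$ rounds, $x_t = 0$ for $T/2$ rounds), and have the Forecaster always predict a value very close to $\beta = 1/2$, but perturbed so that the prediction is \emph{never} exactly $1/2$ and moreover each distinct value $p$ is used only a handful of times (say once, or $O(1)$ times). Concretely, enumerate the rounds and on round $t$ predict $p_t = 1/2 + \delta_t$ where the $\delta_t$ are distinct tiny numbers with $|\delta_t| \le \eps$ for some small constant $\eps$, chosen so that the $p_t$ values partition into singletons (or near-singletons).

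\textbf{Calibration is large.} For this sequence, $n_p = 1$ for each predicted value $p$, and $m_p \in \{0,1\}$. Hence each term $|p n_p - m_p| = |p - x|$ is at least $1/2 - \eps$ (since $p \approx 1/2$ and $x \in \{0,1\}$), so $\Cal(\bp,\bx) \ge (1/2 - \eps) T = \Omega(T)$. This uses exactly the sensitivity phenomenon noted before the lemma: refining predictions into singleton buckets destroys calibration.

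\textbf{Scoring-rule regret is small.} Fix any bounded proper scoring rule $\sr$. Using the decomposition of $\Reg_{\sr}$ exactly as in the proof of \Cref{thm:cal_suffices}, for each round $t$ we compare $\sr(p_t, x_t)$ to $\sr(\beta, x_t)$. Write $\sr(p_t,x_t) - \sr(\beta,x_t)$; since $p_t = \beta + \delta_t$ and $\sr(\cdot,x)$ is, for fixed $x$, of the form $\sr(p) + (x-p)\sr'(p)$ — in particular Lipschitz in its first argument with constant controlled by the bound $|\sr'| \le 2$ from \Cref{cor:derivative_bound} together with boundedness of $\sr$ itself — we get $|\sr(p_t,x_t) - \sr(\beta, x_t)| \le C|\delta_t| \le C\eps'$ for a universal constant $C$, where $\eps'$ is the uniform bound on $|\delta_t|$. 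Summing over $t$ gives $|\Reg_{\sr}(\bp,\bx)| \le C \eps' T$. To make this genuinely $o(T)$ rather than just $O(\eps T)$, I would let the perturbations shrink with $T$: choose $|\delta_t| \le 1/T$ (still leaving room to keep all $p_t$ distinct, since there are only $T$ of them and they can be spaced by, say, $T^{-2}$ within an interval of width $1/T$ around $1/2$). Then $|\Reg_{\sr}(\bp,\bx)| \le C$, a constant, which is certainly $o(T)$, and it holds simultaneously for every bounded $\sr$ with the same constant $C$. Meanwhile the calibration lower bound $(1/2 - 1/T)T = \Omega(T)$ is unaffected.

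\textbf{Main obstacle.} There is no real obstacle here — the lemma is a separation witnessed by an explicit example, and the only care needed is (i) to ensure the perturbed predictions land in distinct calibration buckets so that the $n_p = 1$ reasoning applies, and (ii) to quote the right Lipschitz-type estimate for bounded scoring rules. Both are handled by the spacing choice and by \Cref{cor:derivative_bound} / \Cref{lem:sr_conv} respectively; one should just double-check that the endpoint bound $|\sr'(p)| \le 2$ together with $|\sr(p)| \le 1$ indeed yields a uniform Lipschitz constant for $\sr(\cdot, x)$ on all of $[0,1]$, which it does via $\sr(p,x) = \sr(p) + (x-p)\sr'(p)$ and $|\sr''|$-free reasoning since we only compare to the single point $\beta$. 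If one instead wants the perturbations to be a fixed constant $\eps$ (to emphasize robustness), then $\Reg_\sr = O(\eps T)$, which is not $o(T)$; so the shrinking-perturbation version is the one that proves the stated lemma.
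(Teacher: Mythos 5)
Your Lipschitz estimate is the weak link, and in fact it is false: for a bounded proper scoring rule $\sr$, the bivariate form $p \mapsto \sr(p,x)$ need not be Lipschitz — it need not even be continuous. Take the V-shaped rule $\sr_v$ with $v = 1/2$. By the formulas $\sr(p,0) = \sr(p) - p\sr'(p)$ and $\sr(p,1) = \sr(p) + (1-p)\sr'(p)$, one gets $\sr_{1/2}(p,0) = \tfrac12\sgn(p - \tfrac12)$ and $\sr_{1/2}(p,1) = \tfrac12\sgn(\tfrac12 - p)$, each with a jump of size $1$ at $p = 1/2$. This rule is bounded in $[-1/2, 1/2] \subset [-1,1]$, so it is a legitimate member of $\srset$. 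The bound $|\sr'| \le 2$ from Corollary~\ref{cor:derivative_bound} controls the slope of the \emph{univariate} form $\sr(p)$, not the $p$-variation of the \emph{bivariate} form $\sr(p,x)$; the latter is governed by $\sr''$, which is unbounded (indeed a Dirac mass) exactly for the V-shaped rules that Theorem~\ref{thm:vcal_approx} shows are the extremal ones.

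Because of this, your construction fails unless you control the \emph{signs} of the perturbations, and you leave those unspecified. If $p_t = 1/2 + \delta_t$ with $\delta_t \ne 0$, then for $\sr = \sr_{1/2}$ and $\beta = 1/2$ the per-round regret is $\tfrac12\sgn(\delta_t)$ when $x_t = 0$ and $-\tfrac12\sgn(\delta_t)$ when $x_t = 1$ — depending only on $\sgn(\delta_t)$, not on $|\delta_t|$. Choosing the signs adversarially (e.g.\ $\delta_t > 0$ when $x_t = 0$ and $\delta_t < 0$ when $x_t = 1$) yields $\Reg_{\sr_{1/2}}(\bp,\bx) = T/2$, which is $\Omega(T)$ no matter how small the $|\delta_t|$ are. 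Shrinking $|\delta_t|$ to $1/T$ does not help. The missing idea — which is exactly what the paper's proof supplies — is to perturb each $p_t$ \emph{toward} $x_t$ (decrease $p_t$ when $x_t = 0$, increase it when $x_t = 1$) and then invoke the monotonicity of $\sr(\cdot,0)$ (non-decreasing) and $\sr(\cdot,1)$ (non-increasing) rather than any Lipschitz property. That gives $\Reg_\sr(\bp',\bx) \le \Reg_\sr(\bp,\bx) = 0$ for every bounded $\sr$ with a single, termwise argument, and it works with perturbations of constant size; no limit as $T \to \infty$ is needed.
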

\begin{proof}
Begin by letting $\bx$ be a sequence of binary events with $T/2$ zeros and $T/2$ ones (in any order), and let $\bp$ be the constant base rate prediction of $p_t = 1/2$. This prediction has calibration error $\Cal(\bp, \bx) = 0$, so by Theorem \ref{thm:cal_suffices}, $\Reg_{\ell}(\bp, \bx) = o(T)$ for any bounded scoring rule $\ell$. We now define a new (perturbed) sequence of predictions $\bp'$ as follows: for each $t$ where $x_t = 0$, set $p'_t = p_t - z_t$, and for each $t$ where $x_t = 1$, set $p'_t = p_t + z_t$, where the $z_t$ are all distinct real numbers in the interval $[0, 0.001]$. Since each $p'_t$ moves $p_t$ closer to $x_t$, for each scoring rule $\sr$, $\Reg_{\sr}(\bp', \bx) \leq \Reg_{\sr}(\bp, \bx) = o(T)$, since by equation \eqref{eq:scoring_rule_via_derivative} the functions $\ell(p,0)$ and $\ell(p,1)$ are monotone\footnote{To see that $\ell(p,0)$ is monotone, observe that for $p \leq q$ we have $\sr(p,0) = \sr(p) - p \sr'(p) \leq \sr(q) - q \sr'(p)$ by concavity of $\sr$. Also by concavity $\sr'(p) \geq \sr'(q)$ so: $\sr(p,0) \leq \sr(q) - q \sr'(q) = \sr(q,0)$. The argument for $\sr(p,1)$ is similar.
}. On the other hand, since the $z_t$ are all distinct, each probability is predicted exactly once and $\Cal(\bp', \bx) \geq 0.499T = \Omega(T)$.
\end{proof}

Given the result of Lemma \ref{lem:cal_not_necessary}, it is natural to ask whether there is some version of calibration which captures exactly this notion of producing good forecasts simultaneously for all possible agents. The goal of the remainder of this section is to define such a notion (which we call \textit{U-calibration}) and establish some of its basic properties -- how to compute this quantity, how it compares to other versions of calibration, how to design algorithms to minimize this quantity, etc.

\subsection{Defining U-calibration and V-calibration}

If our goal is to simultaneously minimize the regret with respect to every single scoring rule, it makes sense to measure the regret of the worst scoring rule. Define the set $\srset$ to be the set of all bounded proper scoring rules $\sr$:
$$\srset = \{ \ell:[0,1] \times \{0,1\} \rightarrow [-1,1]; \ell \text{ is a proper scoring rule}\} $$
We define the \textit{U-calibration error} $\MaxAgentReg$ to be the maximum regret of any bounded agent, or equivalently,

\begin{equation}\label{eq:max_agent_reg}
\MaxAgentReg(\bp, \bx) = \sup_{\sr \in \srset} \Reg_{\sr}(\bp, \bx). 
\end{equation}

The main downside of this definition is that this requires an optimization over all scoring rules $\sr \in \srset$, which is not a priori obvious how to perform\footnote{Although it is possible to perform this optimization efficiently -- see Theorem \ref{thm:agent_cal_alg} in \Cref{sec:multiclass_u_cal}, where we describe how to do this for the more general case of $K$ outcomes.}. We will introduce a relaxation of U-calibration that we call \emph{V-calibration}, which will be defined similarly to \eqref{eq:max_agent_reg}, except that we will take the maximum over a much smaller (but still representative) collection of scoring rules we call V-shaped scoring rules. The \textit{V-shaped scoring rule $\sr_v$ centered at $v \in [0, 1]$} is defined to be the scoring rule with univariate form $\sr_{v}(p) = -|p - v|$. We then define the \textit{V-calibration error} of a sequence of predictions to be

\begin{equation}\label{eq:vcal}
\VCal(\bp, \bx) = \sup_{v \in [0, 1]} \Reg_{\sr_{v}}(\bp, \bx). 
\end{equation}

 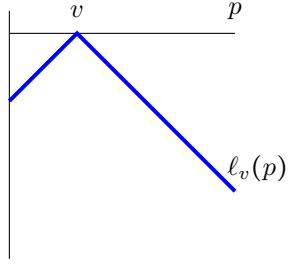
\begin{figure}[h]
\centering
\begin{tikzpicture}[scale=3]

\draw (0,0) -- (1,0);
\draw (0,-1) -- (0,.1);
\draw[line width=1.5, blue] (0,-.3) -- (.3,0) -- (1,-.7);
\node at (.3,.1) {$v$};
\node at (1.1,-.6) {$\ell_v(p)$};
\node at (1,.1) {$p$};
\end{tikzpicture}
\caption{Example of a $V$-shaped scoring rule}
\label{fig:ell_v}
 \end{figure}

One reason to focus on V-shaped scoring rules is that (as we shall soon show) they form a natural and efficient basis for the set of all bounded scoring rules. One consequence of this is that our definition of V-calibration error is a constant factor approximation to the agent calibration error. 

\begin{theorem}\label{thm:vcal_approx}
For any sequence of $T$ predictions $\bp$ and binary events $\bx$, we have that

$$\frac{1}{2} \cdot \MaxAgentReg(\bp, \bx) \leq \VCal(\bp, \bx) \leq \MaxAgentReg(\bp, \bx).$$
\end{theorem}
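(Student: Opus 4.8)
The plan is to prove the two inequalities separately; the left one is the substantial one and rests on the fact that V-shaped scoring rules span all bounded proper scoring rules.

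The upper bound $\VCal(\bp,\bx)\le\MaxAgentReg(\bp,\bx)$ is essentially a definitional observation: each V-shaped rule $\sr_v$ is itself a \emph{bounded} proper scoring rule. Its univariate form $\sr_v(p)=-|p-v|$ is concave, so by \Cref{lem:sr_conv} it is the univariate form of a proper scoring rule, and computing the bivariate form away from the kink via \eqref{eq:scoring_rule_via_derivative} gives $\sr_v(p,x)\in\{-v,\,1-v,\,v,\,v-1\}\subseteq[-1,1]$ (and at $p=v$ any subgradient selection stays in $[-1,1]$), so $\sr_v\in\srset$. Hence $\{\sr_v : v\in[0,1]\}\subseteq\srset$ and the supremum defining $\VCal$ is taken over a sub-collection of the one defining $\MaxAgentReg$.

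For the lower bound $\MaxAgentReg(\bp,\bx)\le 2\,\VCal(\bp,\bx)$, I would fix an arbitrary $\sr\in\srset$ and show $\Reg_{\sr}(\bp,\bx)\le 2\,\VCal(\bp,\bx)$; taking the supremum over $\sr$ then finishes. The key structural fact is that every bounded proper scoring rule is an affine rule plus a nonnegative mixture of V-shaped rules of total weight at most $2$. Writing $f=\sr(\cdot)$ for the concave univariate form, the standard integral representation of concave functions gives $f(p)=a+bp-\int_0^1|p-v|\,d\mu(v)$, where $\mu=-\tfrac12 f''$ is a nonnegative Borel measure on $(0,1)$ (using that $|p-v|$ has distributional second derivative $2\delta_v$), i.e.\ $f(p)=a+bp+\int_0^1\sr_v(p)\,d\mu(v)$. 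The same identity then holds at the level of bivariate forms: $\sr(p,x)=a+bx+\int_0^1\sr_v(p,x)\,d\mu(v)$ is readily checked to be a proper scoring rule with univariate form $f$ (its expected-score function is a nonnegative combination of the proper $\sr_v(\,\cdot\,;\,\cdot\,)$ plus an affine term). To control the weight, note $\mu([0,1])=\tfrac12\bigl(f'(0^+)-f'(1^-)\bigr)$; by \Cref{cor:derivative_bound} together with its mirror image (since $\sr'(p)=\sr(p,1)-\sr(p,0)\in[-2,2]$ is a supergradient of $f$ at $p$, one gets $f'(0^+)\le 2$ and $f'(1^-)\ge -2$), this is at most $2$.

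With this decomposition, the conclusion follows from linearity of the regret functional in the scoring rule together with the fact (already used in the proof of \Cref{thm:any_sr_reg}) that affine scoring rules have bivariate form $a+bx$ independent of $p$ and hence zero regret: $\Reg_{\sr}(\bp,\bx)=\int_0^1\Reg_{\sr_v}(\bp,\bx)\,d\mu(v)\le\mu([0,1])\cdot\sup_{v}\Reg_{\sr_v}(\bp,\bx)\le 2\,\VCal(\bp,\bx)$. I expect the one delicate point to be the interchange of $\Reg$ with the mixture when $\sr$ fails to be differentiable at one of the predicted values $p_t$ (or at $\beta$): there $\mu$ has an atom and neither the bivariate form of $\sr$ nor that of $\sr_v$ at its own center is uniquely pinned down by the univariate form, so one must match the subgradient selections on this finite set of points (or, alternatively, approximate $\sr$ by smooth proper rules that agree with it near $\{p_t\}_t\cup\{\beta\}$, for which $\Reg$ is unchanged, and apply the smooth case). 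The remaining ingredients — the integral representation of concave functions and the derivative bound — are routine.
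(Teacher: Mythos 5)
Your proof is correct and follows essentially the same approach as the paper: decompose any bounded proper scoring rule into an affine term plus a nonnegative combination of V-shaped rules with total weight at most $2$ (the $2$ coming from the bound $|\sr'(p)|\le 2$), then apply linearity of $\Reg_\sr$ in $\sr$ and the vanishing of regret for affine rules. The only presentational difference is that the paper sidesteps the measure-theoretic formulation by first reducing (via a stated-but-not-detailed density argument) to piecewise-linear univariate forms and then writing a finite sum $\sr(p)=C_0+C_1p+\sum_i\lambda_i\sr_{v_i}(p)$ with $\sum_i\lambda_i\le 2$, whereas you work directly with the integral representation $f(p)=a+bp-\int_0^1|p-v|\,d\mu(v)$; your version is if anything a bit more careful, since you explicitly flag the atom/subgradient-selection issue at $\{p_t\}\cup\{\beta\}$ that the paper's density reduction quietly absorbs.
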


\begin{proof}
Note that since each V-shaped scoring rule belongs to $\srset$, the right inequality immediately follows. We therefore only need to prove the left side of the above inequality. At a high level, we will show that it is in fact possible to decompose any bounded scoring rule $\sr$ into a positive linear combination of V-shaped scoring rules; i.e., up to an additive linear term (which does not affect regret), one can write $\sr(p) = \int_{0}^{1}\mu(v)\sr_{v}(p)dv$ for some measure $\mu$ over $[0, 1]$ with weight at most $2$. The approximation guarantee then follows from the fact that $\Reg_{\sr}$ is a linear functional in $\sr$. 

Fix a choice of $\bp$ and $\bx$. We begin by rewriting $\Reg_{\sr}(\bp, \bx)$ for a generic $\sr \in \srset$ in terms of the univariate form of the scoring rule (by applying the identity $\sr(p, x) = \sr(p) + (x-p)\sr'(p)$). We have

\begin{eqnarray*}
    \Reg_{\sr}(\bp, \bx) &=& \sum_{t=1}^{T} \sr(p_t, x_t) - \sum_{t=1}^{T} \sr(\beta, x_t) \\
    &=& \sum_{t=1}^{T} \left(\sr(p_t) + (x_t - p_t)\sr'(p_t) - (\sr(\beta) + (x_t - \beta)\sr'(\beta))\right) \\
    &=& \left(\sum_{t=1}^{T} \sr(p_t) + (x_t - p_t)\sr'(p_t)\right) - T\sr(\beta).
\end{eqnarray*}

We now make the following observations about $\Reg_{\sr}(\bp, \bx)$:

\begin{itemize}
    \item First (and most importantly), $\Reg_{\sr}(\bp, \bx)$ is linear in (the univariate form of) $\sr$. In particular, for any $\sr$ and $\tilde\sr$ in $\srset$, we have that $\Reg_{\sr + \tilde\sr}(\bp, \bx) = \Reg_{\sr}(\bp, \bx) + \Reg_{\tilde\sr}(\bp, \bx)$, and $\Reg_{\lambda\sr}(\bp, \bx) = \lambda \Reg_{\sr}(\bp, \bx)$ for any $\lambda \geq 0$. 
    \item Secondly, $\Reg_{\sr}(\bp, \bx)$ is invariant upon the addition of constant or linear functions to (the univariate form) of $\sr$. Specifically, for any constants $C_0, C_1$, if we construct the scoring rule $\tilde\sr(p) = \sr(p) + C_1p + C_0$, then $\Reg_{\tilde\sr}(p) = \Reg_{\sr}(p)$.
\end{itemize}

Now, since the collection of scoring rules with piece-wise linear univariate forms is dense in $\srset$, assume without loss of generality that $\sr(p)$ is a piece-wise linear function in $p$. If $\sr(p)$ has $k$ breakpoints at values $v_1, v_2, \dots, v_k$, we claim we can write

\begin{equation}\label{eq:sr_decomp}
\sr(p) = C_1p + C_0 + \sum_{i=1}^{k}\lambda_{i}\sr_{v_i}(p),
\end{equation}

\noindent
for some constants $C_0, C_1 \in \Rset$ and nonnegative reals $\lambda_i$ such that $\sum_{i} \lambda_i \leq 2$. To see this, first recall that any piece-wise linear function $\sr(p)$ with breakpoints at $v_i$ can be written in the form

\begin{equation}\label{eq:sr_deriv1}
\sr(p) = \sr(0) + p\sr'(0) + \sum_{i=1}^{k}(\sr'_{+}(v_i) - \sr'_{-}(v_i))\cdot\ramp(p - v_i),
\end{equation}

\noindent
where we define $\sr'_{+}(v) = \lim_{p \rightarrow v^{+}} \sr'(p)$, $\sr'_{-}(v) = \lim_{p \rightarrow v^{-}} \sr'(p)$, and $\ramp(p) = \max(p, 0)$ to be the piece-wise linear ``ramp'' function. Since we can equivalently write $\ramp(p) = (|p| + p)/2$, we can rewrite \eqref{eq:sr_deriv1} in the form (for some constants $C_0$ and $C_1$)

\begin{equation}\label{eq:sr_deriv2}
\sr(p) = C_0 + C_1p + \frac{1}{2}\sum_{i=1}^{k}(\sr'_{-}(v_i) - \sr'_{+}(v_i))\cdot (-|p - v_i|).
\end{equation}

Furthermore, since $\sr(p)$ is concave, it will always be the case that $\sr'_{-}(v) - \sr'_{+}(v)$ is non-negative. Let $\lambda_i = \frac{1}{2}(\sr'_{-}(v_i) - \sr'_{+}(v_i))$. Note that $\sum_i \lambda_i = \frac{1}{2}(\sr'_{-}(v_k) - \sr'_{+}(v_1)) = \frac{1}{2}(\sr'(0) - \sr'(1))$. Since $|\sr'(p)| \leq 2$ for any bounded scoring rule $\sr$, it follows that $\sum_{i} \lambda_i \leq 2$. Since $\sr_{v_i}(p) = -|p - v_i|$, equation \eqref{eq:sr_decomp} then immediately follows from \eqref{eq:sr_deriv2}. 

Now, as a consequence of our two earlier observations about $\Reg_{\sr}(\bp, \bx)$, we have that

\begin{equation}
\Reg_{\sr}(\bp, \bx) = \sum_{i=1}^{k}\lambda_i\Reg_{\sr_{v_i}}(\bp, \bx).
\end{equation}

\noindent
It follows that 

\begin{equation}\label{eq:reg_reduc}
\Reg_{\sr}(\bp, \bx) \leq \left(\sum_{i}\lambda_i\right)\cdot\sup_{v \in [0, 1]} \Reg_{\sr_{v}}(\bp, \bx) \leq 2\cdot\VCal(\bp, \bx).
\end{equation}

\noindent
Since $\MaxAgentReg(\bp, \bx) = \Reg_{\sr}(\bp, \bx)$ for some $\sr \in \srset$, we have proved the original inequality.
\end{proof}

\begin{remark}
The proof of Theorem \ref{thm:vcal_approx} extends to any choice of our set $\srset$ of bounded scoring rules (possibly with different constants), as long as i. (some constant multiple of) each of the V-shaped scoring rules belongs to $\srset$ and ii. the derivatives $\sr'(p)$ for any scoring rule $\sr \in \srset$ are absolutely bounded. In fact, if we define $\srset$ to be the collection of scoring rules with the property that $|\sr'(p)|\leq 1$ for all $p \in [0,1]$, then the above proof actually gives an equality between $\MaxAgentReg$ and $\VCal$.
\end{remark}

\begin{remark}
It is instructive to compare Theorem \ref{thm:vcal_approx} above to the results of \citep{hartline2020optimization}. \cite{hartline2020optimization} study (among other things) the problem of finding a (bounded) proper scoring rule for mean estimation that maximizes a specific linear functional (e.g. $\int_{0}^{1}f(p)\sr(p)dp$ for some non-negative valued function $f$). They similarly show for their problem that the optimal scoring rule will always be V-shaped (for a slightly more general definition of V-shaped, where the two sides of the V can have different slopes). Our problem does not fall directly into their framework -- optimizing $\Reg_{\sr}(\bp, \bx)$ requires working with (not necessarily non-negative) linear functionals in both $\sr(p)$ and $\sr'(p)$ instead of just $\sr(p)$ -- but the two settings are similar, and it is possible to reproduce their result by following similar logic as in the above proof. 
\end{remark}

We next examine definition \eqref{eq:vcal} of $\VCal(\bp, \bx)$ in more detail, with the goal of writing it explicitly in terms of $\bp$ and $\bx$. More specifically, for any $v \in [0, 1]$ define $\VReg_{v}(\bp, \bx)$ to be shorthand for $\Reg_{\sr_{v}}(\bp, \bx)$. We have the following explicit formula for $\VReg_{v}(\bp, \bx)$.

\begin{theorem}\label{thm:vreg_char}
Fix $\bp$ and $\bx$. Let $\cP_0$ be the empirical distribution of the $p_{t}$ over rounds $t$ where $x_{t} = 0$; likewise, let $\cP_1$ be the empirical distribution of the $p_{t}$ where $x_{t} = 1$. Then, if $v \leq \beta$, we have that

\begin{equation}\label{eq:vreg_case2}
\VReg_{v}(\bp, \bx) = T \cdot \left (2\beta(1-v) \Pr_{p\sim\cP_{1}}[p < v] - 2(1-\beta)v \Pr_{p\sim\cP_{0}}[p < v] \right) 
\end{equation}

\noindent
and if $v \geq \beta$, we have that

\begin{equation}\label{eq:vreg_case1}
\VReg_{v}(\bp, \bx) = T \cdot \left (2(1-\beta)v \Pr_{p\sim\cP_{0}}[p > v] - 2\beta(1-v) \Pr_{p\sim\cP_{1}}[p > v]\right).
\end{equation}

\end{theorem}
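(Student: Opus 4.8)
The plan is to start from the identity derived inside the proof of Theorem~\ref{thm:vcal_approx}: writing $\sr(p,x)=\sr(p)+(x-p)\sr'(p)$ and using $\sum_t(x_t-\beta)=0$, one has for any scoring rule $\sr$ (with one-sided derivatives at kinks)
\[
\Reg_{\sr}(\bp,\bx)\;=\;\sum_{t=1}^{T}\bigl(\sr(p_t)+(x_t-p_t)\,\sr'(p_t)\bigr)\;-\;T\,\sr(\beta).
\]
I would specialize this to $\sr=\sr_v$, for which $\sr_v(p)=-|p-v|$, $\sr_v'(p)=\sgn(v-p)$ (equal to $+1$ for $p<v$ and $-1$ for $p>v$), and $\sr_v(\beta)=-|\beta-v|$, so that $-T\sr_v(\beta)=T|\beta-v|$. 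At the single point $p=v$ I would fix the one-sided convention for $\sr_v'(v)$ matching the case under consideration.

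The second step is to simplify the per-round summand: for a round with $p_t<v$ it equals $(p_t-v)+(x_t-p_t)=x_t-v$, and for a round with $p_t>v$ it equals $(v-p_t)-(x_t-p_t)=v-x_t$. Hence
\[
\VReg_v(\bp,\bx)\;=\;\sum_{t:\,p_t<v}(x_t-v)\;+\;\sum_{t:\,p_t>v}(v-x_t)\;+\;T\,|\beta-v|.
\]
I would then regroup each of the two sums by the value of $x_t$, introducing the counts $N^{<v}_x=\#\{t:x_t=x,\ p_t<v\}$ and $N^{>v}_x=\#\{t:x_t=x,\ p_t>v\}$; by the definitions of $\cP_0$ and $\cP_1$ these satisfy $N^{<v}_0=T(1-\beta)\Pr_{p\sim\cP_0}[p<v]$, $N^{<v}_1=T\beta\Pr_{p\sim\cP_1}[p<v]$ (and analogously for the ``$>v$'' counts), together with $N^{<v}_0+N^{>v}_0=T(1-\beta)$ and $N^{<v}_1+N^{>v}_1=T\beta$, up to the rounds where $p_t=v$.

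The final step is a short computation, split according to the sign of $\beta-v$ so as to remove the absolute value. For $v\ge\beta$, substitute $T|\beta-v|=T(v-\beta)$ and use the two count identities to eliminate the ``$<v$'' quantities; the $T\beta$ and $Tv$ contributions cancel, and what remains is $2v\,N^{>v}_0-2(1-v)\,N^{>v}_1$, which is exactly \eqref{eq:vreg_case1} after re-expressing the counts through $\cP_0,\cP_1$. The case $v\le\beta$ is symmetric: substitute $T|\beta-v|=T(\beta-v)$, eliminate the ``$>v$'' quantities, and obtain $2(1-v)\,N^{<v}_1-2v\,N^{<v}_0$, which is \eqref{eq:vreg_case2}. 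I do not anticipate a genuine obstacle here beyond careful accounting; the one delicate point is the treatment of rounds with $p_t=v$, equivalently the choice of one-sided derivative for $\sr_v$ at its kink, which affects $\VReg_v$ only at the finitely many $v\in\{p_1,\dots,p_T\}$ --- there the two formulas single out the two one-sided limits of $\VReg_v$ --- and hence is immaterial for the quantity $\sup_v\VReg_v$ that every downstream result relies on.
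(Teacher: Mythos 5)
Your proof is correct and follows essentially the same approach as the paper's. You compute the bivariate form of $\sr_v$ (you derive it from the identity $\sr(p,x)=\sr(p)+(x-p)\sr'(p)$, the paper states it directly as $\sr_v(p,0)=v\,\sgn(p-v)$, $\sr_v(p,1)=(1-v)\,\sgn(v-p)$), plug it into $\sum_t\bigl(\sr_v(p_t,x_t)-\sr_v(\beta,x_t)\bigr)$, and regroup by outcome and by which side of $v$ the prediction lies; the paper packages this regrouping as expectations over $\cP_0,\cP_1$ plus the identity $\E[\sgn(X)]=2\Pr[X>0]-1$, while you carry it out with explicit counts $N^{<v}_x,N^{>v}_x$. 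Both routes give the same computation, and you correctly flag the same $p_t=v$ convention issue that the paper addresses in the remark following the theorem.
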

\begin{proof}
To begin, we can expand out the definition of $\VReg_{v}$ to obtain

\begin{equation}\label{eq:vreg1}
\VReg_{v}(\bp, \bx) = \sum_{t=1}^{T} (\sr_{v}(p_t, x_t) - \sr_{v}(\beta, x_t)).
\end{equation}

\noindent
 We can then rewrite \eqref{eq:vreg1} as

\begin{equation}\label{eq:vreg2}
\VReg_{v}(\bp, \bx) = T\left((1-\beta) \E_{p \sim \cP_{0}}\left[\sr_{v}(p, 0)\right] + \beta \E_{p \sim \cP_{1}}\left[\sr_{v}(p, 1)\right] - \sr_{v}(\beta)\right).
\end{equation}

Now, note that the bivariate form of $\sr_{v}$ is given by $\sr_{v}(p, 0) = v \cdot \sgn(p - v)$ and $\sr_{v}(p, 1) = (1-v) \cdot \sgn(v - p)$ (where we define $\sgn(x) = 1$ for $x > 0$, $-1$ for $x < 0$, and $0$ at $x = 0$). Substituting these into \eqref{eq:vreg2} (and applying the identity $\E[\sgn(X)] = 2\Pr[X > 0] - 1$), we arrive at \eqref{eq:vreg_case1} and \eqref{eq:vreg_case2}. 
\end{proof}

\begin{remark}
There is one technical subtlety in the above theorem, which is that the values of the bivariate form of the V-shaped scoring rule $\sr_v(p, x)$ are not uniquely defined when $p = v$ -- above we set $\sr_{v}(v, 0) = \sr_{v}(v, 1) = 0$, but another valid choice is $\sr_{v}(v, 0) = \lambda v$ and $\sr_{v}(v, 1) = \lambda(1-v)$ for any $\lambda \in [-1, 1]$. This choice \textit{does} affect the value of $\VReg_{v}(\bp, \bx)$ when $v$ is equal to some $p_t$. 

However, one consequence of Theorem \ref{thm:vreg_char} is that $\VReg_{v}(\bp, \bx)$ is a piece-wise linear function of $v$, with breakpoints at values taken by $p_t$. Because there are only finitely many such values, to compute the supremum of $\VReg_{v}$ over the interval $[0, 1]$, it suffices to evaluate $\VReg_{v}$ only at non-breakpoints (where the above formulae are valid independent of our choice of $\sr_{v}(v, x)$), so the value of $\VCal(\bp, \bx)$ is independent of these details. 
\end{remark}

\subsection{Some examples of V-calibration}
\label{sec:v-calibration-examples}

To gain an intuition for V-calibration and V-regret, it is useful to consider some examples. Below we work through three examples: the first a forecast with high ($\Omega(T)$) V-calibration error, the second a perfectly calibrated forecast (in the sense of regular calibration), and finally, an example of a forecast with high calibration error and low ($o(T)$) V-calibration error. In all three of these examples, the underlying sequence of binary events will be the same; we will have $x_t = 1$ for $t \in [1, T/2]$, and $x_t = 0$ for $t \in [T/2+1, T]$ (so the base rate $\beta = 1/2$). Only the forecasts $p_t$ will change.

\paragraph{Example 1}
We begin with the example from Theorem \ref{thm:low_brier}, where a sequence of predictions with low Brier score nonetheless has agent regret for a specific agent (so we should expect it to have high V-calibration error). For this example, $\beta = 1/2$, $\cP_0$ is a Bernoulli distribution with mean $1/4$, and $\cP_{1}$ is a Bernoulli distribution with mean $3/4$. We can then apply Theorem \ref{thm:vreg_char} to work out that

\begin{equation}\label{eq:vcal_ex1}
\frac{1}{T} \cdot \VReg_{v}(\bp, \bx) = \begin{cases}
\frac{1}{4} - v & \mbox{ if } v \in [0, 1/2] \\
v - \frac{3}{4} & \mbox{ if } v \in [1/2, 1].
\end{cases}
\end{equation}

From \eqref{eq:vcal_ex1}, we can see that there are values of $v$ where $\VReg_{v} = \Omega(T)$. For example, when $v = 0.9$, $\VReg_{v} = 0.15T$ (and indeed, this corresponds to the gap we obtain in Theorem \ref{thm:low_brier}). On the other hand, for $v \in [1/4, 3/4]$, $\VReg_{v} \leq 0$ -- for agents corresponding to these scoring rules, this sequence of predictions does in fact lead to low regret. The maximum value of $\VReg_{v}$ is attained at $0.25T$ (when $v \in \{0, 1\}$), so for this example $\VCal_{v} = 0.25T = \Omega(T)$. 

\paragraph{Example 2} Second, we will consider a perfectly calibrated sequence of predictions, where $p_t = x_t$ for all $t$. For this example, $\beta = 1/2$, $\cP_{0}$ is a singleton distribution supported at $0$, and $\cP_{1}$ is a singleton distribution supported at $1$. By applying Theorem \ref{thm:vreg_char}, we can work out that

\begin{equation}\label{eq:vcal_ex2}
\frac{1}{T} \cdot \VReg_{v}(\bp, \bx) = \begin{cases}
- v & \mbox{ if } v \in [0, 1/2] \\
v - 1 & \mbox{ if } v \in [1/2, 1].
\end{cases}
\end{equation}

Unsurprisingly, $\VReg_{v}(\bp, \bx) \leq 0$ for all $v \in [0, 1]$, and we have that $\VCal(\bp, \bx) = 0$. In fact, we have that $\VReg_{v}(\bp, \bx) = -\Omega(T)$ for all $v$ except $v = 1/2$. One way to view this is as saying that for almost all scoring rules, following this calibrated sequence of predictions will cause you to significantly outperform (by at least $\Omega(T)$) the base rate forecaster. 

\paragraph{Example 3}\label{ex:ftl} Finally, we consider a slightly more involved example. The predictions $p_t$ will be generated by the \textit{empirical average forecaster}, who always predicts the current historical average of $x_{t}$. Specifically, for $t \in [1, T/2]$ we will set $p_t = 1$, and for $t > T/2$ we will set $p_t = (T/2)/t$. 

For this example, we again have $\beta = 1/2$. The distribution $\cP_{1}$ is simply the singleton distribution at $1$. However, the distribution $\cP_{0}$ is slightly more complex; it is the uniform distribution over the set of numbers of the form $(T/2)/t$ for $t \in [T/2 +1, T]$. As $T$ approaches infinity, the CDF of $\cP_{0}$ approaches the function $F_{0}(q) = \max(0, 2 - 1/q)$. To see this, note that in order for $(T/2)/t < q$, we must have $t/(T/2) > (1/q)$; since $t/(T/2)$ is (in the limit) distributed uniformly in $[1, 2]$, this occurs with probability $2 - (1/q)$ (as long as $1/q \leq 2$). 

Applying Theorem \ref{thm:vreg_char}, we can then work out that for $v \leq 1/2$, 

$$\lim_{T \rightarrow \infty} \frac{1}{T}\cdot\VReg_{v}(\bp, \bx) = (1-v)\Pr_{\cP_{1}}[p < v] - v\Pr_{\cP_{0}}[p<v] = 0,$$

\noindent
and similarly, for $v \geq 1/2$ we have that

$$\lim_{T \rightarrow \infty} \frac{1}{T}\cdot\VReg_{v}(\bp, \bx) = v\Pr_{\cP_{0}}[p > v] - (1-v)\Pr_{\cP_{1}}[p>v] = v\left(1 - \left(2 - \frac{1}{v}\right)\right)- (1-v) = 0.$$

That is, (in the limit) $\VReg_{v}(\bp, \bx)$ is identically zero (and thus so is $\VCal(\bp, \bx)$). One consequence of this (by Theorem \ref{thm:vcal_approx}) is that this sequence of forecasts performs \textit{exactly} as well as the base rate forecaster when measured with respect to \textit{any} scoring rule. We will see some explanation for this in Section \ref{sec:algs}, when we discuss algorithms for V-calibration.

Note that, despite having zero V-calibration, this example is \textit{not} calibrated in the standard sense. In particular, every prediction made in the latter half of the time horizon appears uniquely and has an error of at least $1/2$, so $\Cal(\bp, \bx) \geq 0.5T$. In fact, as we show in Appendix \ref{sec:other-calibration}, not only is this sequence of predictions not calibrated according to our definition of calibration error, it is also far from being calibrated for several existing notions of smooth / approximate calibration.

\subsection{An algorithm for online V-calibration}
\label{sec:algs}

We now switch our attention to online procedures for producing V-calibrated forecasts. Since any regularly calibrated forecaster is also V-calibrated (Theorem \ref{thm:cal_suffices}), we can apply any calibrated forecasting procedure to obtain a V-calibrated procedure. Furthermore, by Theorem \ref{thm:cal_suffices}, if such a procedure guarantees calibration error $R(T)$, it also guarantees V-calibration error $O(R(T))$. 

However, although there exist procedures for producing forecasts with $o(T)$ calibration error \citep{foster1998asymptotic,blum2007external}, the best-known procedures incur $O(T^{2/3})$ calibration error and are somewhat non-intuitive (in general, they involve repeatedly solving some sort of fixed-point problem). Here we give a simple, efficient procedure for V-calibration that guarantees $O(\sqrt{T})$ V-calibration error, which is asymptotically tight.

We begin by describing our algorithm, which we call $\ForecastHedge$:

\begin{algorithm}[H]
\caption{$\ForecastHedge$:}
\label{alg:forecast_hedge}
\begin{algorithmic} 
\State Let $S(x) = e^{x}/(e^{x} + e^{-x})$ and $\eta = 1/\sqrt{T}$.
\State Predict $p_1 = 1/2$ and observe $x_1$. Set $\hat{x}_{1} = x_1$.
\State For $t=2$ to $T$:
\State \quad Sample prediction $p_t \in [0,1]$ such that $\Pr[p_t \leq v] = S\left(\eta(t-1)(v - \hat{x}_{t-1})\right), \forall v \in [0,1)$
\State \quad Observe $x_t$ and update $\hat{x}_{t} = \frac{1}{(t-1)}\sum_{s=1}^{t-1}x_{s}$
\end{algorithmic}
\end{algorithm}

Note that since $S(x)$ is an increasing function in $x$ bounded between $0$ and $1$, this describes a valid probability distribution. 

\begin{theorem}\label{thm:vcal_alg}
For any sequence $\bx$ of events, $\ForecastHedge$ produces a sequence of predictions $\bp$ which has an expected V-calibration error of at most $O(\sqrt{T})$, i.e., $\E[\VCal(\bp, \bx)] = O(\sqrt{T})$. 
\end{theorem}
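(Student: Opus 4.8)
\emph{Proof plan.} The plan is to recognize that, viewed through any one scoring rule $\sr_v$, the randomized procedure \ForecastHedge{} is exactly the two-expert exponential-weights (Hedge) algorithm; to extract from this a pathwise $O(\sqrt T)$ regret bound that is \emph{uniform in $v$}; and then to pay for the supremum over $v\in[0,1]$ with an empirical-process estimate.

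\emph{Step 1: reduction to two experts.} Fix $v\in[0,1]$. From $\sr_v(p,0)=v\,\sgn(p-v)$ and $\sr_v(p,1)=(1-v)\,\sgn(v-p)$ (equivalently, from \Cref{thm:vreg_char}), the $\sr_v$-score of round $t$ depends on the forecast only through $\sgn(p_t-v)$: any forecast below $v$ incurs loss $\ell^{\mathsf{lo}}_t:=x_t-v$, any forecast above $v$ incurs loss $\ell^{\mathsf{hi}}_t:=v-x_t=-\ell^{\mathsf{lo}}_t$, and the base-rate benchmark is $\sum_t\sr_v(\beta,x_t)=-T|\beta-v|=\min(\sum_t\ell^{\mathsf{lo}}_t,\ \sum_t\ell^{\mathsf{hi}}_t)$, the cumulative loss of the better of the two constant experts ``always below $v$'' and ``always above $v$.'' I would then check that \ForecastHedge's sampling rule is exactly exponential weights with learning rate $\eta$ on these two experts: since $(t-1)\hat x_{t-1}=\sum_{s<t}x_s$ we have $\eta(t-1)(v-\hat x_{t-1})=\eta\sum_{s<t}\ell^{\mathsf{hi}}_s$, and because $S(x)=1/(1+e^{-2x})$ the quantity $\Pr[p_t\le v]=S(\eta(t-1)(v-\hat x_{t-1}))$ equals the weight exponential weights places on the ``below $v$'' expert given the loss history $x_1,\dots,x_{t-1}$. (The law of $p_t$ is atomless, so $\Pr[p_t<v]=\Pr[p_t\le v]$, and note that $p_t$'s distribution depends only on $x_1,\dots,x_{t-1}$.)

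\emph{Step 2: a pathwise, $v$-uniform regret bound.} Writing $q_t(v)$ for the weight on the ``below $v$'' expert, decompose $\VReg_v(\bp,\bx)=R_v(\bx)+M_v(\bp,\bx)$, where $R_v(\bx):=\sum_t(q_t(v)\ell^{\mathsf{lo}}_t+(1-q_t(v))\ell^{\mathsf{hi}}_t)-\min(\sum_t\ell^{\mathsf{lo}}_t,\ \sum_t\ell^{\mathsf{hi}}_t)$ is the (deterministic, given $\bx$) external regret of exponential weights and $M_v(\bp,\bx):=\sum_t\sr_v(p_t,x_t)-\sum_t(q_t(v)\ell^{\mathsf{lo}}_t+(1-q_t(v))\ell^{\mathsf{hi}}_t)$ is the fluctuation from sampling $p_t$ rather than committing to an expert. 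For all but the $\le T$ values $v\in\{p_1,\dots,p_T\}$ we have $\sr_v(p_t,x_t)=\mathbf{1}[p_t<v]\ell^{\mathsf{lo}}_t+\mathbf{1}[p_t>v]\ell^{\mathsf{hi}}_t$, so $M_v=2\sum_t(x_t-v)(\mathbf{1}[p_t<v]-q_t(v))$; and since $\VReg_v$ is piecewise linear in $v$ with breakpoints only among the $p_t$ (\Cref{thm:vreg_char}), restricting attention to such $v$ does not change $\sup_v\VReg_v$. The standard exponential-potential analysis of Hedge (telescoping $\frac{1}{\eta}\ln\sum_i e^{-\eta L_{i,t}}$, Hoeffding's lemma, and the fact that the two round-$t$ losses lie in an interval of length $2|x_t-v|\le 2$) gives $R_v(\bx)\le \frac{\ln 2}{\eta}+\frac{\eta}{8}\sum_t(2|x_t-v|)^2\le \frac{\ln 2}{\eta}+\frac{\eta T}{2}$, which for $\eta=1/\sqrt T$ is $O(\sqrt T)$ \emph{pathwise and independently of $v$}.

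\emph{Step 3: the fluctuation term, uniformly in $v$ (the hard part).} It remains to bound $\E[\sup_v M_v(\bp,\bx)]$; commuting the expectation past the supremum over the continuum of thresholds is the crux, since a union bound over a grid of $v$ would cost a spurious $\sqrt{\log T}$ factor. Because the adversary is oblivious, $\bx$ is fixed and the forecasts $p_1,\dots,p_T$ are mutually independent (each $p_t$'s law depends only on $x_1,\dots,x_{t-1}$), with $p_t$ having continuous CDF $v\mapsto q_t(v)$. Splitting $M_v$ according to the value of $x_t$ gives $M_v=-2v\,A_0(v)+2(1-v)\,A_1(v)$ with $A_i(v):=\sum_{t:\,x_t=i}(\mathbf{1}[p_t<v]-q_t(v))$, hence $\sup_v|M_v|\le 2\sup_v|A_0(v)|+2\sup_v|A_1(v)|$. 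Each $A_i$ is a centered empirical process indexed by the class of threshold indicators $\{p\mapsto\mathbf{1}[p<v]\}$, which has VC dimension one, so a Dvoretzky--Kiefer--Wolfowitz-type uniform-deviation bound for independent (not necessarily identically distributed) summands yields $\E[\sup_v|A_i(v)|]=O(\sqrt{|\{t:x_t=i\}|})=O(\sqrt T)$. Putting the pieces together, $\VCal(\bp,\bx)=\sup_v\VReg_v\le \frac{\ln 2}{\eta}+\frac{\eta T}{2}+\sup_v M_v(\bp,\bx)$ pathwise, so taking expectations with $\eta=1/\sqrt T$ gives $\E[\VCal(\bp,\bx)]=O(\sqrt T)$. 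I expect this empirical-process step -- rather than the Hedge bookkeeping -- to be where the real care is needed; the rest is setting up the correspondence and invoking the textbook Hedge potential argument.
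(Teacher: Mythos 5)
Your plan is correct and is essentially the paper's proof: you identify the same two-expert Hedge correspondence, decompose $\VReg_v$ into a deterministic regret term bounded uniformly in $v$ by the standard potential argument plus a mean-zero fluctuation term, and control $\E\sup_v$ of the fluctuation by a DKW-type uniform deviation bound for independent non-identically-distributed summands (the paper's decomposition into $A_v+B_v+C_v$ and Theorem~\ref{thm:dkw-noniid} are exactly your $A_0,A_1,R_v$). The one ingredient you invoke but do not supply is precisely the non-i.i.d.\ DKW extension itself, which the paper has to prove from scratch (Appendix~\ref{sec:dkw-noniid}) via a Poissonization coupling, since the textbook DKW inequality assumes i.i.d.\ samples; you correctly flag this as where the real care is needed.
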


Before we proceed to the proof of Theorem \ref{thm:vcal_alg}, we present some high-level intuition for why $\ForecastHedge$ works. We begin by considering the simpler problem of how to design a learning algorithm that minimizes the agent regret for a specific agent. For this, we can simply employ the classic Hedge algorithm (see \cite{arora2012multiplicative}).

\begin{lemma}[Hedge algorithm]\label{lem:hedge}
\sloppy{Fix a utility function $u: \cA \times \{0, 1\} \rightarrow [-1, 1]$ and set $\eta = \sqrt{(\log |\cA|) / T}$. Consider the agent that, at round $t$, plays the action $a \in \cA$ with probability proportional to $\exp(\eta\sum_{s=1}^{t-1} u(a, x_s))$. Then, in expectation over the randomness of the algorithm, this agent has at most $O(\sqrt{T\log|\cA|})$ regret:}

$$\E\left[\sum_{t=1}^{T}u(a_{\beta}, x_t) - \sum_{t=1}^{T}u(a_t, x_t)\right] = O(\sqrt{T\log |\cA|}).$$
\end{lemma}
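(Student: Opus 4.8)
The statement to prove is the standard Hedge regret bound, adapted to the forecasting language in the paper. Let me sketch the plan.

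The plan is to invoke the classical multiplicative-weights / Hedge regret guarantee, casting the agent's action selection as an instance of learning from expert advice. Here the "experts" are the actions $a \in \cA$, and in each round $t$ expert $a$ accumulates the ``gain'' $u(a, x_t) \in [-1,1]$; the agent at round $t$ plays $a$ with probability proportional to $\exp(\eta \sum_{s<t} u(a, x_s))$, which is exactly exponential-weights on the observed cumulative gains. Since the benchmark $a_\beta = \argmax_{a}\E_{x\sim\Ber(\beta)}[u(a,x)] = \argmax_a \bigl((1-\beta)u(a,0) + \beta u(a,1)\bigr) = \argmax_a \frac1T\sum_t u(a, x_t)$ (using $\beta = \frac1T\sum_t x_t$ and linearity of $u$ in $x$), comparing against $a_\beta$ is exactly comparing against the best fixed expert in hindsight, so the quantity to bound is precisely the standard expert regret.

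The key steps, in order: (1) Observe that $\sum_t u(a_\beta, x_t) = \max_{a\in\cA}\sum_t u(a,x_t)$ by the linearity argument above, so the left-hand side equals the Hedge regret against the best fixed action. (2) Define the potential $\Phi_t = \sum_{a\in\cA}\exp(\eta \sum_{s=1}^{t}u(a,x_s))$ with $\Phi_0 = |\cA|$, and track $\log(\Phi_T/\Phi_0)$. (3) Lower bound: $\log \Phi_T \geq \eta\max_a \sum_t u(a,x_t) = \eta\sum_t u(a_\beta,x_t)$. (4) Upper bound the per-round increment: using $e^{\eta z} \leq 1 + \eta z + \eta^2 z^2$ for $|z|\leq 1$ (valid since $\eta \le 1$, as $|\cA|\ge 2$ WLOG so $\eta = \sqrt{(\log|\cA|)/T}$ and $T$ large, or just rescale gains to $[0,1]$ to use the cleaner $e^{\eta z}\le 1 + (e^\eta-1)z$ Hoeffding-type step), show $\Phi_t/\Phi_{t-1} \leq \exp(\eta \E[u(a_t,x_t)] + \eta^2)$ where $\E$ is over the round-$t$ randomization, because the probability weights are exactly the normalized exponential weights. (5) Chain the increments, take logs, and rearrange to get $\E[\sum_t u(a_t,x_t)] \geq \sum_t u(a_\beta, x_t) - \frac{\log|\cA|}{\eta} - \eta T$; plugging $\eta = \sqrt{(\log|\cA|)/T}$ gives the $O(\sqrt{T\log|\cA|})$ bound. (One should also handle the degenerate case $|\cA| = 1$ separately, where the regret is trivially $0$; the lemma is then vacuous or one sets $\eta$ arbitrarily.)

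I do not expect any genuine obstacle here — this is a textbook result and the cleanest route is simply to cite it (e.g.\ \cite{arora2012multiplicative}, which the paper already references) and note the $\beta$-to-best-expert identification. If a self-contained proof is wanted, the only mildly delicate point is the per-round inequality: one must be careful that the gains lie in $[-1,1]$ rather than $[0,1]$, so either use the inequality $e^{\eta z} \le 1 + \eta z + \eta^2 z^2$ valid for $|\eta z| \le 1$ (which forces $\eta \le 1$, fine for $T \ge \log|\cA|$), or shift and scale $u$ to $[0,1]$ — a shift by a constant does not change the regret since it affects every action identically. Everything else is the routine telescoping of the potential function, so I would keep that part terse.
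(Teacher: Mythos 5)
Your proposal is correct and matches the paper's approach: the paper does not supply a self-contained proof of this lemma but simply cites \cite{arora2012multiplicative}, exactly as you suggest. The one non-trivial observation you supply -- that $a_\beta = \argmax_a \E_{x\sim\Ber(\beta)}[u(a,x)] = \argmax_a \sum_t u(a,x_t)$ because $u(a,x_t) = (1-x_t)u(a,0)+x_t u(a,1)$ and $\beta = \frac1T\sum_t x_t$ -- is precisely what makes the benchmark in the lemma coincide with the best-in-hindsight expert, and your potential-function sketch is the standard derivation; both are fine.
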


Note that instead of specifying a forecast $p_t$ for $x_t$ (which the Agent then best responds to), the Hedge algorithm directly specifies the distribution over actions that the Agent should play at time $t$. At a high level, in $\ForecastHedge$ we sample $p_t$ in such a way to incentivize exactly the same distribution over actions as the Agent would play if they were running Hedge. More accurately, this is not true for every possible agent, but \textit{it is true for the family of Agents that correspond to V-shaped scoring rules} (which is sufficient to minimize V-calibration error). The statement of Theorem \ref{thm:vcal_alg} follows from this fact, modulo some technical complexity due to the fact that we want to bound the expectation of a maximum over infinitely many random variables (for which we apply a variant of the DKW inequality we develop in Appendix \ref{sec:dkw-noniid}).\\

\begin{proof}[Proof of Theorem \ref{thm:vcal_alg}]
For a $v \in [0, 1]$, consider the utility function $u_v: \{0, 1\} \times \{0, 1\}$ defined via $u_v(a, x) = (-1)^{a}(v-x)$. Note that for this utility function, we have that $\AgentReg_{u_v} = \Reg_{\sr_v}$. 

An Agent with utility function $u_{v}$ plays action $a_t = 0$ when $p_t \leq v$ and action $a_t = 1$ when $p_t \geq v$. If they follow the sequence of predictions generated by $\ForecastHedge$, they play action $a_t = 0$ with probability

$$\Pr[a_t = 0] = \Pr[p_t \leq v] = S\left(\eta(t-1)(v - \hat{x}_{t-1})\right).$$

On the other hand, if this Agent instead followed the Hedge algorithm of Lemma \ref{lem:hedge}, they would play action $a_t = 0$ with probability proportional to $\exp(\eta\sum_{s=1}^{t-1} u_v(0, x_s)) = \exp(\eta(t-1)(v - \hat{x}_{t-1}))$, and action $a_t = 1$ with probability proportional to $\exp(\eta\sum_{s=1}^{t-1} u_v(1, x_s)) = \exp(-\eta(t-1)(v - \hat{x}_{t-1}))$. It follows that the Agent following Hedge also plays action $a_t = 0$ with probability

$$\Pr[a_t = 0] = \frac{\exp(\eta(t-1)(v - \hat{x}_{t-1}))}{\exp(\eta(t-1)(v - \hat{x}_{t-1})) + \exp(-\eta(t-1)(v - \hat{x}_{t-1}))} = S\left(\eta(t-1)(v - \hat{x}_{t-1})\right).$$

These two agents have exactly the same behavior in response to any sequence of events $\bx$. By the guarantee of Lemma \ref{lem:hedge}, it follows that $\E[\VReg_v(\bp, \bx)] = \E[\AgentReg_{u_v}(\bp, \bx)] = O(\sqrt{T})$.\\

As a final step we want to go from a bound on $\sup_v \E[\VReg_v(\bp, \bx)]$ to a bound  on $\E[\VCal(\bp, \bx)] = \E[\sup_v \VReg_v(\bp, \bx)]$. For that we will use the uniform convergence bound for monotone functions established in Theorem \ref{thm:dkw-noniid} in Appendix \ref{sec:dkw-noniid}. To apply this theorem, fix a sequence $\bx$ and define $T_0 = \{t; x_t=0\}$ and  $T_1 = \{t; x_t=1\}$. Now, for the base rate $\beta$ we can rewrite:

\begin{eqnarray}
\VReg_v(\bp, \bx) &=& \sum_{t \in T_0} \ell_v(p_t,0) + \sum_{t \in T_1} \ell_v(p_t,1) - \sum_{t \in T} \ell_v(\beta,x_t) \nonumber\\
&=& \underbrace{\sum_{t \in T_0} [v+ \ell_v(p_t,0)]}_{A_v} + \underbrace{\sum_{t \in T_1} [1-v+ \ell_v(p_t,1)]}_{B_v} \underbrace{- \sum_{t \in T} \ell_v(\beta,x_t) - v \abs{T_0} - (1-v) \abs{T_1}}_{C_v} \label{eq:alg_vreg_decomp}
\end{eqnarray}

The function $\ell_v(p,0) = -v$ for $v \leq p$ and $\ell_v(p,0) = v$ otherwise. Hence $v+ \ell_v(p_t,0)$ is monotone non-decreasing in $v$ and has range $[0,2]$. Similarly  $1-v+ \ell_v(p_t,1)$  is monotone non-increasing in $v$ and has range $[0,2]$. Finally note that the random variables $p_t$ are independent since we choose the distribution of the prediction only based on the historical average $\hat{x}_{t-1}$ and not on the previous predictions. Hence $v+ \ell_v(p_t,0)$ and $1-v+ \ell_v(p_t,1)$ are independent random monotone functions of bounded range, satisfying the conditions of Theorem \ref{thm:dkw-noniid}. Using the shorthand $A_v$, $B_v$ and $C_v$ defined in \eqref{eq:alg_vreg_decomp}, we have:
$$ \E \sup_v | A_v - \E[A_v]| \leq O(\sqrt{T_0}) \qquad  \E \sup_v | B_v - \E[B_v]| \leq O(\sqrt{T_1}) $$
Observe also that $C_v$ is not random. Now we can bound $\VCal$ as follows:
$$\begin{aligned}
\E[\VCal(\bp, \bx)] &= \E[\sup_v (A_v + B_v + C_v)] \leq \E[\sup_v ( C_v + \E A_v + \E B_v  +\abs{A_v + B_v - \E[A_v + B_v ] })] \\ & \leq \sup_v ( C_v + \E A_v + \E B_v ) + \E \sup_v (\abs{A_v  - \E A_v}) +\E \sup_v (\abs{B_v  - \E B_v}) \\ 
& = \sup_v \E[\VReg_v(\bp, \bx)] + \E \sup_v (\abs{A_v  - \E A_v}) +\E \sup_v (\abs{B_v  - \E B_v}) \\
& \leq O(\sqrt{T}) + O(\sqrt{T_0}) + O(\sqrt{T_1}) = O(\sqrt{T})
\end{aligned}$$
\end{proof}

\begin{remark}
It is interesting to reexamine Example 3 in light of the guarantees provided by Theorem \ref{thm:vcal_alg}. In the limit as $T \rightarrow \infty$, the predictions made by $\ForecastHedge$ converge to the predictions made by the empirical average forecaster (for the specific sequence of events $x_t$ in the example), and therefore the empirical average forecaster should have low V-calibration error in the limit. This also helps explain why the V-regret of the empirical average forecaster is asymptotically 
uniformly zero in this example: it can be shown that an agent running Hedge against a ``stable'' loss sequence (one where the best arm in hindsight does not change too often) will end up with utility close to that of the optimal arm (and hence near-zero regret).

That said, the empirical average forecaster does not, in general, result in low V-calibration error. In fact, no deterministic forecasting procedure can result in low V-calibration error (if the Forecaster is running a deterministic algorithm, the Adversary can always select $x_t = 0$ when $p_t \geq 0.5$ and $x_t = 1$ otherwise). 
\end{remark}

\subsection{Calibration and swap regret}

We conclude our discussion of the binary forecasting problem with a discussion of how U-calibration relates to swap regret. A U-calibrated sequence of forecasts ensures that an agent consuming these forecasts has low \textit{external regret} -- the gap between their cumulative utility and the cumulative utility of their best-in-hindsight action -- regardless of what their utility function is. One might also wish to limit an agent's \textit{swap regret} -- the gap between their cumulative utility and their counter-factual utility if they had applied a fixed swap function $\pi:\cA \rightarrow \cA$ to their sequence of actions. Formally, we can define agent swap regret analogously to the external regret of an agent as follows:

\begin{equation}\label{eq:agent_swap_reg}
\AgentSwapReg_u(\bp, \bx) = \max_{\pi:\cA\rightarrow \cA} \sum_{t=1}^{T} u(\pi(a(p_t)), x_t) - \sum_{t=1}^{T} u(a(p_t), x_t).
\end{equation}

One motivation for studying swap regret is that calibrated forecasts imply sublinear swap regret for any agent. In fact, one of the first applications of online calibrated forecasting was to design low swap-regret algorithms for agents in games and hence game dynamics that converge to a correlated equilibrium \citep{foster1997calibrated}. This is captured in the following analogue of Theorem \ref{thm:cal_suffices} (with a very similar proof, included in Appendix \ref{app:omitted}).

\begin{theorem}\label{thm:cal_suffices_for_swap}
For any sequence of $T$ binary events $\bx$, predictions $\bp$, and bounded agent (with utility $u$), we have that $\AgentSwapReg_u(\bp, \bx) \leq 4\Cal(\bp, \bx)$. In particular, if $\bp$ and $\bx$ satisfy $\Cal(\bp, \bx) = o(T)$, then for any bounded agent, $\AgentSwapReg_u(\bp, \bx) = o(T)$. 
\end{theorem}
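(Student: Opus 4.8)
The plan is to mirror the proof of Theorem \ref{thm:cal_suffices}, replacing the single scoring-rule argument with a per-action analysis. First I would use Lemma \ref{lem:agent_to_sr} to observe that $u(a(p_t), x_t) = -\sr(p_t, x_t)$ for the proper scoring rule $\sr$ associated with the agent, and that similarly $u(\pi(a(p_t)), x_t)$ can be written in terms of the (now not necessarily optimal) action $\pi(a(p_t))$. The key point is that, for a fixed swap function $\pi$, the benchmark term $\sum_t u(\pi(a(p_t)), x_t)$ only depends on $p_t$ through which bucket $a(p_t)$ it lands in, so it is natural to group the sum by predicted value $p$ (equivalently, by buckets of $t$ with $p_t = p$), exactly as in the calibration proof.

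The main steps, in order: (1) Fix the worst-case $\pi$. (2) Split the sum over $t$ according to the value $p \in [0,1]$ of $p_t$, so that for each $p$ there are $n_p$ rounds with $m_p$ positive outcomes; on these rounds the agent plays the fixed action $a = a(p)$ and the counterfactual plays the fixed action $a' = \pi(a(p))$. (3) The contribution to swap regret from bucket $p$ is $\sum_{t: p_t = p} \big(u(a', x_t) - u(a, x_t)\big) = n_p\big(u(a'; m_p/n_p) - u(a; m_p/n_p)\big)$ where $u(a; q) := \E_{x\sim\Ber(q)}[u(a,x)]$ is the linear extension. (4) Compare this against the value at $q = p$: since $a = a(p)$ is optimal at probability $p$, we have $u(a; p) \geq u(a'; p)$, hence $u(a'; m_p/n_p) - u(a; m_p/n_p) \leq \big(u(a'; m_p/n_p) - u(a'; p)\big) + \big(u(a; p) - u(a; m_p/n_p)\big)$, and each of these two differences is at most $|m_p/n_p - p|$ times the "slope" of the linear function $q \mapsto u(a'';q)$, which equals $|u(a'',1) - u(a'',0)| \leq 2$ by boundedness of $u$. (5) Summing over $p$ gives $\AgentSwapReg_u \leq 4\sum_p n_p |p - m_p/n_p| = 4\Cal(\bp,\bx)$, which yields the claim and its $o(T)$ corollary. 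This is really the same computation as in Theorem \ref{thm:cal_suffices}, with the scoring-rule inequality \eqref{eq:sr_conv3} replaced by the elementary two-term bound above; equivalently one can phrase it as: $\sr_\pi(p,x) := -u(\pi(a(p)),x)$ need not be proper, but $\sr_\pi(p;q) \geq \sr(p) = \sr(p;p)$ fails in general — so instead one uses that both $\sr$ and $\sr_\pi$ have univariate-direction slopes bounded by $2$ and that $\sr_\pi(p;p) \geq \sr(p;p)$, which is exactly optimality of $a(p)$.

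The step I expect to be the mild obstacle is making (4) clean: unlike the external-regret case, the "benchmark" action $a' = \pi(a(p))$ is not optimal at any particular probability, so I cannot simply invoke properness of a single scoring rule. The fix is the triangle-inequality split in (4), inserting the value at $q = p$ as an intermediate point and using optimality of $a(p)$ only at $q=p$; after that, everything reduces to Lipschitz bounds on affine functions of $q$ with slope $\leq 2$, exactly paralleling the derivation of \eqref{eq:sr_conv3}. Since the proof is promised to be "very similar" and deferred to Appendix \ref{app:omitted}, I would keep it to this level of detail and not belabor the arithmetic.
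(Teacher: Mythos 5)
Your proof is correct and gives the same bound. The one place where you diverge from the paper is that you treat the swap function $\pi$ as completely arbitrary, insert the intermediate point $q=p$, and use the slope bound $|u(a'',1)-u(a'',0)|\leq 2$ on each of the two displacements plus optimality of $a(p)$ at $q=p$. The paper instead begins by observing that the \emph{optimal} swap $\pi$ sends $a(p)\mapsto a(\hat p)$ with $\hat p=m_p/n_p$, at which point the benchmark term becomes $u(a(\hat p),x_t)=-\sr(\hat p,x_t)$, the whole swap regret collapses to $\sum_p n_p\bigl(\sr(p;\hat p)-\sr(\hat p;\hat p)\bigr)$, and inequality \eqref{eq:sr_conv3} finishes exactly as in Theorem~\ref{thm:cal_suffices}. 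So the obstacle you flagged --- ``the benchmark action $a'=\pi(a(p))$ is not optimal at any particular probability, so I cannot invoke properness'' --- is in fact sidestepped in the paper by noting that for the maximizing $\pi$ it \emph{is} optimal, at $\hat p$. Your workaround is sound and a bit more elementary (it needs neither Lemma~\ref{lem:agent_to_sr} nor the identification of the optimal $\pi$), and your triangle-inequality split is really just the proof of \eqref{eq:sr_conv3} inlined; both routes deliver the factor $4$.
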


As with $\AgentReg$, we can ask if calibration is truly necessary here, or if there is a weaker analogue of calibration (\`a la U-calibration) which would suffice to minimize agent swap regret. Interestingly, we show the answer is essentially no -- for any miscalibrated sequence of forecasts $\bp$ for $\bx$, there is an agent which incurs high swap regret if they follow these forecasts. That is, (ordinary) calibration has the same relation to agent swap regret that U-calibration does to agent external regret.

To prove this, we will find it easier to work with the following $L_{2}$-variant of calibration error.

\begin{equation}\label{eq:l2-calibration}
\Cal_{2}(\bp, \bx) = \sum_{p \in [0, 1]} n_p\left(p - \frac{m_p}{n_p}\right)^2.
\end{equation}

Regular calibration error and $L_2$-calibration error are related by the following inequality.

\begin{lemma}\label{lem:l2-vs-l1-calibration}
For any $\bp$ and $\bx$,

$$\left(\frac{\Cal(\bp, \bx)}{T}\right)^2 \leq \frac{\Cal_{2}(\bp, \bx)}{T} \leq \frac{\Cal(\bp, \bx)}{T}.$$
\end{lemma}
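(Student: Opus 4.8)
The plan is to prove the two inequalities in Lemma~\ref{lem:l2-vs-l1-calibration} separately by viewing the quantities $\Cal/T$ and $\Cal_2/T$ as a weighted $\ell_1$- and weighted $\ell_2$-norm of the vector of per-bucket errors. Concretely, for each predicted value $p$ let $w_p = n_p/T$, so that $\sum_p w_p = 1$, and let $e_p = |p - m_p/n_p| \in [0,1]$ be the (absolute) calibration error in bucket $p$. Then $\Cal(\bp,\bx)/T = \sum_p w_p e_p$ and $\Cal_2(\bp,\bx)/T = \sum_p w_p e_p^2$; these are exactly $\E_{p\sim w}[e_p]$ and $\E_{p\sim w}[e_p^2]$ where $w$ is the probability distribution that puts mass $w_p$ on bucket $p$.

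For the right inequality $\Cal_2/T \le \Cal/T$, I would simply use that $e_p \in [0,1]$, so $e_p^2 \le e_p$ termwise; multiplying by the nonnegative weights $w_p$ and summing gives $\sum_p w_p e_p^2 \le \sum_p w_p e_p$, which is the claim. For the left inequality $(\Cal/T)^2 \le \Cal_2/T$, I would apply Jensen's inequality (equivalently Cauchy--Schwarz) to the convex function $z \mapsto z^2$ with respect to the probability distribution $w$: $\left(\E_{p\sim w}[e_p]\right)^2 \le \E_{p\sim w}[e_p^2]$, i.e. $(\sum_p w_p e_p)^2 \le \sum_p w_p e_p^2$. Substituting back the definitions of $\Cal$ and $\Cal_2$ yields $(\Cal(\bp,\bx)/T)^2 \le \Cal_2(\bp,\bx)/T$.

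One small technical point I would be careful about: buckets $p$ with $n_p = 0$ contribute nothing to either sum, and the expression $m_p/n_p$ should be read as only ranging over the finitely many values actually predicted (so $n_p \ge 1$ there); this matches the convention already used in the definition of $\Cal$ via the finite sum $\sum_{p\in[0,1]}$. With that caveat the argument is completely routine; there is no real obstacle, the only thing to get right is the bookkeeping that turns $\Cal$ and $\Cal_2$ into $\E_{p\sim w}[e_p]$ and $\E_{p\sim w}[e_p^2]$ for the probability distribution $w_p = n_p/T$, after which the right inequality is termwise monotonicity ($e_p^2\le e_p$ on $[0,1]$) and the left inequality is one invocation of Jensen/Cauchy--Schwarz.
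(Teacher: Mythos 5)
Your proof is correct and essentially the same as the paper's: the right inequality is the same termwise bound $e_p^2\le e_p$ on $[0,1]$, and your Jensen's inequality on the probability distribution $w_p=n_p/T$ is just the paper's Cauchy--Schwarz step after dividing both sides by $T^2$. The rephrasing in terms of $\E_{p\sim w}[e_p]$ and $\E_{p\sim w}[e_p^2]$ is a cosmetic normalization, not a different argument.
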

\begin{proof}
The right inequality follows since $\left(p - \frac{m_p}{n_p}\right)^2 \leq |p - \frac{m_p}{n_p}|$. The left inequality follows from the following application of Cauchy-Schwartz:

$$\left(\sum_{p \in [0, 1]}n_p\left(p - \frac{m_p}{n_p}\right)^2\right)\left(\sum_{p \in [0,1]}n_p\right) \geq \left(\sum_{p\in[0,1]} n_p\left|p - \frac{m_p}{n_p}\right|\right)^2.$$
\end{proof}

In particular, by Lemma \ref{lem:l2-vs-l1-calibration}, a forecaster has $\Omega(T)$ $L_2$-calibration error iff they have $\Omega(T)$ regular calibration error. We can now prove the following theorem, which shows that we can always construct an agent where $\AgentSwapReg_{u}(\bp, \bx) \geq \Cal_{2}(\bp, \bx)$. At a high level, we will show that if we take our agent to themselves be a forecaster rewarded according to the Brier score, their swap regret is equal to $L_{2}$-calibration error.

\begin{theorem}\label{thm:cal_is_necessary}
For any sequence of $T$ predictions $\bp$ and binary events $\bx$, there exists a bounded utility function $u$ such that $\AgentSwapReg_{u}(\bp, \bx) \geq \Cal_{2}(\bp, \bx)$.
\end{theorem}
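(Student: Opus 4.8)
The plan is to instantiate the abstract agent as a \emph{second forecaster} whose loss is the squared (Brier) error, and to observe that such an agent's swap regret coincides exactly with the $L_2$-calibration error. Concretely, I would take the (finite) action set $\cA$ to consist of all values $p_t$ that actually occur in the forecast sequence together with all the empirical frequencies $m_p/n_p$, and define $u(a,x) = -(a-x)^2$, which lies in $[-1,0]$ and is therefore bounded. Since $\E_{x\sim\Ber(p)}[(a-x)^2] = (a-p)^2 + p(1-p)$, the best response $a(p) = \argmax_{a\in\cA}\E_{x\sim\Ber(p)}[u(a,x)]$ is just the point of $\cA$ nearest to $p$; as every $p_t$ lies in $\cA$, this gives $a(p_t) = p_t$ with no ties, so $\wt u(p_t,x) = -(p_t-x)^2$ and the agent simply ``replays'' the forecaster's predictions.

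Next I would expand the swap-regret objective \eqref{eq:agent_swap_reg}. Grouping rounds by the predicted value $p$, the optimization over the swap function $\pi$ decouples across groups, and on the group of $n_p$ rounds where the forecaster predicted $p$ the best target $\pi(p)$ is the minimizer over $\cA$ of $\sum_{t:p_t=p}(\pi(p)-x_t)^2$; since the unconstrained minimizer $m_p/n_p$ belongs to $\cA$ by construction, the optimal value is $m_p/n_p$. Hence
\[
\AgentSwapReg_u(\bp,\bx) = \sum_{p\in[0,1]} \left( \sum_{t:p_t=p}(p-x_t)^2 \;-\; \sum_{t:p_t=p}\Bigl(\tfrac{m_p}{n_p}-x_t\Bigr)^2 \right).
\]

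Then I would evaluate the inner difference using $x_t^2 = x_t$ for $x_t\in\{0,1\}$: the first sum equals $n_p p^2 - 2 p m_p + m_p$ and the second equals $m_p - m_p^2/n_p$, so their difference is $n_p p^2 - 2 p m_p + m_p^2/n_p = n_p(p - m_p/n_p)^2$ — the standard bias/variance decomposition of the empirical distribution of $x_t$ on those rounds. Summing over $p$ gives $\AgentSwapReg_u(\bp,\bx) = \sum_p n_p(p - m_p/n_p)^2 = \Cal_2(\bp,\bx)$, which in particular is $\ge \Cal_2(\bp,\bx)$, proving the theorem (and, combined with Lemma \ref{lem:l2-vs-l1-calibration}, showing that $\Omega(T)$ calibration error forces $\Omega(T)$ agent swap regret). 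The only point needing care is the bookkeeping around finiteness of $\cA$: it must contain both the predictions (so best responses recover the $p_t$ exactly) and the empirical means (so the swap function can be realized), but since there are finitely many distinct $p_t$ this is harmless, and the remaining computation is the short quadratic identity above — I do not anticipate a genuine obstacle.
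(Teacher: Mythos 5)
Your proof is correct and takes essentially the same approach as the paper: pick the agent whose utility is minus the Brier loss, observe that best responses replay the forecasts and the optimal swap sends each prediction to its conditional empirical mean, then compute that the resulting swap regret is exactly $\Cal_2(\bp,\bx)$. The only cosmetic differences are that the paper states $\cA = [0,1]$ and relegates the finiteness bookkeeping to a footnote (which you handle upfront by taking $\cA$ to be the finite set of predictions and empirical means), and that you expand the quadratic using $x_t^2 = x_t$ where the paper factors a difference of squares; both give the same identity.
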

\begin{proof}
Consider the agent\footnote{We define this agent as having infinitely many actions (one for each element in $[0, 1]$). However, note that we can reduce this to a finite number of actions by restricting $\cA$ to values that appear as either $p_t$ or $\pi(p_t)$.} with $\cA = [0, 1]$ and $u(a, x) = -(a-x)^2$. Note that for this agent, $a(p) = p$ and $\wtu(p, x) = -(p-x)^2$. Furthermore, the best swap function $\pi:\cA\rightarrow\cA$ is the one which sends each $p$ in the support of $\bp$ to $\pi(p) = \frac{m_p}{n_p}$. Now, we have that

\begin{eqnarray*}
\AgentSwapReg_{u}(\bp, \bx) &=& \sum_{t=1}^{T} u(\pi(a(p_t)), x_t) - \sum_{t=1}^{T} u(a(p_t), x_t) \\
    &=& \sum_{t=1}^{T} (p_t - x_t)^2 - (\pi(p_t) - x_t)^2 \\
    &=& \sum_{p \in [0, 1]} \sum_{t; p_t = p}\left((p - x_t)^2 - \left(\frac{m_p}{n_p} - x_t\right)^2\right) \\
    &=& \sum_{p \in [0, 1]} \sum_{t; p_t = p}\left(p - \frac{m_p}{n_p}\right)\left(p + \frac{m_p}{n_p} - 2x_t\right) \\
    &=& \sum_{p \in [0, 1]} \sum_{t; p_t = p}n_p\left(p - \frac{m_p}{n_p}\right)\left(p + \frac{m_p}{n_p} - 2\frac{m_p}{n_p}\right) \\
    &=& \sum_{p \in [0, 1]} \sum_{t; p_t = p}n_p\left(p - \frac{m_p}{n_p}\right)^2 = \Cal_{2}(\bp, \bx).
\end{eqnarray*}
\end{proof}

What does this imply for the use of U-calibration as a forecasting metric? In particular, Theorem \ref{thm:cal_is_necessary} implies that forecasts which are U-calibrated but not calibrated (e.g. the third example of Section \ref{sec:v-calibration-examples}) will have high agent swap regret for some agent. Does this mean we should insist that all forecasts are not merely U-calibrated but also calibrated?

Ultimately, it seems like the answer to this question should depend on what, specifically, is being forecasted. Swap regret tends to be a useful quantity for agents to minimize in \textit{strategic} settings -- for example, it leads to convergence to correlated equilibria \citep{foster1997calibrated} and low swap regret algorithms cannot be dynamically manipulated by other players the way low external regret algorithms can \citep{deng2019strategizing, mansour2022strategizing}. So, in settings where the event being forecasted is controlled by a strategic agent who cares about the action the agent takes (e.g., forecasting the strategies of other players in a game, as in \cite{foster1997calibrated}), it may make sense to insist on calibrated forecasts. But in other settings where the outcome-generating procedure is non-strategic (e.g., the weather) it is not obvious what benefits calibrated forecasts provide over U-calibrated forecasts.

\section{Multiclass U-calibration}
\label{sec:multiclass_u_cal}

\subsection{Multiclass forecasting}\label{sec:multiclass_definitions}

In this section, we consider extensions to the setting of \textit{multiclass forecasting}, where each event has one of $K$ possible outcomes and the forecaster's predictions take the form of distributions over these $K$ outcomes. 

We begin by reviewing how the definitions in the binary case generalize to the multiclass setting. Our scoring rules now take the form $\sr: \Delta_{K} \times [K] \rightarrow \Rset$, where each event $x$ lies in $[K]$ and each prediction $p$ belongs to the $K$-simplex $\Delta_K$. As before, a scoring rule is proper if $\E_{x \sim p}[\sr(p, x)] \leq \E_{x \sim p}[\sr(p', x)]$ for any $p' \neq p$, and is strictly proper if this inequality is strict. Similarly, for $p, q \in \Delta_K$, we write $\sr(p; q) = \E_{x \sim q}[\sr(p, x)]$, and define the univariate form $\sr(p) = \sr(p; p)$. As in the binary case, the univariate forms of scoring rules still correspond to concave functions (now over $\Delta_K$).

\begin{lemma}\label{lem:uni_to_bi}
For any scoring rule $\sr$, the univariate form $\sr(p)$ is a concave function over $\Delta_K$. Moreover, given any concave function $f: \Delta_{K} \rightarrow \Rset$, there exists a scoring rule $\sr$ such that $\sr(p) = f(p)$. Finally, if $\sr(p)$ is differentiable, it is possible to recover the bivariate form of $\sr$ from the univariate form via the equality\footnote{For convenience, we will abuse notation by identifying the set $[K]$ of outcomes with the set $\{e_1, e_2, \dots, e_K\}$ of unit vectors in $\Rset^{K}$. This allows us to write expressions like $(x-p)$ in place of $(e_x - p)$, and more closely matches the notation of the binary outcome case.}

\begin{equation}\label{eq:uni_to_bi}
\sr(p, x) = \sr(p) + \langle x - p, \nabla\sr(p) \rangle.
\end{equation}
\end{lemma}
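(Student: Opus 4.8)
The plan is to mirror the proof of Lemma~\ref{lem:sr_conv} (the binary case), adapting each step to the $K$-dimensional simplex $\Delta_K$. The three claims are: (i) $\sr(p)$ is concave; (ii) every concave $f:\Delta_K\to\Rset$ arises as some $\sr(p)$; and (iii) the recovery formula \eqref{eq:uni_to_bi} holds when $\sr$ is differentiable.

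\textbf{Concavity.} Fix $p,q\in\Delta_K$ and $\lambda\in[0,1]$, and let $r=\lambda p+(1-\lambda)q$. Properness gives $\sr(r)=\sr(r;r)\le \sr(p;r)$ and $\sr(r)=\sr(r;r)\le\sr(q;r)$. Since the map $s\mapsto \sr(s;\cdot)$ is \emph{linear} in the second argument (it is an expectation over the outcome, $\sr(s;r)=\sum_x r_x\,\sr(s,x)$), we have $\sr(r;r)=\lambda\,\sr(r;p)+(1-\lambda)\,\sr(r;q)$. But properness also yields $\sr(r;p)\ge \sr(p;p)=\sr(p)$ and $\sr(r;q)\ge\sr(q;q)=\sr(q)$. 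Wait --- the inequality goes the wrong way for those two; instead use $\sr(p;r)\ge$? No: properness says $\sr(p)=\sr(p;p)\le\sr(s;p)$ for all $s$. The clean route: $\sr(r)=\sr(r;r)=\lambda\,\sr(r;p)+(1-\lambda)\,\sr(r;q)\ge \lambda\,\sr(p;p)+(1-\lambda)\,\sr(q;q)=\lambda\,\sr(p)+(1-\lambda)\,\sr(q)$, where each inequality $\sr(r;p)\ge\sr(p;p)$ and $\sr(r;q)\ge\sr(q;q)$ is exactly properness applied at $p$ and at $q$ respectively. This establishes concavity.

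\textbf{From concave $f$ to a scoring rule.} Given concave $f:\Delta_K\to\Rset$, pick for each $p\in\Delta_K$ a supergradient $g(p)\in\Rset^K$ (exists since $f$ is concave; on the relative interior this is a gradient, and on the boundary we choose any supergradient of the restriction, or we can first perturb/extend $f$ to a neighborhood). Define $\sr(p,x)=f(p)+\langle x-p,\,g(p)\rangle$ for $x\in\{e_1,\dots,e_K\}$. Then $\sr(p;q)=f(p)+\langle q-p,\,g(p)\rangle$, which by the supergradient inequality $f(q)\le f(p)+\langle q-p,g(p)\rangle$ satisfies $\sr(p;q)\ge f(q)$ with equality at $q=p$; hence $\sr(p;p)=f(p)$ and $\sr(q;q)=f(q)\le\sr(p;q)$, i.e. $\sr$ is proper with the prescribed univariate form. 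The recovery formula \eqref{eq:uni_to_bi} then holds by construction when $g(p)=\nabla\sr(p)$, which is the third claim: for differentiable $\sr$, the argmin condition defining properness --- $q\mapsto \sr(p;q)-\sr(q;q)$ is minimized (equals $0$) at $q=p$ over the simplex --- forces $\nabla_q[\sr(p;q)-f(q)]$ to vanish along directions tangent to $\Delta_K$ at $p$ when $p$ is interior, giving $g(p)-\nabla f(p)\perp(\mathbf 1^\perp)$, so $g(p)$ and $\nabla f(p)$ agree up to a multiple of $\mathbf 1$; and adding a multiple of $\mathbf 1$ to $g(p)$ does not change $\langle x-p,g(p)\rangle$ since $\langle x-p,\mathbf 1\rangle=0$. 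Thus $\sr(p,x)=\sr(p)+\langle x-p,\nabla\sr(p)\rangle$.

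\textbf{Main obstacle.} The only genuinely delicate point is boundary behavior on $\Delta_K$: gradients and supergradients of a concave function defined only on the simplex are ambiguous modulo the normal direction $\mathbf 1$, and a concave function on $\Delta_K$ need not be differentiable or even have a bounded supergradient near the boundary. I would handle this exactly as the binary proof implicitly does --- state the recovery formula \eqref{eq:uni_to_bi} under the differentiability hypothesis, note that the ``$\langle x-p,\mathbf 1\rangle=0$'' observation makes the formula independent of which representative gradient is used, and for the existence direction appeal to the standard fact that concave functions admit supergradients throughout their domain. A full treatment would defer details to the appendix (as the paper does for Lemma~\ref{lem:sr_conv}), but the argument above is the complete skeleton.
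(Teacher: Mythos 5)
Your proposal is correct and follows essentially the same route as the paper: properness plus linearity of the score in the probability argument gives concavity (the paper expresses $\sr(p)$ as a pointwise minimum of affine functions $\min_{p'}\sr(p';p)$; you verify the defining concavity inequality directly, but it is the same insight), the supergradient construction $\sr(p,x)=f(p)+\langle x-p,\,g(p)\rangle$ handles existence, and a supporting-hyperplane argument gives the recovery formula. Your extra observation that $\langle x-p,\mathbf{1}\rangle=0$ makes \eqref{eq:uni_to_bi} independent of the choice of gradient representative on $\Delta_K$ (which is well-defined only modulo the normal direction $\mathbf{1}$) is a useful detail that the paper leaves implicit.
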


As in the binary case, we will restrict our attention to the set of bounded scoring rules $\cL$ containing all scoring rules taking on values bounded within the interval $[-1, 1]$. Again, this implies bounds on the gradient $\nabla \sr(p)$: in particular, it is the case that for any $p, q, q' \in \Delta_K$, $\langle q-q', \nabla\sr(p) \rangle \leq ||q - q'||_1$ (see Corollary \ref{cor:multiclass_derivative_bound}). 

The forecasting game (involving an Adversary, a Forecaster, and an Agent) remains essentially the same in the multiclass setting. Again, the Adversary selects a sequence of outcomes $x_{t} \in [K]$, the Forecaster produces a prediction $p_{t} \in \Delta_{K}$ for $x_t$ based on the previous predictions and outcomes, and the Agent (equipped with a utility function $u: \cA \times [K] \rightarrow [-1, 1]$) observes the prediction $p_{t}$ and plays the action $a_t$ that maximizes their expected utility $\E_{x \sim p_t}[u(a_t, x)]$. We again employ as our baseline the base rate forecast $\beta = \frac{1}{T}\sum_{t} x_t$ (which is now an element of $\Delta_K$). 

It is fairly straightforward to generalize scoring rule regret $\Reg_{\ell}$ and agent regret $\AgentReg_{u}$ to the multiclass setting (in particular, definitions \eqref{eq:sr_reg} and \eqref{eq:agent_reg} generalize as written). It is less clear what the definition of multiclass calibration should be. Here we define it as follows, where we look at the average $\ell_1$ prediction error over all rounds where the Forecaster makes exactly the same prediction $p_t$ (which matches the definition in \cite{foster1997calibrated}):

\begin{equation}\label{eq:multiclass_calibration}
\Cal(\bp, \bx) = \sum_{p \in \Delta_K} \left\Vert \sum_{t\,\mid\,p_{t} = p} (p-x_t) \right\Vert_1,
\end{equation}

Under this definition, we have the following analogue of Theorem \ref{thm:cal_suffices}:

\begin{theorem}\label{thm:multiclass_cal_suffices}
For any sequence of $T$ multiclass events $\bx$, $T$ multiclass predictions $\bp$, and bounded agent (with utility $u$), we have that $\AgentReg_u(\bp, \bx) \leq 2\Cal(\bp, \bx)$. In particular, if $\bp$ and $\bx$ satisfy $\Cal(\bp, \bx) = o(T)$, then for any bounded agent, $\AgentReg_u(\bp, \bx) = o(T)$. 
\end{theorem}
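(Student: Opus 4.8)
The plan is to follow the proof of Theorem~\ref{thm:cal_suffices} essentially verbatim, replacing absolute values by $\ell_1$ norms and scalar derivatives by (super)gradients; the reason the constant improves from $4$ to $2$ is exactly that the multiclass gradient bound in Corollary~\ref{cor:multiclass_derivative_bound} is $\langle q-q',\nabla\sr(p)\rangle \le \|q-q'\|_1$, whereas the binary bound $|\sr'(p)|\le 2$ costs a factor of $2$. First I would reduce to a statement about scoring-rule regret: the multiclass analogue of Lemma~\ref{lem:agent_to_sr} holds with the same proof, since setting $\sr(p,x) = -\wtu(p,x)$ makes $\sr$ a proper scoring rule (immediate from optimality of $a(p)$), with range in $[-1,1]$ because $u$ has range in $[-1,1]$, and $\AgentReg_u(\bp,\bx) = \Reg_\sr(\bp,\bx)$. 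So it suffices to show $\Reg_\sr(\bp,\bx) \le 2\,\Cal(\bp,\bx)$ for every bounded proper scoring rule $\sr \in \cL$.

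Next I would establish the multiclass version of inequality~\eqref{eq:sr_conv3}: for all $p,q\in\Delta_K$,
$$\sr(q) \;\le\; \sr(p;q) \;\le\; \sr(q) + 2\,\|p-q\|_1 .$$
The left inequality is just properness. For the right one, write the bivariate form as $\sr(p,x) = \sr(p) + \langle x-p,\,g_p\rangle$, where $g_p$ is a supergradient of the concave univariate form $\sr(\cdot)$ at $p$ (this is Lemma~\ref{lem:uni_to_bi} when $\sr$ is differentiable, and the standard affine-in-$x$ characterization of proper scoring rules in general). Taking $\E_{x\sim q}$ gives $\sr(p;q) = \sr(p) + \langle q-p,\,g_p\rangle \le \sr(p) + \|q-p\|_1$ by Corollary~\ref{cor:multiclass_derivative_bound}, and concavity of $\sr(\cdot)$ gives $\sr(p) \le \sr(q) + \langle p-q,\,g_q\rangle \le \sr(q) + \|p-q\|_1$; adding the two bounds proves the claim.

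Then I would decompose the regret by predicted value, exactly as in the binary proof. For each $p$ in the (finite) support of $\bp$, let $n_p = |\{t : p_t = p\}|$ and let $\bar x_p = \frac{1}{n_p}\sum_{t:\,p_t=p} x_t \in \Delta_K$ be the empirical outcome distribution over those rounds. Since $\sr(p,\cdot)$ is a fixed function on $[K]$, summing over such $t$ gives $\sum_{t:\,p_t=p}\sr(p,x_t) = n_p\,\sr(p;\bar x_p)$, and likewise $\sum_{t:\,p_t=p}\sr(\beta,x_t) = n_p\,\sr(\beta;\bar x_p)$. Therefore
$$\Reg_\sr(\bp,\bx) \;=\; \sum_{p\in\Delta_K} n_p\big(\sr(p;\bar x_p) - \sr(\beta;\bar x_p)\big) \;\le\; \sum_{p\in\Delta_K} 2\,n_p\,\|p-\bar x_p\|_1 ,$$
using $\sr(\beta;\bar x_p)\ge\sr(\bar x_p)$ (properness) together with $\sr(p;\bar x_p)\le\sr(\bar x_p)+2\|p-\bar x_p\|_1$ from the previous step. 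Finally $n_p\|p-\bar x_p\|_1 = \big\|\sum_{t:\,p_t=p}(p-x_t)\big\|_1$, so the right-hand side is precisely $2\,\Cal(\bp,\bx)$ as defined in~\eqref{eq:multiclass_calibration}. The ``in particular'' clause is then immediate.

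I do not expect any genuinely hard step; the points needing care are (i) in the key inequality, the non-differentiable case, where one should work with a supergradient of the concave univariate form (or approximate $\sr$ by smooth concave functions) rather than literally invoking~\eqref{eq:uni_to_bi}, noting that Corollary~\ref{cor:multiclass_derivative_bound}'s bound holds for any such supergradient by boundedness of $\sr$; and (ii) verifying that $\bar x_p\in\Delta_K$, so that properness and the univariate form are legitimately applied at $\bar x_p$, and that the outer sum over $\Delta_K$ is finite because only finitely many distinct predictions are ever made.
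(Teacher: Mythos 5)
Your proof is correct and follows essentially the same route as the paper's: reduce to a scoring-rule regret bound, prove the two-sided inequality $\sr(q)\le\sr(p;q)\le\sr(q)+2\|p-q\|_1$ via Corollary~\ref{cor:multiclass_derivative_bound}, group rounds by predicted value, and apply the inequality to $\sr(p;\bar x_p)$ together with properness of $\sr(\beta;\bar x_p)$. The only point I would adjust is the opening remark on why the constant drops from $4$ to $2$: it is less that the multiclass gradient bound is ``tighter'' and more that the multiclass definition~\eqref{eq:multiclass_calibration} of $\Cal$ uses the $\ell_1$ norm on $\Delta_K$, which at $K=2$ already doubles the binary $\Cal$ of~\eqref{eq:calibration_redef}, so the two theorems are consistent.
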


The choice of $\ell_1$ norm in the definition \eqref{eq:multiclass_calibration} of multiclass calibration is fairly arbitrary -- replacing it with a different $\ell_p$ norm simply decreases the calibration error by at most a factor of $K$ (so in particular, it is still true that sublinear calibration error implies sublinear agent regret under different $\ell_p$ norms). We briefly note that other weaker notions of multiclass calibration are often used in practice, e.g. ``one-vs-all'' notions which look at the maximum calibration error in each dimension individually \citep{johansson2021calibrating}. It turns out that these forms of calibration do not have the property of guaranteeing sublinear agent regret (we will see a counterexample in Section \ref{sec:one_vs_all}). 

\subsection{Computing multiclass U-calibration error}
\label{sec:computing_agent_cal}

Although (multiclass) calibration guarantees sublinear agent regret, it is not \emph{necessary} to guarantee sublinear agent regret. For this, we would like to minimize the U-calibration error, which (just as in the binary case) is defined to equal $\MaxAgentReg(\bp, \bx) = \sup_{\ell \in \cL} \Reg_{\ell}(\bp, \bx)$. 

In the binary case, we demonstrated that instead of minimizing regret for all bounded scoring rules in $\cL$, it suffices to minimize regret for the specific class of V-shaped scoring rules. In the multiclass setting, it is not clear what the correct analogue of V-shaped scoring rules should be (we explore this question in Section \ref{sec:multiclass_v_cal}). Instead, in this section we will describe how to directly (and efficiently) evaluate $\MaxAgentReg(\bp, \bx)$ for a given sequence of predictions and outcomes by solving a specific convex program.

\begin{theorem}\label{thm:agent_cal_alg}
Given a sequence of $T$ multiclass (taking on $K$ values) outcomes $\bx$, $T$ multiclass predictions $\bp$, and an $\varepsilon > 0$, there is an algorithm that computes a bounded scoring rule $\ell \in \cL$ with the property that $\Reg_{\ell}(\bp, \bx) \geq \MaxAgentReg(\bp, \bx) - \varepsilon$ in time $\poly(T, K, \log \frac{1}{\varepsilon})$. 
\end{theorem}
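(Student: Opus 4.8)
The plan is to show that evaluating $\MaxAgentReg(\bp, \bx)$ is a convex optimization problem that can be solved to accuracy $\varepsilon$ in the stated time. The first step is to identify the decision variable. By Lemma \ref{lem:uni_to_bi}, a bounded scoring rule is (up to the choice of subgradients at non-differentiable points, which do not affect regret) specified by its univariate form $f = \sr(p)$, a concave function on $\Delta_K$; the boundedness constraint $\sr(p,x) \in [-1,1]$ translates via \eqref{eq:uni_to_bi} into a constraint relating $f(p)$ and its subgradients. Using the identity $\sr(p_t, x_t) = f(p_t) + \langle x_t - p_t, \nabla f(p_t)\rangle$, I would rewrite
\[
\Reg_{\sr}(\bp,\bx) = \sum_{t=1}^T \bigl(f(p_t) + \langle x_t - p_t, \nabla f(p_t)\rangle\bigr) - T f(\beta),
\]
which is a \emph{linear} functional of the pair $(f(p_t), \nabla f(p_t))_t$ together with $f(\beta)$. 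Since only the finitely many points $q_1, \dots, q_m \in \Delta_K$ (the distinct values among $p_1, \dots, p_T$, together with $\beta$) ever appear, the objective depends only on the finitely many numbers $z_i = f(q_i)$ and vectors $g_i$ (a valid subgradient of $f$ at $q_i$).

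The second step is to argue that the feasible region for $(z_i, g_i)_{i=1}^m$ is a polytope (or convex set) of polynomial dimension. Concavity of $f$ is equivalent, on a finite point set with designated subgradients, to the finitely many inequalities $z_j \le z_i + \langle q_j - q_i, g_i\rangle$ for all $i, j$; and by Corollary \ref{cor:multiclass_derivative_bound}-type reasoning the boundedness of $\sr$ on $[-1,1]$ can be enforced by $|z_i + \langle e_x - q_i, g_i\rangle| \le 1$ for all $i \in [m]$, $x \in [K]$ (one must check that any $(z_i,g_i)$ satisfying these concavity and boundedness inequalities extends to a genuine bounded concave function on all of $\Delta_K$ — e.g. by taking the lower concave envelope $f(p) = \min_i (z_i + \langle p - q_i, g_i\rangle)$ and verifying it still has range in $[-1,1]$, which follows because each affine piece is bounded at the vertices $e_x$). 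This makes the whole problem a linear program in $O(mK) = O(TK)$ variables and $O(m^2 + mK) = O(T^2 + TK)$ constraints.

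The third step is the running-time bookkeeping: maximizing a linear objective over a polytope described by polynomially many explicit linear inequalities with polynomially-bounded bit-complexity can be done to additive error $\varepsilon$ in time $\poly(T, K, \log\frac1\varepsilon)$ by, e.g., the ellipsoid method or any polynomial-time LP solver; then one reconstructs the scoring rule $\ell$ from the optimal $(z_i, g_i)$ via the lower envelope formula above and outputs its bivariate form through \eqref{eq:uni_to_bi}.

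I expect the main obstacle to be the ``extension'' argument in step two: making sure that a vector of values and subgradients satisfying the local concavity and boundedness inequalities at the finite point set actually corresponds to a globally defined bounded proper scoring rule (so that $\Reg_\ell$ for the reconstructed $\ell$ genuinely equals the LP objective, and so that the LP optimum is not larger than the true supremum). The clean way to handle this is to exhibit the explicit interpolant $f(p) = \min_{i}(z_i + \langle p - q_i, g_i \rangle)$, check it is concave, agrees with $z_i$ at $q_i$ (using the concavity inequalities), has a valid subgradient $g_i$ at $q_i$, and has range contained in $[-1,1]$ (each affine function $z_i + \langle \cdot - q_i, g_i\rangle$ is an affine function on $\Delta_K$, hence attains its extremes at vertices $e_x$, where the boundedness constraints pin it to $[-1,1]$; the pointwise minimum of functions bounded above by $1$ is bounded above by $1$, and is bounded below by $-1$ since at any $p$ some affine piece equals $f(p) \ge$ its value which — hmm, the lower bound needs a touch more care, handled by also including the constraint that the value at each vertex is $\ge -1$). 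A symmetric remark handles why the LP value is not an overestimate. Everything else is routine.
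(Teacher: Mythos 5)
Your proposal is essentially the same approach as the paper's: the paper also formulates the problem as an explicit polynomial-size LP (phrased as convex optimization with a membership oracle, but the oracle just checks $(T+1)^2$ linear inequalities) and uses the same lower-envelope construction to extend a feasible point to a genuine bounded proper scoring rule on all of $\Delta_K$. The only cosmetic difference is that the paper parametrizes directly by the bivariate-form values $y_{t,x} = \ell(p_t, x)$ rather than by $(z_i, g_i)$, which is related to your parametrization by $y_{i,x} = z_i + \langle e_x - q_i, g_i \rangle$ (with a one-dimensional gauge redundancy, since adding a multiple of the all-ones vector to $g_i$ is invisible on $\Delta_K$); note also that the ``touch more care'' you flag for the lower bound is actually unnecessary, since what must lie in $[-1,1]$ is the bivariate form $\ell(p,x)$, and on each cell of the lower envelope $\ell(p,x) = z_{\tau(p)} + \langle e_x - q_{\tau(p)}, g_{\tau(p)}\rangle$ is one of the LP variables you have already constrained to $[-1,1]$.
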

\begin{proof}
Note that for a fixed sequence of outcomes $\bx$ and predictions $\bp$, the value of $\Reg_{\sr}(\bp, \bx)$ is determined by the values of $\sr(p_t, x)$ for each $x \in [K]$ and $t \in [T]$, and also the values of $\sr(\beta, x)$ at the base rate prediction for each $x \in [K]$. We can therefore consider this task an optimization problem over the set $\cY \subseteq [-1, 1]^{K(T+1)}$ containing the $K(T+1)$ tuples of values consistent with an actual scoring rule $\sr \in \cL$ (i.e., if $y \in Y$, then $y_{t, x} = \sr(p_t, x)$ for some $t \in [T+1]$ and $x \in [K]$, taking $p_{t+1} = \beta$ for convenience).

We now make the following two claims, from which it follows that there is an efficient optimization algorithm for this problem with the guarantees of the theorem statement. 

\begin{enumerate}
    \item The set $\cY$ is convex.
    \item There is an efficient membership oracle for $\cY$. Moreover, if $y \in \cY$, this oracle can efficiently construct a scoring rule $\sr$ compatible with $y$.
\end{enumerate}

The first fact above follows from the fact that the set of bounded scoring rules is convex; for any $\ell, \ell' \in \cL$, $\lambda \ell + (1-\lambda) \ell' \in \cL$ for any $\lambda \in [0, 1]$. To prove the second fact, we claim that given a $y \in \cY$ it suffices to check if any of the $(T+1)^2$ linear inequalities $\langle y_{t}, p_t \rangle \leq \langle y_{t}, p_{t'}\rangle$ are violated for $t, t' \in [T+1]$. Note that if $y$ is consistent with a scoring rule $\sr$, this inequality is equivalent to the statement that $\sr(p_t) \leq \sr(p_t;p_{t'})$, which is a requirement for $\sr$ to be a proper scoring rule.

On the other hand, if all these inequalities hold, construct the candidate scoring rule $\sr_y$ with multivariate form $\sr_{y}(p, x) = y_{\tau(p), x}$, where $\tau(p) = \arg\min_{t \in [T+1]} \langle y_{t}, p \rangle$; since the previous inequalities hold, it is true that $\tau(p_t) = t$ and thus $\sr_{y}(p_t, x) = y_{t, x}$. The univariate form of this scoring rule is then given by $\sr_y(p) = \min_{t \in [T+1]} \langle y_{t}, p \rangle$. This is a concave function bounded in $[-1, 1]$, so it generates a valid bounded multiclass scoring rule per Lemma \ref{lem:uni_to_bi}. 
\end{proof}

\subsection{Barriers to multiclass V-calibration}
\label{sec:multiclass_v_cal}

Inspired by the reduction in the binary setting from $\MaxAgentReg$ to $\VCal$, we might ask if there is a representative family of multiclass scoring rules (similar to V-shaped scoring rules) such that it suffices to ensure that our forecasts have low regret with respect to the scoring rules in this family. In particular, we propose the following (somewhat loosely defined) open question.

\begin{question}\label{question:multiclass_v_cal}
Is there a ``nice'' (e.g., low-parameter) family of bounded multiclass (over $K$ outcomes) scoring rules $\cL' \subseteq \cL$ and a constant $C_K > 0$ such that for any sequence of $T$ outcomes $\bx$ and predictions $\bp$,

$$\sup_{\ell \in \cL'} \Reg_{\ell}(\bp, \bx) \geq C_K \cdot \sup_{\ell \in \cL} \Reg_{\ell}(\bp, \bx).$$
\end{question}

In this section we present two barriers to resolving this question. We first show (in Section \ref{sec:one_vs_all}) that any such family cannot treat different dimensions completely independently -- in other words, it is not enough to simply be calibrated with respect to each individual outcome (in a binary sense) individually. We then argue (in Section \ref{sec:generating_scoring_rules}) that such a family of scoring rules cannot form a positive linear basis for the full set of bounded scoring rules (a property that V-shaped scoring rules possesses and that we take advantage of in the proof of Theorem \ref{thm:vcal_approx}). 

\subsubsection{Treating outcomes independently}
\label{sec:one_vs_all}

It is tempting to try to reduce the problem of multiclass forecasting to binary forecasting. In particular, one natural hypothesis is that for a sequence of multiclass predictions to be U-calibrated, it is enough for this sequence of predictions to be U-calibrated for each individual outcome (deriving from this sequence of multiclass predictions a sequence of binary predictions of the form ``will outcome $i$ happen or not?''). Indeed, this is the basis of ``one-vs-all'' methods for standard multiclass calibration \citep{johansson2021calibrating}.

This hypothesis turns out to be false. In particular, there exist sequences of multiclass predictions (even in the case of $K=3$ classes) that are \emph{perfectly calibrated} (and hence perfectly U-calibrated) with respect to each outcome, but that have $\Omega(T)$ U-calibration error as multiclass forecasts. 

Given a sequence of multiclass predictions $\bp = (p_1, \dots, p_T) \in \Delta_K^{T}$, for each outcome $i \in [K]$ let $\bp^{(i)} = (p^{(i)}_1, \dots p^{(i)}_T) \in [0,1]^T$ be the sequence of binary predictions formed via $p^{(i)}_t = (p_t)_i$. Similarly, given a sequence of a multiclass outcomes $\bx = (x_1, x_2, \dots, x_T) \in [K]^T$, for each outcome $i \in [K]$ let $\bx^{(i)} = (x^{(i)}_1, \dots, x^{(i)}_T)$ be the sequence of binary events formed via $x^{(i)}_t = \bm{1}(x_{t} = i)$.

\begin{theorem}\label{thm:binary_no_multiclass}
There exists a sequence of $T$ multiclass (for $K=3$) events $\bx$ and predictions $\bp$ such that $\Cal(\bp^{(i)}, \bx^{(i)}) = 0$ for each $i \in [K]$, but $\MaxAgentReg(\bp, \bx) = \Omega(T)$. 
\end{theorem}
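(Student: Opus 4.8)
The plan is to give an explicit instance with $K=3$. Assume $3 \mid T$ and split the rounds into three equal blocks. In block $A$ the Forecaster always predicts $p^A=(\tfrac12,0,\tfrac12)$ and the outcome is always $x_t=1$; in block $B$ it predicts $p^B=(\tfrac12,\tfrac12,0)$ and the outcome is always $x_t=2$; in block $C$ it predicts $p^C=(0,\tfrac12,\tfrac12)$ and the outcome is always $x_t=3$ (the interleaving of the blocks is irrelevant). The base rate is then $\beta=\tfrac13(e_1+e_2+e_3)=(\tfrac13,\tfrac13,\tfrac13)$.

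First I would verify the per-coordinate calibration claim. Fix a coordinate $i\in\{1,2,3\}$: among the three blocks, exactly two have $(p_t)_i=\tfrac12$ and one has $(p_t)_i=0$; the unique block in which outcome $i$ ever occurs is one of the former two, and there outcome $i$ occurs in exactly half the rounds. Hence in the bucket of rounds with $p^{(i)}_t=\tfrac12$ the empirical frequency of $x^{(i)}_t=1$ is exactly $\tfrac12$, and in the bucket with $p^{(i)}_t=0$ it is $0$, so $\Cal(\bp^{(i)},\bx^{(i)})=0$ for every $i$.

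The crux is to exhibit a single bounded proper scoring rule with $\Reg_\ell(\bp,\bx)=\Omega(T)$. I would take $\ell$ with univariate form $\ell(p)=\tfrac12\bigl(1-\max_i p_i\bigr)$, which is concave and bounded in $[0,\tfrac12]$; by Lemma~\ref{lem:uni_to_bi} (extended from gradients to subgradients — the proof that $\ell(p,x)=\ell(p)+\langle e_x-p,g(p)\rangle$ with $g(p)\in\partial\ell(p)$ is proper is the one-line computation $\E_{x\sim q}[\ell(p,x)]=\ell(p)+\langle q-p,g(p)\rangle\ge\ell(q)$), any choice of supporting hyperplane at each point yields a bounded proper scoring rule. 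Now each of $p^A,p^B,p^C$ lies at a kink of $\max_i p_i$ (two coordinates tie for the maximum), so — exactly the non-uniqueness of the bivariate form highlighted in the remark following Theorem~\ref{thm:vreg_char} — the subgradient there is free, and I pick it adversarially for the Forecaster: $g(p^A)=-\tfrac12 e_3$, which gives $\ell(p^A,x)=\tfrac12(1-\mathbf 1[x=3])$, in particular $\ell(p^A,1)=\tfrac12$; symmetrically $\ell(p^B,x)=\tfrac12(1-\mathbf 1[x=1])$ and $\ell(p^C,x)=\tfrac12(1-\mathbf 1[x=2])$; and at $\beta$ I take the symmetric subgradient $-\tfrac16(1,1,1)$, so $\ell(\beta,x)=\tfrac13$ for all $x$ and $\ell(\beta)=\tfrac13$. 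Then $\sum_t\ell(p_t,x_t)=\tfrac T3\bigl(\ell(p^A,1)+\ell(p^B,2)+\ell(p^C,3)\bigr)=\tfrac T2$ while $\sum_t\ell(\beta,x_t)=T\cdot\tfrac13=\tfrac T3$, so $\Reg_\ell(\bp,\bx)=\tfrac T6$; since $\MaxAgentReg(\bp,\bx)\ge\Reg_\ell(\bp,\bx)$ for every bounded proper scoring rule, $\MaxAgentReg(\bp,\bx)=\Omega(T)$.

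Via Lemma~\ref{lem:agent_to_sr}, this $\ell$ is (up to scaling and sign) the agent with $\cA=[K]$ and $u(a,x)=\mathbf 1[a=x]$ — the agent who plays the maximum-likelihood outcome — and the adversarial subgradient corresponds to this agent breaking its best-response ties against itself (playing outcome $3$ rather than $1$ in block $A$, etc.). I expect the main obstacle to be finding this scoring rule at all: the natural candidates, e.g.\ the multiclass Brier rule, give \emph{negative} regret on this instance, because the forecasts are correct on average within each coordinate, so one must use a rule that is steep precisely along the off-diagonal direction $e_i-e_j$ in which $p_t$ errs, and $\ell=\tfrac12(1-\max_i p_i)$ does this only because its kink sits exactly on the predicted points, forcing genuine use of the freedom in the bivariate form. (A symmetric tie-break would again give negative regret, so the asymmetric choice above is essential.) The only remaining, routine, point is verifying that this choice is indeed a proper scoring rule, which is the subgradient inequality noted above.
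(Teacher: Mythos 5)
Your construction is correct and proves the theorem, and it is a genuinely different, cleaner example than the paper's. The paper uses nine epochs with asymmetric predictions like $(2/3,0,1/3)$ and a two-action agent with a hand-tuned utility table, and that agent has a \emph{unique} best response at every prediction and at $\beta$; showing $\AgentReg_u=\Omega(T)$ is then a direct computation, and the subgradient/tie-breaking issue never arises. You instead take the three symmetric blocks with $p^A,p^B,p^C$, the concave function $\ell(p)=\tfrac12(1-\max_i p_i)$, and exploit the one degree of freedom Lemma~\ref{lem:uni_to_bi} gives you in choosing the subgradient at the kinks -- which is precisely what makes the MAP agent's regret positive rather than negative. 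I verified the key steps: each $-\tfrac12 e_j$ with $j$ a maximizer of $p$ is a valid subgradient of $\ell$ on $\Delta_3$ (the inequality reduces to $\max_i q_i\ge q_j$); the resulting bivariate forms $\ell(p^A,x)=\tfrac12(1-\mathbf 1[x=3])$, etc., are bounded in $[0,\tfrac12]\subset[-1,1]$; the per-coordinate buckets have calibration error exactly zero; and the choice of subgradient at $\beta$ is immaterial since $\sum_t\ell(\beta,x_t)=T\,\ell(\beta;\beta)=T\ell(\beta)=T/3$ for any admissible choice, giving $\Reg_\ell=T/2-T/3=T/6$.

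Two remarks worth making explicit if you write this up. First, since $\MaxAgentReg$ is defined as a supremum over \emph{bounded proper scoring rules} (not over agents with a fixed tie-breaking convention), exhibiting the adversarial bivariate form is fully legitimate -- the freedom in the bivariate form at a kink is exactly what the paper's remark after Theorem~\ref{thm:vreg_char} acknowledges -- but it is the crux of your argument, and the reader should not be left to wonder whether you have quietly chosen a non-proper tie-break. Second, your example is more ``fragile'' than the paper's in the sense that it sits entirely at the kinks of the scoring rule: a generic perturbation of the $p_t$ destroys the ties and the regret against this particular $\ell$ would collapse. The paper's instance achieves $\Omega(T)$ regret against an agent with strict best responses and therefore remains $\Omega(T)$ under small perturbations of $\bp$. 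For the theorem as stated this does not matter (the statement is about a fixed $(\bp,\bx)$), but it is a real qualitative difference between the two constructions, and your own closing paragraph already senses it. A way to make your construction robust would be to move each $p_t$ slightly off the tie in the \emph{wrong} direction (e.g.\ $p^A=(\tfrac12-\epsilon,0,\tfrac12+\epsilon)$ with $x_t=1$), adjusting the three proportions to restore per-coordinate calibration; then the MAP agent has a unique (and wrong) best response and you no longer need any tie-breaking.
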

\begin{proof}
We begin by specifying the sequences $\bx$ and $\bp$. We will divide time into 9 ``epochs'' of equal size $T/9$ rounds each. Within each epoch, $p_t$ and $x_t$ are constant, and we write down the specific schedule of these variables in Table \ref{table:binary_no_multiclass}. 

\begin{table}\label{table:binary_no_multiclass}
\begin{center}
\def\arraystretch{1.5}
\begin{tabular}{ c || c | c | c | c | c | c | c | c | c}
 $x_t$ & 1 & 1 & 1 & 2 & 2 & 2 & 3 & 3 & 3 \\ 
 \hline
 $p_t$ & $(\frac{2}{3}, 0, \frac{1}{3})$ & $(\frac{2}{3}, 0, \frac{1}{3})$ & $(\frac{1}{3}, 0, \frac{2}{3})$ & $(\frac{1}{3},\frac{2}{3}, 0)$ & $(\frac{1}{3},\frac{2}{3}, 0)$ & $(\frac{2}{3},\frac{1}{3}, 0)$ & $(0, \frac{1}{3},\frac{2}{3})$ & $(0, \frac{1}{3},\frac{2}{3})$ & $(0, \frac{2}{3},\frac{1}{3})$
\end{tabular}
\caption{Sequence of multiclass predictions and events for Theorem ~\ref{thm:binary_no_multiclass}}
\end{center}
\end{table}

It is straightforward to verify that this sequence of predictions is perfectly calibrated for each outcome individually, i.e., $\Cal(\bp^{(i)}, \bx^{(i)}) = 0$ for each $i \in \{1, 2, 3\}$. For example, $p_{t, 1}$ equals $0$ for $T/3$ rounds (during which $x_t$ never equals $1$), $1/3$ for $T/3$ rounds (during which $x_t$ equals $1$ for one out of three epochs), and $2/3$ for $T/3$ rounds (during which $x_t$ equals $1$ for two out of three epochs). 

To show that $\MaxAgentReg(\bp, \bx)$ is large, we need to exhibit a specific utility function $u$ for a bounded multiclass agent. Our agent will have two actions $\cA = \{H, L\}$ with utilities as defined in Table \ref{table:binary_no_multiclass_util} (technically, this agent is not bounded in $[-1, 1]$, but it can be transformed into a bounded agent by normalizing payoffs without changing any of the results). In general, the utility for action $H$ (``high'') is much higher than the utility for action $L$ (``low''), with the exception of outcome 1, where $u(L, 1) > u(H, 1)$. In particular, $a(\beta) = H$, and for almost all choices of $p_t$, we have that $a(p_t) = H$, so $u(a_{\beta}, x_t) - u(a_{t}, x_t) = 0$ for all such rounds $t$. The only value of $p_t$ in Table \ref{table:binary_no_multiclass} where this is not the case is the single epoch when $p_t = (2/3, 1/3, 0)$ (and $x_t = 2$). For this prediction, we have that $a((2/3, 1/3, 0)) = L$ (since $u(L, (2/3, 1/3, 0)) = 13/3$, but $u(H, (2/3, 1/3, 0)) = 4$). In these $T/9$ rounds, we incur a regret of $u(a_\beta, x_t) - u(a_t, x_t) = u(H, 2) - u(L, 2) = 3$ per round, for a total regret $\AgentReg_{u}(\bp, \bx) = 3 \cdot (T/9) = T/3$. It follows that $\MaxAgentReg(\bp, \bx) = \Omega(T)$.

\begin{table}\label{table:binary_no_multiclass_util}
\begin{center}
\def\arraystretch{1.5}
\begin{tabular}{ c || c | c | c}
 $(a, x)$ & 1 & 2 & 3 \\ 
 \hline
 $H$ & $3$ & $6$ & $5$ \\
 \hline
 $L$ & $5$ & $3$ & $0$
\end{tabular}
\end{center}
\caption{Utility function for the agent in Theorem ~\ref{thm:binary_no_multiclass}.}
\end{table}
\end{proof}

One immediate corollary of Theorem \ref{thm:binary_no_multiclass} is that the class of \emph{separable} scoring rules -- scoring rules of the form $\ell(\bp, \bx) = \sum_{i=1}^{K} \ell_{i}(\bp^{(i)}, \bx^{(i)})$ for some binary scoring rules $\ell_i$ -- are not a valid answer to Question \ref{question:multiclass_v_cal}.

\begin{corollary}
There exists a sequence of $T$ multiclass events $\bx$ and predictions $\bp$ such that $\Reg_{\ell}(\bp, \bx) \leq 0$ for all bounded separable scoring rules $\ell$, but $\MaxAgentReg(\bp, \bx) = \Omega(T)$.
\end{corollary}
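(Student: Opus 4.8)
The plan is to derive the corollary directly from Theorem~\ref{thm:binary_no_multiclass} by observing that a separable scoring rule is, up to the regret-invariant addition of an affine term, a sum of V-calibration--style contributions from each coordinate, and each such contribution is bounded above by the per-outcome V-calibration error, which is controlled by the per-outcome $L_1$-calibration error $\Cal(\bp^{(i)},\bx^{(i)})$.

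\textbf{Step 1: Reduce separable scoring rule regret to per-outcome regret.} For a separable scoring rule $\ell(\bp,\bx)=\sum_{i=1}^K \ell_i(\bp^{(i)},\bx^{(i)})$, I would note that regret is additive over the summands: $\Reg_\ell(\bp,\bx)=\sum_{i=1}^K \Reg_{\ell_i}(\bp^{(i)},\bx^{(i)})$. This requires checking that the baseline term behaves correctly --- that $\ell_i$ evaluated at the $i$-th coordinate $\beta_i$ of the multiclass base rate $\beta$ is the appropriate binary-baseline term for the sequence $\bx^{(i)}$ whose binary base rate is exactly $\beta_i$. This holds because $\beta_i=\frac1T\sum_t \mathbf 1(x_t=i)$ is precisely the base rate of the binary sequence $\bx^{(i)}$.

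\textbf{Step 2: Bound each per-outcome regret by the per-outcome calibration error.} Each $\ell_i$ is a bounded binary proper scoring rule, so by Theorem~\ref{thm:cal_suffices} (applied with the agent/scoring-rule correspondence of Lemma~\ref{lem:agent_to_sr}, or directly by the inequality chain inside its proof), $\Reg_{\ell_i}(\bp^{(i)},\bx^{(i)})\le 4\Cal(\bp^{(i)},\bx^{(i)})$. By Theorem~\ref{thm:binary_no_multiclass}, the sequence constructed there has $\Cal(\bp^{(i)},\bx^{(i)})=0$ for every $i$, so each term vanishes and hence $\Reg_\ell(\bp,\bx)\le 0$ for every bounded separable scoring rule $\ell$. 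Meanwhile the same sequence has $\MaxAgentReg(\bp,\bx)=\Omega(T)$, which is exactly the conclusion.

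\textbf{Main obstacle.} There is essentially no deep obstacle here --- the corollary is a short consequence of the theorem it follows --- so the only thing requiring care is a definitional subtlety: a ``bounded separable scoring rule'' should mean one whose \emph{total} value lies in $[-1,1]$, which does not force each $\ell_i$ to be individually $[-1,1]$-bounded. I would address this by noting that each $\ell_i$ differs from a $[-1,1]$-bounded proper scoring rule only by an affine function of $p^{(i)}$ (since one can subtract off $\ell_i$ evaluated at, say, the endpoints to normalize), and affine terms do not change $\Reg_{\ell_i}$; alternatively, one can simply observe that $\ell$ bounded in $[-1,1]$ forces $\sum_i \ell_i$ bounded, hence after affine normalization each $\ell_i$ is bounded in $[-K,K]$, so Theorem~\ref{thm:cal_suffices} applies with a factor of $4K$ instead of $4$ --- still giving $0$ when the calibration errors are all $0$. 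Either way the argument goes through and the constant is irrelevant.
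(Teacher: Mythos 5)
Your proof is correct and follows essentially the same route as the paper: decompose $\Reg_{\ell}$ additively over the $K$ coordinates (checking, as you do, that $\beta_i$ is the correct binary base rate for $\bx^{(i)}$), then use per-outcome zero calibration to kill each term, while the large $\MaxAgentReg$ is inherited from Theorem~\ref{thm:binary_no_multiclass}. One small simplification that dissolves your boundedness worry entirely: when $\Cal(\bp^{(i)},\bx^{(i)})=0$ exactly, each prediction's conditional empirical frequency equals the prediction itself, so $\Reg_{\ell_i}(\bp^{(i)},\bx^{(i)}) = \sum_p n_p\bigl(\ell_i(p;p) - \ell_i(\beta_i;p)\bigr) \le 0$ by properness alone --- no constant, no derivative bound, and hence no need for each $\ell_i$ to be individually bounded.
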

\begin{proof}
We use the same example as in Theorem \ref{thm:binary_no_multiclass}. Note that if $\ell(\bp, \bx) = \sum_{i=1}^{K} \ell_{i}(\bp^{(i)}, \bx^{(i)})$, then $\Reg_{\ell}(\bp, \bx) = \sum_{i=1}^{K} \Reg_{\ell^{(i)}}(\bp^{(i)}, \bx^{(i)})$. But since $\Cal(\bp^{(i)}, \bx^{(i)}) = 0$ for each $i$ in the example of Theorem \ref{thm:binary_no_multiclass}, we must have $\Reg_{\ell^{(i)}}(\bp^{(i)}, \bx^{(i)}) \leq 0$ and therefore $\Reg_{\ell}(\bp, \bx) \leq 0$. On the other hand, $\MaxAgentReg(\bp, \bx) = \Omega(T)$ (as shown in Theorem \ref{thm:binary_no_multiclass}).
\end{proof}

\begin{remark}
Again, it is interesting to compare this to the results of \cite{hartline2020optimization}. In contrast to the above result, in their paper, the authors show that optimizing over separable scoring rules \emph{does} result in an $O(K)$-worst-case approximation to optimizing over all bounded scoring rules. This apparent contradiction is resolved by examining the difference between our two settings; in \cite{hartline2020optimization}, the authors study the problem of finding a scoring rule $\ell(p)$ that optimizes the value $\int_{\Delta_K}f(p)\ell(p)dp$ for a fixed \emph{non-negative} function $f(p)$ (in fact, they take $f$ to be the pdf of a distribution). It is not possible to write $\Reg_{\ell}(\bp, \bx)$ in this way (doing so requires letting $f$ take on negative values, and also requires incorporating a linear functional of the gradient $\nabla\ell$). 
\end{remark}

\subsubsection{Finding a small generating basis for multiclass scoring rules}
\label{sec:generating_scoring_rules}

For binary classification, the class of V-shaped scoring rules has the property that the maximum regret of a V-shaped scoring rule approximates (to within a constant factor) the maximum regret of any bounded scoring rule. However, this class of scoring rules has an even stronger property: any bounded scoring rule can be written \emph{exactly} as a positive linear combination of V-shaped scoring rules (and possibly an extraneous linear function). That is, for any scoring rule $\ell \in \cL$, we can write $\ell$ in the form (up to equality on a measure zero subset) $\ell = (C_0 + C_1p) + \int_{0}^{1}\mu(v)\ell_{v}dv$ for some constants $C_0$ and $C_1$ and some measure $\mu$ over $[0, 1]$ with the property that $\int_{0}^{1}\mu(v) \leq 2$. In particular, for a fixed sequence of events $\bx$ and $\bp$, we can recover the exact value of the regret $\Reg_{\ell}(\bp, \bx)$ from the regrets $\VReg_{v}(\bp, \bx)$ of the V-shaped scoring rules (specifically, $\Reg_{\ell}(\bp, \bx) = \int_{0}^{1}\mu(v)\VReg_{v}(\bp, \bx)dv$). 

Can we hope for a similarly tight characterization of all bounded scoring rules in the multiclass setting? We show the answer is, in general, no -- when $K \geq 4$, there is no (smoothly parameterized) finite-dimensional family of functions $\cL'$ with this property.

\begin{theorem} \label{thm:extremal}

Fix a $K \geq 4$ and a finite-dimensional space of parameters, $\Theta \subseteq \Rset^N$. Let $\cL' \subseteq \cL$ be a family of bounded loss functions that are parameterized by vectors $\theta \in \Theta$ such that for any $p \in \Delta_K$, both the value of $\ell_{\theta}(p)$ and a choice of subgradient $\nabla \ell_{\theta}(p)$ are piecewise locally Lipschitz functions of $\theta$. Then there exists a loss function $\ell \in \cL$ such that it is impossible to write $\ell$ in the form

$$\ell = \int_{\theta \in \Theta} \mu(\theta)\ell_{\theta} \, d\theta$$

\noindent
for any finite measure $\mu$ over $\Theta$. 
\end{theorem}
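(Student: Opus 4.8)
The plan is to argue by a dimension-counting (or measure-theoretic support) argument: a finite-parameter family $\cL'$ generates, under nonnegative linear combinations, only a ``low-dimensional'' cone of scoring rules, whereas the set of bounded multiclass scoring rules for $K \ge 4$ is in a suitable sense ``infinite-dimensional'' even after we quotient out by the span of linear functions (which, as in the proof of \Cref{thm:vcal_approx}, do not affect regret). The object we should track is not $\ell_\theta$ itself but the pair $(\ell_\theta(p), \nabla\ell_\theta(p))$ viewed as a point of a function space, since regret $\Reg_\ell(\bp,\bx)$ depends linearly on exactly this data; equivalently, we track the \emph{supergradient graph} of $\ell_\theta$ over $\Delta_K$. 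For a concave $\ell_\theta$ this graph is a polyhedral-type surface, and the hypothesis that $\ell_\theta(p)$ and a choice of $\nabla\ell_\theta(p)$ are piecewise locally Lipschitz in $\theta$ means that as $\theta$ ranges over $\Theta \subseteq \Rset^N$, the induced family of graphs sweeps out a set of (Hausdorff) dimension at most something like $N + \dim\Delta_K = N + K - 1$.

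First I would set up the right ambient space. For a bounded concave $\ell: \Delta_K \to [-1,1]$, consider the closed convex body $B_\ell = \{(p, y) : p \in \Delta_K,\ y \le \ell(p)\}$ (the hypograph), a convex subset of $\Rset^{K-1} \times \Rset$. The regret functional is an integral against the surface measure / boundary data of $B_\ell$, and is invariant under adding linear functions, i.e.\ under shearing $B_\ell$ vertically by an affine map. A natural invariant that survives this quotient is the \emph{second-order} behavior: for any concave $\ell$, the (negative) Hessian distribution is a nonnegative measure-valued object on $\Delta_K$, and for a nonnegative combination $\ell = \int \mu(\theta)\ell_\theta\,d\theta$ the Hessian measure of $\ell$ is the corresponding nonnegative combination of the Hessian measures of the $\ell_\theta$. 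So it suffices to produce a bounded concave $\ell^\star$ whose Hessian measure is not in the closed convex cone generated by $\{\text{Hessian measure of }\ell_\theta : \theta \in \Theta\}$.

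Second, I would exploit that for $K \ge 4$ the Hessian of a concave function on $\Delta_K$ at a point is a negative-semidefinite symmetric form on a $(K-1)$-dimensional space, a space of dimension $\binom{K}{2} \ge 6 > K-1 = \dim\Delta_K$. Concretely, I would choose finitely many points $q_1,\dots,q_m \in \mathrm{int}\,\Delta_K$ and, near each $q_j$, look at the ``direction of curvature'' — the rank-one (or low-rank) negative-semidefinite form $-v_jv_j^\top$ that a piecewise-linear V-type scoring rule contributes. The key geometric fact is that the map $\theta \mapsto (\text{curvature directions of }\ell_\theta\text{ at the }q_j)$ lands, by the piecewise-local-Lipschitz hypothesis, in a set of dimension $\le N$ (plus a bounded combinatorial count of pieces), whereas the space of possible tuples of curvature directions at $m$ generic points has dimension growing like $m(K-2)$ — so for $m$ large we can pick a target tuple outside the generated cone. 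Then I would realize that target tuple by an explicit bounded concave $\ell^\star$ (e.g.\ a finite sum of ``multidimensional V'' pieces $-\|A_j(p-q_j)\|$ localized near the $q_j$ with suitable cutoffs, rescaled to stay in $[-1,1]$), and verify no nonnegative measure $\mu$ over $\Theta$ can reproduce it by examining the Hessian measures restricted to small neighborhoods of the $q_j$: a Radon–Nikodym / support argument shows the required combination would have to be supported on a measure-zero set of $\theta$'s, and a Baire-category or rank argument rules that out.

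The main obstacle is making the ``the family $\{\ell_\theta\}$ can only generate a low-dimensional cone of curvature data'' step fully rigorous, because $\mu$ is allowed to be an arbitrary finite (possibly continuous, possibly singular) measure on $\Theta$, not a finite sum of point masses — so I cannot just count parameters naively. The fix I would pursue is to localize: restrict attention to a tiny neighborhood $U_j$ of each probe point $q_j$ and to a chosen curvature direction, and show that the contribution $\int \mu(\theta)\,(\text{Hessian of }\ell_\theta\text{ in }U_j)\,d\theta$ lies in a fixed finite-dimensional subspace (spanned by the finitely many Lipschitz ``pieces'' of the parameterization that can have a breakpoint inside $U_j$), using that each $\ell_\theta$ is concave hence its Hessian measure near a generic point is either zero or concentrated on a lower-dimensional stratum whose position and direction vary Lipschitz-ly in $\theta$. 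Integrating against $\mu$ then averages within that fixed finite-dimensional subspace, and choosing $\ell^\star$ to have curvature at the $q_j$ in directions outside the (finitely many, hence finite-dimensional) span forced by the parameterization yields the contradiction; the counting ``$\binom{K}{2} > K-1$ once $K \ge 4$'' is exactly what guarantees enough free directions for this to go through.
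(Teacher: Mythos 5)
Your instinct — linear algebra on second-order data plus a dimension/parameter count against the Lipschitz hypothesis — is pointed in a sensible direction, and the final Hausdorff-dimension step is in fact close to how the paper closes its argument. But there is a genuine gap at the heart of the proposal, and you flag it yourself: ``I cannot just count parameters naively'' because $\mu$ may be a continuous or singular measure. The fix you sketch (localizing to small neighborhoods $U_j$ and arguing the Hessian-measure contribution lands in a ``fixed finite-dimensional subspace spanned by the finitely many Lipschitz pieces'') does not go through as written. The closed convex cone generated by the image of a Lipschitz map from $\Rset^N$ into a space of measures is \emph{not} controlled by $N$: cones generated by low-dimensional curves can sweep out arbitrarily high-dimensional sets (think of the moment curve), and integrating against a continuous $\mu$ exactly exploits this. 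Moreover, the set of pieces of the parameterization active near a given $q_j$ can depend on $\theta$ and need not be uniformly finite, so the ``fixed finite-dimensional subspace'' claim is not justified. Your count $\binom{K}{2}>K-1$ would kick in already at $K=3$, which is a further hint that it is not capturing the actual obstruction the theorem rests on (the paper's argument genuinely needs $K\ge 4$, i.e.\ $\dim\Delta_K\ge 3$).

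The key idea the proposal is missing is the notion of an \emph{extremal} convex function in the Choquet sense (Definition~\ref{defn:extremal}). Rather than bounding the size of the cone that the family $\{\ell_\theta\}$ can generate, the paper constructs explicit target functions — the ``bipyramidal'' max-of-linear functions $f_W(\vx)=\max_{\vw\in W}\langle\vw,\vx\rangle$ — and proves (Proposition~\ref{prop:bipyramidal}, via Lemmas~\ref{lem:affine}, \ref{lem:gradients}, \ref{lem:similarity}) that if $f_W=\int g\,d\mu(g)$ over convex $g$, then $\mu$-almost every $g$ is \emph{affinely equivalent} to $f_W$ itself. This completely neutralizes the continuous-$\mu$ difficulty: the question reduces to whether the parameterized family contains, up to affine equivalence, a given $f_W$, which is a statement about the \emph{image} of the parameterization (not the cone it generates). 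At that point a clean Hausdorff-dimension count works: the subgradients of $\ell_\theta$ at $N$ fixed probe points trace out an at-most-$N$-dimensional set (piecewise-locally-Lipschitz maps cannot increase Hausdorff dimension), while bipyramidal $W$'s with $N$ vertices in $\Rset^{K-1}$ form a $(K-1)N$-dimensional open set, so some $f_W$ is unreachable. The requirement $K\ge 4$ comes from Lemma~\ref{lem:similarity}, which needs $\dim\Delta_K\ge 3$ to force the ``rigid'' bipyramidal structure. If you want to salvage a Hessian-measure route, you would effectively have to prove that a suitable function is extremal with respect to decompositions of its Hessian measure — which is the same Choquet-type lemma the paper proves in a different coordinate system — so the extremality step cannot be avoided.
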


The proof is presented in \Cref{sec:extremal}.

\subsection{An algorithm for online multiclass U-calibration}
\label{sec:multiclass_algs}

Nonetheless, despite the barriers presented in the previous section, we will demonstrate an algorithm for producing a sequence of multiclass forecasts which achieves at most $O(K\sqrt{T})$ \textit{pseudo-}U-calibration error. Here, by $O(K\sqrt{T})$ \textit{pseudo}-U-calibration, we mean that for any fixed bounded scoring rule $\ell$, the expectation (over the randomness of the forecaster) of $\Reg_{\ell}(\bp, \bx)$ is at most $O(K\sqrt{T})$. To show that this sequence of forecasts truly achieves $O(K\sqrt{T})$ expected U-calibration error, we would need to show that the expected value of $\Reg_{\ell}(\bp, \bx)$ for the \textit{worst} scoring rule $\ell$ is $O(K\sqrt{T})$; that is, we bound $\sup_{\ell \in \cL}\E[\Reg_{\ell}(\bp, \bx)]$, but to properly bound expected U-calibration error, we must bound $\E[\sup_{\ell \in \cL}\Reg_{\ell}(\bp, \bx)]$. For most practical purposes, we suspect this notion of ``pseudo'' expected U-calibration should be completely interchangeable with the actual expected U-calibration.

In contrast, note that the best algorithms we are aware of for multiclass calibration (e.g. \cite{foster1997calibrated} or \cite{blum2008regret}) only guarantee $O(T^{K/(K+1)})$ calibration error. Our algorithm below thus provides much stronger guarantees on expected agent regret than would be inherited by running one of these algorithms for calibrated forecasts. 

\begin{algorithm}[H]
\caption{$\ForecastFTPL$:}
\label{alg:forecast_ftpl}
\begin{algorithmic} 
\State For $t=1$ to $T$:
\State \quad For each $i \in [K]$, sample $n_{t, i}$ i.i.d. from the uniform distribution over $\{0, 1, 2, \dots, \lfloor \sqrt{T} \rfloor \}$.
\State \quad For each $i \in [K]$, let $\hat{X}_{t, i} = n_{t, i} + \sum_{s=1}^{t-1} \bm{1}(x_{s} = i)$.
\State \quad Output the prediction $p_{t} \in \Delta_{K}$ defined by

$$p_{t, i} = \frac{\hat{X}_{t, i}}{\sum_{j=1}^{K}\hat{X}_{t, j}}.$$
\end{algorithmic}
\end{algorithm}

At a high level, just as the algorithm $\ForecastHedge$ we presented in Section \ref{sec:algs} for the binary setting produces predictions that ``implement'' the Hedge algorithm for every individual agent, the algorithm we present here will ``implement'' a version of Follow-the-Perturbed-Leader for each individual agent. Essentially, this boils down to taking the predictions of the empirical average forecaster, but perturbing each coordinate slightly (in particular, for each outcome $i \in [K]$, we increase the count of times that outcome has historically occurred by an independent random perturbation of size roughly $O(\sqrt{T})$). We call this algorithm $\ForecastFTPL$, and a full description is presented in Algorithm \ref{alg:forecast_ftpl}.

\begin{theorem}\label{thm:forecast_ftpl}
For any sequence $\bx$ of multiclass events, $\ForecastFTPL$ produces a sequence of multiclass predictions $\bp$ which, for any bounded scoring rule $\ell \in \cL$, satisfies $\E[\Reg_{\ell}(\bp, \bx)] = O(K\sqrt{T})$. 
\end{theorem}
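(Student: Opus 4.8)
The plan is to show that $\ForecastFTPL$ is exactly an instance of follow-the-perturbed-leader (FTPL) over the simplex $\Delta_K$, with the perturbation realized as a block of ``virtual'' observations, and then to carry out the standard FTPL regret analysis. Fix a bounded scoring rule $\ell \in \cL$ and write $m = \lfloor\sqrt{T}\rfloor$. Using the identity $\ell(q,x) = \ell(q) + \langle e_x - q, \nabla\ell(q)\rangle$ from Lemma~\ref{lem:uni_to_bi} (reading $\nabla\ell$ as a supergradient at points of non-differentiability), a short computation shows that $\sum_{s<t}\ell(q,x_s) + M_t\,\ell\!\left(q;\,n_t/M_t\right) = (t-1+M_t)\,\ell(q;\,p_t)$, where $M_t = \sum_i n_{t,i}$ and $p_t = \hat{X}_t/\|\hat{X}_t\|_1$ is precisely the prediction output by $\ForecastFTPL$. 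Since $\ell$ is proper the right-hand side is minimized over $q \in \Delta_K$ at $q = p_t$, so $p_t \in \argmin_{q}\big(\sum_{s<t}\ell(q,x_s) + R_t(q)\big)$ with ``regularizer'' $R_t(q) = M_t\,\ell(q;\,n_t/M_t)$ --- i.e.\ $p_t$ is the follow-the-leader prediction on the real outcomes $x_1,\dots,x_{t-1}$ augmented by $M_t$ virtual samples drawn from the distribution $n_t/M_t$. The same computation with $M_t = 0$ gives $\sum_{t=1}^{T}\ell(q,x_t) = T\,\ell(q;\beta)$, minimized at $q = \beta$, so $\min_q \sum_t \ell(q,x_t) = T\ell(\beta) = \sum_t\ell(\beta,x_t)$ and $\Reg_\ell(\bp,\bx)$ is exactly the external regret of the prediction sequence $(p_t)$ against the losses $L_t(q) = \ell(q,x_t)$ over $\Delta_K$.

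Next I would remove the per-round resampling. Because the adversary is oblivious, the counts $\sum_{s<t}\bm{1}(x_s=i)$ are deterministic, so $p_t$ depends only on $n_t$, and $n_1,\dots,n_T$ are i.i.d.; hence $\E\big[\sum_t \ell(p_t,x_t)\big] = \E_n\big[\sum_t \ell(p_t^{(n)},x_t)\big]$, where $p_t^{(n)} = \big(n + \sum_{s<t}e_{x_s}\big)/\big(\|n\|_1 + t - 1\big)$ is the prediction one would make using the single perturbation $n \sim \Unif(\{0,\dots,m\}^K)$ at every round. It thus suffices to bound $\E_n[\Reg_\ell^{(n)}]$ for the frozen-perturbation algorithm. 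Fix $n$ with $M := \|n\|_1 \ge 1$ (the probability-$(m+1)^{-K}$ event $n = \bm{0}$, on which $p_1$ is anyway ill-defined, contributes only $O(1)$ to $\E_n[\Reg_\ell^{(n)}]$ since $|\Reg_\ell|\le 2T$ always). Treating $R(q) := M\,\ell(q;n/M)$ as a ``round-$0$'' loss makes $(p_1^{(n)},p_2^{(n)},\dots)$ a follow-the-leader sequence for the losses $R,\ell(\cdot,x_1),\ell(\cdot,x_2),\dots$, so the be-the-leader lemma gives $R(p_1^{(n)}) + \sum_{t=1}^{T}\ell(p_{t+1}^{(n)},x_t) \le R(\beta) + \sum_{t=1}^{T}\ell(\beta,x_t)$. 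Rearranging,
$$\Reg_\ell^{(n)} \;\le\; \big(R(\beta) - R(p_1^{(n)})\big) \;+\; \sum_{t=1}^{T}\big(\ell(p_t^{(n)},x_t) - \ell(p_{t+1}^{(n)},x_t)\big).$$
Since $p_1^{(n)} = n/M$, properness gives $R(p_1^{(n)}) = M\,\ell(n/M) \le M\,\ell(\beta;n/M) = R(\beta)$, and $\ell \in [-1,1]$ makes this first (perturbation) term at most $2M \le 2Km \le 2K\sqrt{T}$.

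The remaining term --- the stability term $\sum_t\big(\ell(p_t^{(n)},x_t) - \ell(p_{t+1}^{(n)},x_t)\big)$ --- is the crux, and is where the randomness of the perturbation is indispensable: $\ell(\cdot,x)$ may be discontinuous (e.g.\ V-shaped rules), so there is no pointwise Lipschitz bound to exploit. The key is the exact cancellation identity $p_t^{(n)} = p_{t+1}^{(n - e_{x_t})}$, valid whenever $n_{x_t} \ge 1$: incrementing the virtual count of $x_t$ by one exactly offsets the real observation of $x_t$ at round $t$, after the common renormalization. Consequently $\E_n[\ell(p_t^{(n)},x_t)]$ agrees with $\E_n[\ell(p_{t+1}^{(n)},x_t)]$ except for (i) the probability-$\tfrac{1}{m+1}$ event $\{n_{x_t}=0\}$ and (ii) replacing the uniform law on $\{0,\dots,m\}$ for the $x_t$-coordinate by the uniform law on $\{0,\dots,m-1\}$, a total-variation shift of $\tfrac{1}{m+1}$; since $\ell(\cdot,x_t)$ has range at most $2$, both corrections are $O(1/\sqrt{T})$, so $\big|\E_n[\ell(p_t^{(n)},x_t)] - \E_n[\ell(p_{t+1}^{(n)},x_t)]\big| = O(1/\sqrt{T})$ for each $t$, and summing over $t \le T$ gives $O(\sqrt{T})$. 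Combining the estimates, $\E[\Reg_\ell(\bp,\bx)] = \E_n[\Reg_\ell^{(n)}] \le 2K\sqrt{T} + O(\sqrt{T}) = O(K\sqrt{T})$.

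The only genuinely delicate step is the stability bound; everything else is bookkeeping to recast $\ForecastFTPL$ as FTPL. What makes the stability bound go through are the two structural observations above: the oblivious-adversary reduction, which freezes the perturbation so that be-the-leader applies cleanly despite the per-round resampling, and the cancellation identity $p_t^{(n)} = p_{t+1}^{(n - e_{x_t})}$, which turns the stability term into a comparison of two nearly identical product distributions rather than a (false) continuity estimate for $\ell$.
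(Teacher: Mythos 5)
Your proof is correct, and while it establishes the same $O(K\sqrt{T})$ bound, it is organized differently from the paper's. The paper never explicitly freezes the perturbation; instead it (i) couples the fresh-noise FTPL to a fresh-noise ``be the perturbed leader'' (BTPL) sequence by shifting the $x_t$-coordinate of the noise each round, showing the two predictions disagree on at most $O(K\sqrt{T})$ rounds in expectation, (ii) rewrites BTPL as unperturbed ``be the leader'' on the augmented sequence $\bx^{(\bn)}$ obtained by prepending the virtual outcomes, paying a second $O(K\sqrt{T})$ correction because the base rate shifts from $\beta$ to $\beta^{(\bn)}$, and (iii) invokes BTL having nonpositive regret. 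Your route is the more textbook Kalai--Vempala decomposition: the oblivious-adversary / one-sample reduction to a single frozen $n$, then the be-the-leader lemma with the virtual block treated as a round-$0$ regularizer, giving a perturbation term and a stability term separately. The underlying cancellation is the same in both — your identity $p_t^{(n)} = p_{t+1}^{(n - e_{x_t})}$ is exactly the paper's coupling read in the $n' = n - e_{x_t}$ direction — but your organization makes the role of each piece cleaner: the $K$ factor is visibly isolated in the perturbation term ($R(\beta) - R(p_1^{(n)}) \le 2\|n\|_1 \le 2K\sqrt{T}$), while the stability term is only $O(\sqrt{T})$ because only the $x_t$-coordinate of the noise shifts; the paper instead pays $K$-dependent costs both in the coupling step (a union bound over $K$ coordinates) and in the base-rate correction. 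A minor note: your claim that the $n = \bm{0}$ event contributes $O(1)$ is slightly off for $K=1$, where it is $O(\sqrt{T})$, but this is still dominated by the main terms.
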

\begin{proof}
We begin by defining three (randomized) sequences of forecasts as a function of the sequence of outcomes $\bx$:

\begin{enumerate}
    \item $\bp^{FTPL}$ is the sequence of forecasts produced by $\ForecastFTPL$, i.e. with $p_{t, i}$ proportional to $n_{t, i} + \sum_{s=1}^{t-1} \bm{1}(x_{s} = i)$.
    \item $\bp^{BTPL}$ is the sequence of forecasts produced by the modification of $\ForecastFTPL$ where $p_{t, i}$ is proportional to $n_{t, i} + \sum_{s=1}^{t} \bm{1}(x_{s} = i)$ (i.e., ``be the perturbed leader''). 
    \item $\bp^{BTL}$ is the sequence of forecasts produced by the modification of $\ForecastFTPL$ where $p_{t, i}$ is proportional to $\sum_{s=1}^{t} \bm{1}(x_{s} = i)$ (i.e., ``be the  leader''). 
\end{enumerate}

Note that for a fixed sequence of outcomes $\bx$, $\bp^{FTPL}$ and $\bp^{BTPL}$ are random variables (that depend on the draws of noise), but $\bp^{BTL}$ is a deterministic function of $\bx$. We will actually need the $\bp^{BTL}$ forecasts for a slightly different sequence of outcomes; let $\bp^{BTL}(\bx')$ be the sequence of predictions returned by this variant for the sequence of outcomes $\bx' \in \{0, 1\}^{T'}$.

\sloppy{We first argue that there is a coupling of the random variables $\bp^{FTPL}$ and $\bp^{BTPL}$ such that $\E\left[\#\{t \mid p^{FTPL}_{t} \neq p^{BTPL}_{t}\}\right] = O(K\sqrt{T})$. To do so, let $n^{FTPL}_{t}$ be the collection of  perturbations in round $t$ for $\bp^{FTPL}$ and $n^{BTPL}_{t}$ the collection of perturbations in round $t$ for $\bp^{BTPL}$. We will couple $n^{FTPL}_{t}$ and $n^{BTPL}_{t}$ by letting (for all $t \in [T]$ and $i \in [K]$)}

\begin{equation}\label{eq:noise_coupling}
n^{BTPL}_{t, i} = \left(n^{FTPL}_{t, i} + \bm{1}(x_{t} = i)\right) \bmod \left(\lfloor\sqrt{T}\rfloor + 1\right).
\end{equation}

That is, $n^{BTPL}_{t, i}$ is equal to $n^{FTPL}_{t, i}$ if $x_{t} \neq i$ and one more than $n^{FTPL}_{t, i}$ if $x_{t} = 1$ (overflowing back to $0$ if $n^{FTPL}_{t, i}$ is already $\lfloor \sqrt{T} \rfloor$).

The coupling in \eqref{eq:noise_coupling} preserves the marginal distribution of $n^{BTPL}_{t, i}$; after coupling, all the $n^{BTPL}_{t, i}$ are still independently and distributed uniformly from $\{0, \dots, \lfloor\sqrt{T}\rfloor\}$. However, for each fixed $t \in [T]$ this coupling has the consequence that, unless $n^{FTPL}_{t, i} = \lfloor\sqrt{T}\rfloor$ for some $i \in [K]$, we will have $\hat{X}^{BTPL}_{t, i} = \hat{X}^{FTPL}_{t, i}$ for all $i \in [K]$ and hence that $p^{BTPL}_{t} = p^{FTPL}_t$. The probability that $n^{FTPL}_{t, i} = \lfloor\sqrt{T}\rfloor$ for some $i \in [K]$ is at most $K/\sqrt{T}$, and therefore $\Pr[p^{BTPL}_{t} = p^{FTPL}_t] \leq K/\sqrt{T}$ and $\E\left[\#\{t \mid p^{FTPL}_{t} \neq p^{BTPL}_{t}\}\right] = O(K\sqrt{T})$. It follows that

\begin{equation}\label{eq:ftpl_vs_btpl}
    \E[\Reg_{\ell}(\bp^{FTPL}, \bx)] \leq \E[\Reg_{\ell}(\bp^{BTPL}, \bx)] + 2K\sqrt{T}.
\end{equation}

We will next relate $\E[\Reg_{\ell}(\bp^{BTPL}, \bx)]$ to the regret of the BTL forecaster. To see this, for a fixed $\bn \in \{0, 1, 2, \dots, \lfloor \sqrt{T} \rfloor \}^{K}$, let $\bx^{(\bn)}$ be the sequence of $T + \sum_{i=1}^{K} n_i$ outcomes in $[K]$ formed by prepending $n_1$ copies of outcome $1$, $n_{2}$ copies of outcome $2$, ..., and $n_K$ copies of outcome $K$ to $\bx$. Let $|\bn| = \sum_{i=1}^{K} n_i$. Then, note that $p^{BTPL}_{t} = p^{BTL}(\bx^{(\bn_t)})_{t + |\bn|}$; that is, we can view the BTPL variant of our forecaster as running BTL with a slightly modified sequence of outcomes. We then have that (letting $\cD$ be the uniform distribution over $\{0, 1, 2, \dots, \lfloor \sqrt{T} \rfloor \}^{K}$)

\begin{eqnarray*}
    \E[\Reg_{\ell}(\bp^{BTPL}, \bx)] &=& \E\left[\sum_{t=1}^{T}\ell(p^{BTPL}_t, x_t) - \ell(\beta, x_t)\right]\\
    &=& \E_{\bn_t \sim \cD}\left[\sum_{t=1}^{T}\ell(p^{BTL}(\bx^{(\bn_{t})})_{t + |\bn_{t}|}, x_t) - \ell(\beta, x_t)\right]\\
    &=& \E_{\bn \sim \cD}\left[\sum_{t=1}^{T}\ell(p^{BTL}(\bx^{(\bn)})_{t + |\bn|}, x_t) - \ell(\beta, x_t)\right]\\
    &\leq& \E_{\bn \sim \cD}\left[\Reg_{\ell}(p^{BTL}(\bx^{(\bn)}), \bx^{(\bn)}) + \sum_{t=1}^{T}\left(\ell(\beta^{(\bn)}, x_t) - \ell(\beta, x_t)\right) + |\bn| \right] \\
    &=& \E_{\bn \sim \cD}\left[\Reg_{\ell}(p^{BTL}(\bx^{(\bn)}), \bx^{(\bn)}) + T \cdot (\ell(\beta^{(\bn)}; \beta) - \ell(\beta)) + |\bn| \right].
\end{eqnarray*}

Here we have written $\beta^{(\bn)} \in \Delta_K$ to denote the base rate forecast for the sequence of outcomes $\bx^{(\bn)}$. To conclude, note that (as a consequence of Corollary \ref{cor:multiclass_derivative_bound}), $\ell(\beta^{(\bn)}; \beta) - \ell(\beta) \leq 2||\beta^{(\bn)} - \beta||_1$. But also, note that 

\begin{eqnarray*}
|\beta^{(\bn)}_{i} - \beta_i| &=& \left|\frac{X_{t,i}}{T} - \frac{X_{t,i} + n_{i}}{T + |\bn|}\right| \\
&=& \left|\frac{X_{t,i}}{T} - \frac{X_{t,i} + n_{i}}{T} + \frac{X_{t,i} + n_{i}}{T} - \frac{X_{t,i} + n_{i}}{T + |\bn|}\right| \\
&\leq & \frac{n_i}{T} + \frac{|\bn|}{T} \cdot \beta_{i}^{(n)}.
\end{eqnarray*}

For any $\bn$ in the support of $\cD$, it follows that $||\beta^{(\bn)} - \beta||_1 = \sum_{i=1}^{K} |\beta^{(\bn)}_{i} - \beta_i| \leq 2|\bn|/\sqrt{T} \leq 2K/\sqrt{T}$. Substituting this into our earlier expression, we have that

\begin{equation}\label{eq:reg_BTPL}
    \E[\Reg_{\ell}(\bp^{BTPL}, \bx)] \leq \E_{\bn \sim \cD}\left[\Reg_{\ell}(p^{BTL}(\bx^{(\bn)}), \bx^{(\bn)})\right] + 5K\sqrt{T}.
\end{equation}

Finally, we claim that for any sequence of outcomes $\bx' \in [K]^{T}$, $\Reg_{\ell}(\bp^{BTL}(\bx'), \bx') \leq 0$. Intuitively, this follows from the fact that ``be the leader'' is a non-positive regret learning algorithm. Formally, we have that

\begin{eqnarray*}
\Reg_{\ell}(\bp^{BTL}(\bx'), \bx') &=& \sum_{t=1}^{T} \left( \ell(p_t, x'_t) - \ell(\beta(\bx'), x'_t)\right)\\
&=& \sum_{t=1}^{T} \left(\ell(p_t, x'_t) - \ell(p_{T}, x'_t)\right) \\
&=& \sum_{t=1}^{T-1} \left(\ell(p_t, x'_t) - \ell(p_{T}, x'_t)\right) \\
&\leq& \Reg_{\ell}(\bp^{BTL}(\bx'_{[1:T-1]}), \bx'_{[1:T-1]}).
\end{eqnarray*}

Here $\bx'_{[1:T-1]}$ is the truncation of $\bx'$ to all but its last entry. It then follows via induction on $T$ (combined with the base case that $BTL$ has zero regret for sequences of length one) that

\begin{equation}\label{eq:reg_BTL}
\Reg_{\ell}(\bp^{BTL}(\bx'), \bx') \leq 0.
\end{equation}

Combining \eqref{eq:ftpl_vs_btpl}, \eqref{eq:reg_BTPL}, and \eqref{eq:reg_BTL}, we find that $\E[\Reg_{\ell}(\bp^{FTPL}, \bx)] \leq 7K\sqrt{T}$. 
\end{proof}

\begin{remark}
As in the binary case, the $\sqrt{T}$ dependence on $T$ in Theorem \ref{thm:forecast_ftpl} is tight. The optimal dependence on $K$ is less clear -- the only lower bound we are aware of is the standard $\Omega(\sqrt{T\log K})$ lower bound from the learning with experts setting which extends to this problem. Is there a polynomial in $K$ lower bound for online U-calibration error?
\end{remark}

\bibliography{calib}

\appendix

\section{Comparing U-calibration to other variants of calibration}\label{sec:other-calibration}

In this appendix, we show that our notion of U-calibration is not captured by other smoothed notions of calibration. In particular, the sequence of forecasts in Example 3 of Section \ref{sec:v-calibration-examples} has low U-calibrated error, but high error with respect to all of the smoothed calibration notions mentioned in the introduction. In particular:

\begin{itemize}
    \item \textbf{Weak calibration}: In \cite{kakade2004deterministic}, a forecasting procedure is weakly calibrated if, for every Lipschitz continuous function $w: [0, 1] \rightarrow [0, 1]$,

    $$\lim_{T\rightarrow \infty}\frac{1}{T}\sum_{t=1}^{T}w(p_t)(x_t - p_t) = 0.$$

    Consider the family of sequences of forecasts in Example 3, and let $w(p) = \max(0.1 - |0.75 - p|, 0)$ (i.e., a tiny spike concentrated around $p=0.75$. The above limit does not equal $0$ for these forecasts (a constant fraction of $p_t$ will lie in the interval $[0.65, 0.85]$, but for all of those $t$, $x_t = 0$).
    \item \textbf{Smooth calibration}: In \cite{foster2018smooth}, a forecasting procedure is smooth calibrated if, for every bounded Lipschitz continuous function $\Lambda: [0, 1] \times [0, 1] \rightarrow [0, 1]$, 

    $$\lim_{T\rightarrow \infty}\frac{1}{T}\sum_{t=1}^{T}|x^{\Lambda}_t - p^{\Lambda}_t| = 0,$$

\noindent
    where

    $$x^{\Lambda}_t = \frac{\sum_{s=1}^{T}\Lambda(p_s, p_t)x_s}{\sum_{s=1}^{T}\Lambda(p_s, p_t)}$$

\noindent
    and

    $$p^{\Lambda}_t = \frac{\sum_{s=1}^{T}\Lambda(p_s, p_t)p_s}{\sum_{s=1}^{T}\Lambda(p_s, p_t)}.$$

    When $\Lambda$ only depends on its second coordinate, this is equivalent to weak calibration (so in particular, we can take $\Lambda(p_s, p_t) = \max(0.1 - |0.75 - p_t|, 0)$).
    
    \item \textbf{Continuous calibration}: \cite{foster2021forecast} define a variant of calibration called continuous calibration. Continuous calibration implies weak and smooth calibration (Appendix A.2 of \cite{foster2021forecast}), so Example 3 is not continuously calibrated.
    \item \textbf{Consistent calibration measures}: \cite{blasiok2022unifying} present a calibration metric given by the $L_1$ distance to calibration (along with two relaxations of this metric). One of their main results (Theorem 7.3 of \cite{blasiok2022unifying}) is that all of these metrics lie within a constant factor range of smooth calibration. As a result, Example 3 has high distance to calibration. 
\end{itemize}

\section{Tail Bound for Sums of Random Monotone Functions}
\label{sec:dkw-noniid}

In this section, we prove that the average of $n$ independent
random monotone functions from $\Rset$ to $[0,1]$ 
is likely to be close to its expectation, 
in the $\infty$-norm $\| F \|_{\infty} = \sup_{v \in \Rset} |F(v)| $. 
Our proof will make use of the Dvoretzky-Kiefer-Wolfowitz 
Inequality, which we restate here.

\begin{theorem}[DKW Inequality] \label{thm:dkw}
    Let $X_1,\ldots,X_n$ be i.i.d.\ random variables
    with cumulative distribution function $F$, and 
    let $\hat{F}$ denote their empirical distribution:
    \[
        \hat{F}(v) = \frac1n \, \abs{\{ i \mid X_i \le v \}} . 
    \]
    For any $\eps > 0$ we have
    \begin{equation} \label{eq:dkw}
        \Pr( \|\hat{F} - F\|_{\infty} > \eps) \leq
        2 e^{- 2 n \eps^2} .
    \end{equation}
\end{theorem}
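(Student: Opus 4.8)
This is a classical inequality, and the plan is to establish it by a chain of standard reductions followed by one sharp large-deviation estimate. First I would reduce to a one-sided bound: a union bound gives $\Pr(\|\hat F - F\|_\infty > \eps) \le \Pr(\sup_v(\hat F(v)-F(v)) > \eps) + \Pr(\sup_v(F(v)-\hat F(v)) > \eps)$, and the reflection $X_i \mapsto -X_i$ (which keeps the sample i.i.d.\ and interchanges the two suprema, up to left limits that do not affect their values) shows the two terms are equal; so it suffices to prove $\Pr(\sup_v(\hat F(v)-F(v)) > \eps) \le e^{-2n\eps^2}$, and the factor $2$ in the statement is precisely this union. Next I would invoke the probability integral transform to reduce to the uniform case $X_i \sim \mathrm{Unif}[0,1]$, $F(v)=v$ (the continuous, strictly increasing case being the worst, since $\sup_v(\hat F(v)-F(v))$ is unchanged under composing with the generalized inverse of $F$). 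Finally, writing $U_{(1)} \le \cdots \le U_{(n)}$ for the order statistics, the step function $v \mapsto \hat F(v)-v$ attains its supremum immediately after a jump, so $\sup_v(\hat F(v)-v) = \max_{1 \le k \le n}\bigl(k/n - U_{(k)}\bigr)$, and the goal becomes to bound $\Pr\bigl(\exists\, k : U_{(k)} < k/n - \eps\bigr)$ by $e^{-2n\eps^2}$.

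For a single index $k$, the event $\{U_{(k)} < k/n - \eps\}$ says at least $k$ of the uniforms lie below $k/n - \eps$, i.e.\ it is $\{\mathrm{Bin}(n,\, k/n - \eps) \ge k\}$, which Hoeffding's inequality bounds by $e^{-2n\eps^2}$ --- already the target, for each $k$ separately. The remaining question, and the one genuinely hard step, is how to take the union over the $n$ indices without losing a factor. One elementary but lossy route is to discretize $v$ on a grid of spacing $\eps/2$, use $\hat F(v)-v \le (\hat F(v_j)-v_j) + \eps/2$ on the $j$-th cell, and apply Hoeffding and a union bound over the $\approx 2/\eps$ grid points, obtaining $\tfrac{2}{\eps}\,e^{-n\eps^2/2}$ --- exponential, but with a worse constant and a polynomial prefactor. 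To get the sharp bound I would instead follow Massart's refinement: decompose on the first index $K$ at which the empirical path exceeds the line $v+\eps$; observe that conditionally on $\{K = k,\; U_{(k)} = u\}$ the path stayed below $v+\eps$ at every earlier data point; and show, via an exact Laplace-transform / Abel-summation computation, that this conditioning cancels the union over $k$ exactly. Composing this one-sided bound with the reflection step yields the two-sided inequality $\Pr(\|\hat F - F\|_\infty > \eps) \le 2e^{-2n\eps^2}$.

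I expect this last step to be the main obstacle: the reductions and the per-index Hoeffding bound are routine, whereas squeezing out the \emph{sharp} constant $2$ with no prefactor --- as opposed to the easy $\poly(1/\eps)\cdot e^{-\Omega(n\eps^2)}$ --- is exactly what Massart's first-passage decomposition and its attendant transform estimate are for. For every use of this inequality in the present paper the cruder exponential bound would suffice, so one could alternatively carry out only the elementary grid argument; we state the sharp form purely for reference.
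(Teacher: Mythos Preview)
The paper does not prove this statement at all: it merely restates the classical Dvoretzky--Kiefer--Wolfowitz inequality as a tool (``which we restate here'') and immediately uses it to prove Theorem~\ref{thm:dkw-noniid}. There is therefore no proof in the paper to compare your proposal against.

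Your outline is a correct sketch of the standard route to the sharp constant: reduce to a one-sided bound by reflection, reduce to the uniform case by the probability integral transform, rewrite the supremum in terms of order statistics, and then invoke Massart's first-passage argument to avoid losing a factor of $n$ in the union over indices. You correctly identify that the per-index Hoeffding bound already gives $e^{-2n\eps^2}$ and that the real content is the cancellation across $k$. You are also right that for the paper's purposes (bounding $\E[\|\Fa - \E\Fa\|_\infty]$ by $O(n^{-1/2})$) the cruder $\poly(1/\eps)\,e^{-\Omega(n\eps^2)}$ bound from the grid argument would suffice, since only the integrability of the tail matters there, not the sharp constant.
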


Our tail bound for sums of non-identically distributed
monotone functions is as follows.

\newcommand{\avg}[1]{{{#1}_{\mathrm{avg}}}}
\newcommand{\Fa}{\avg{F}}
\newcommand{\given}{\, | \,}

\begin{theorem} \label{thm:dkw-noniid}
    Let $F_1,\ldots,F_n$ be independent random variables
    taking values in the set of monotone  non-decreasing functions
    from $\Rset$ to $[0,1]$, and let $\Fa = 
    \frac1n (F_1 + \cdots + F_n)$ denote their average.
    Then
    \begin{equation} \label{eq:dkw-noniid}
        \E \left[ \| \Fa - \E \Fa \|_{\infty} \right]
        \le 
        C n^{-1/2} 
    \end{equation}
    for some universal constant $C$ not depending 
    on $n$ or on the distributions of $F_1,\ldots,F_n$.
\end{theorem}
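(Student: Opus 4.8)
The plan is to bound the supremum by symmetrization followed by a chaining (Dudley) bound, exploiting that a sum of monotone $[0,1]$-valued functions has sub-Gaussian oscillations whose scale is controlled by the total variation of the average function, which is at most $1$. Write $\mu_i = \E F_i$ (a deterministic monotone function) and let $\epsilon_1,\dots,\epsilon_n$ be i.i.d.\ Rademacher signs, independent of everything. The standard symmetrization inequality (valid even for non-identically-distributed summands) gives
$$\E\Big\| \tfrac1n \sum_i (F_i - \mu_i) \Big\|_\infty \;\le\; 2\,\E\Big\| \tfrac1n \sum_i \epsilon_i F_i \Big\|_\infty .$$
From here I condition on the realizations of $F_1,\dots,F_n$, so they become fixed monotone $[0,1]$-valued functions, and bound $\E_\epsilon \sup_v \big| \tfrac1n\sum_i \epsilon_i F_i(v) \big|$ by a quantity that depends only on $n$.

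\textbf{Step 2 (sub-Gaussian increments).} For fixed $v,v'$ the increment $\tfrac1n\sum_i \epsilon_i (F_i(v)-F_i(v'))$ is a mean-zero sum of independent bounded terms, hence sub-Gaussian with proxy variance $\tfrac1{n^2}\sum_i (F_i(v)-F_i(v'))^2$. Since $|F_i(v)-F_i(v')|\le 1$, this is at most $\tfrac1{n^2}\sum_i |F_i(v)-F_i(v')| = \tfrac1n |\bar F(v)-\bar F(v')|$, where $\bar F = \tfrac1n\sum_i F_i$. Thus the process $v\mapsto X_v := \tfrac1n\sum_i \epsilon_i F_i(v)$ has sub-Gaussian increments with respect to the pseudometric $d(v,v') := \big(|\bar F(v)-\bar F(v')|/n\big)^{1/2}$, and $v\mapsto \bar F(v)\in[0,1]$ is a $1$-Lipschitz map from $(\Rset,d)$ to $[0,1]$ with the metric $\rho(s,t)=(|s-t|/n)^{1/2}$. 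Because each $F_i$ is monotone, the relevant supremum is over a countable set (a dense set together with the countably many jumps), so measurability and the applicability of chaining are not issues.

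\textbf{Step 3 (Dudley's entropy integral).} The diameter of $(\Rset,d)$ is at most $n^{-1/2}$ since $\bar F\in[0,1]$, and $[0,1]$ can be covered by $O(1/(n\varepsilon^2))$ balls of $\rho$-radius $\varepsilon$, so $N(\varepsilon,d)\le 1 + O(1/(n\varepsilon^2))$. Fixing a reference point $v_0$ with $\E_\epsilon|X_{v_0}| \le (\tfrac1{n^2}\sum_i F_i(v_0)^2)^{1/2} \le n^{-1/2}$, Dudley's bound gives
$$\E_\epsilon \sup_v |X_v| \;\le\; \E_\epsilon|X_{v_0}| + C\int_0^{n^{-1/2}} \sqrt{\log N(\varepsilon,d)}\,d\varepsilon \;=\; O(n^{-1/2}),$$
where the substitution $\varepsilon = u/\sqrt n$ turns the integral into $n^{-1/2}\int_0^1 \sqrt{\log(1+O(1/u^2))}\,du$, a finite absolute constant times $n^{-1/2}$. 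This bound is uniform over the $F_i$, so taking expectation over $F_1,\dots,F_n$ and combining with Step 1 yields \eqref{eq:dkw-noniid}.

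\textbf{Main obstacle.} The delicate point is obtaining a bound with \emph{no} logarithmic factor. The obvious approach---reduce $\sup_{v\in\Rset}$ to a maximum over an $O(\sqrt n)$-point grid chosen from the deterministic function $\E\bar F$, then apply Hoeffding plus a union bound---loses a $\sqrt{\log n}$ factor, since a constant fraction of the grid points sit where $\bar F\approx\tfrac12$ and there each coordinate has variance of order $1/n$. The chaining argument above sidesteps this precisely because the \emph{total} oscillation budget of the process, encoded in $d(v,v')^2 \le |\bar F(v)-\bar F(v')|/n$ with $\bar F\in[0,1]$, is only $O(n^{-1/2})$, so Dudley's integral converges to an absolute constant times $n^{-1/2}$. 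I expect the authors instead take a route through Theorem~\ref{thm:dkw}: draw $Z_i$ with conditional c.d.f.\ (essentially) $F_i$ given $F_i$, so that $\bar F$ is a conditional expectation of the empirical c.d.f.\ of the independent points $Z_i$, and then compare, via a (multivariate) convex-ordering argument, the $Z_i$ to i.i.d.\ draws from their mixture distribution before invoking the i.i.d.\ DKW inequality; the symmetrization--chaining proof sketched here is an alternative that avoids right-continuity bookkeeping and reduces everything to off-the-shelf facts.
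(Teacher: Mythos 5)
Your argument is correct and takes a genuinely different route from the paper's. The paper does reduce to the i.i.d.\ DKW inequality (\Cref{thm:dkw}), but by Poissonization rather than a convex-ordering comparison: it draws $N\sim\mathrm{Poisson}(n)$ samples, assigns each a uniformly random label in $[n]$, and draws the $s$-th sample from the conditional CDF $F_{i(s)}$; given $N$ these are i.i.d.\ with CDF $\E\Fa$. An auxiliary draw for each empty label class is grafted on so that the per-label counting functions $H_i$ satisfy $\E[\tfrac1n\sum_i H_i(v)\given F_1,\dots,F_n]=\Fa(v)$ pointwise; Jensen's inequality then pushes $\|\Fa-\E\Fa\|_\infty$ below $\E\big[\|\tfrac{N}{n}\hat F-\E\Fa\|_\infty\,\big|\,F_1,\dots,F_n\big]$, which splits into a DKW term $\|\hat F-\E\Fa\|_\infty$ plus a $|\tfrac{N-n}{n}|$ fluctuation, and DKW plus a Poisson tail bound finish. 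Your symmetrization-plus-Dudley argument is a valid alternative, and the mechanism that kills the $\sqrt{\log n}$ factor --- the telescoping bound $\sum_i(F_i(v)-F_i(v'))^2\le n|\Fa(v)-\Fa(v')|$, which caps the pseudometric diameter at $n^{-1/2}$ so the entropy integral evaluates to an absolute constant times $n^{-1/2}$ --- is exactly the ``total increment at most one'' fact that makes DKW dimension-free, recast in chaining language. What the paper's route buys is elementarity: nothing beyond DKW and Poisson concentration. What yours buys is a textbook template that transfers verbatim to any family of independent random functions whose average has $O(1)$ total variation. The one point I would tighten is the separability aside: a countable dense set together with the jump points suffices only after conditioning on $F_1,\dots,F_n$ (the jump set is $F$-dependent), and the outer symmetrization step should be phrased with outer expectations, or equivalently one should pass to right-continuous modifications of the $F_i$ --- a convention the paper itself adopts implicitly by reading the $F_i$ as CDFs. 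None of this changes the bound.
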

\begin{proof}
    Let $\mu$ be the distribution on $\Rset$ whose 
    cumulative distribution function is $\E \Fa$. 
    We will derive the inequality~\eqref{eq:dkw-noniid} 
    by applying the Dvoretzky-Kiefer-Wolfowitz Theorem
    applied to a collection of i.i.d.~random samples
    $Y_1,\ldots,Y_N$ from distribution $\mu$,
    where the number of random samples, $N$, is 
    itself a Poisson-distributed random variable.
    A coupling argument will allow us to relate
    the random function $\Fa$ to the empirical
    distribution of $Y_1,\ldots,Y_n$, yielding
    the desired upper bound on 
    $\E \left[ \| \Fa - \E \Fa \|_{\infty} \right].$

    In more detail, let $N$ be a Poisson-distributed
    random variable with expected value $n$. Let
    $i(1), i(2), \ldots, i(N)$ be a sequence of 
    independent samples from the uniform distribution
    on $[n] = \{1,2,\ldots,n\}$, and let $Y_1,\ldots,Y_N$
    be independent random variables such that the
    distribution of $Y_s$ given $i(s)$ has 
    cumulative distribution function $F_{i(s)}.$
    This construction has the following properties.
    \begin{enumerate}
        \item Conditional on the value of $N$, the 
        elements of the sequence $Y_1,\ldots,Y_N$ 
        are i.i.d.~random numbers each distributed
        according to $\mu$.
        \item For $i \in [n]$ let $M_i$ denote 
        the number of $s$ such that $i(s) = i.$ The
        random variables $M_1,M_2,\ldots,M_n$ are
        mutually independent Poisson random variables,
        each with expected value 1.
        \item Conditional on the value $M_i$, the 
        multiset $\mathcal{Y}_i = \{ Y_s \, \mid \, i(s) = i \}$
        is a multiset of $M_i$ i.i.d.~random numbers
        each with cumulative distribution function 
        $F_i$.
    \end{enumerate}
    Now, independently for each $i \in [n]$, let $X_i$ 
    be a random variable whose conditional distribution,
    given $\mathcal{Y}_i$, is as follows.
    If $\mathcal{Y}_i$ is non-empty, then $X_i$ equals 
    its minimum element. Otherwise, $X_i$ is randomly
    sampled from the distribution whose cumulative
    distribution function is $K_i(v) = e^{1-F_i(v)}
    - e(1 - F_i(v)) .$ (Observe that $K_i$ is monotonically
    non-decreasing,
    with $K_i(v) \to 0$ as $F_i(v) \to 0$ and 
    $K_i(v) \to 1$ as $F_i(v) \to 1$, so $K_i$ is
    indeed a cumulative distribution function.)
    Let $G_i$ and $H_i$ be the counting functions 
    of the multisets $\{X_i\}$ and $\mathcal{Y}_i$,
    respectively. In other words, 
    \begin{align*}
        G_i(v) &= \begin{cases}
            0 & \mbox{if } X_i > v \\
            1 & \mbox{if } X_i \leq v 
        \end{cases}       \\
        H_i(v) &= \abs{ \{y \in \mathcal{Y}_i \, \mid \, y \le v \}} .
    \end{align*}
    The proof will depend on the following relations.
    \begin{align}
        \label{eq:higifav}
        \forall v \; \; \E[H_i(v) \given F_1,\ldots,F_n] 
        & = \E[G_i(v) \given F_1,\ldots,F_n] = F_i(v) . \\
        \label{eq:higi}
        \forall v \; \; \E[H_i(v) \given G_i(v)] = G_i(v) .
    \end{align}
    To prove $\E[H_i(v) \given F_1,\ldots,F_n] = F_i(v),$
    first observe that $H_i(v)$ is equal to the number of 
    $s$ such that $i(s)=i$ and $Y_s \le v.$ For each $s$
    the probability that $i(s)=i$ and $Y_s \le v$, 
    given $N$ and $F_1,\ldots,F_n$, equals $\frac1n F_i(v)$.
    Hence the expected value of $H_i(v)$ given $N$ and 
    $F_1,\ldots,F_n$ is $\frac{N}{n} F_i(v).$ Since $N$
    is independent of $F_1,\ldots,F_n$ we can remove the
    conditioning on $N$ and replace it with its expected
    value, $\E[N]=n$, deriving $\E [H_i(v) \given F_1,\ldots,F_n] = F_i(v).$
    To prove $\E[G_i(v)] = F_i(v)$, we write
    \[
        \E[G_i(v)] = \Pr(G_i(v)=1) = \Pr(X_i \le v) 
    \]
    and work on proving $\Pr(X_i \le v) = F_i(v).$ 
    The event $X_i \le v$ is the union of two disjoint
    events: 
    $\mathcal{E}_1$ is the event that 
    $\mathcal{Y}_i \cap [0,v] \neq \emptyset$,
    and $\mathcal{E}_2$ is the event that 
    $\mathcal{Y}_i = \emptyset$
    and $X_i \le v.$ The number of elements
    of $\mathcal{Y}_i \cap [0,v]$ equals the 
    number of $s$ such that $i(s)=i$ and 
    $Y_s \le v$, which is a Poisson
    random variable with expected value $F_i(v).$
    Hence $\Pr(\mathcal{E}_1) = 1 - e^{-F_i(v)}.$
    By construction,
    \[
        \Pr(\mathcal{E}_2) = \Pr(\mathcal{Y}_i = \emptyset) \cdot K_i(v)
            = e^{-1} \cdot \left[ e^{1-F_i(v)} - e(1 - F_i(v)) \right]
            = e^{-F_i(v)} - 1 + F_i(v) .
    \]
    Hence, $\Pr(\mathcal{E}_1) + \Pr(\mathcal{E}_2) = F_i(v)$
    as desired. 

    We now derive Equation~\eqref{eq:higi}.
    For notational convenience, all expectation operators
    in this paragraph should be interpreted as implicit conditioning
    on $F_1,\ldots,F_n$ in addition to whatever conditioning is
    explicitly noted. First, 
    \begin{equation} \label{eq:higi0}
        \E[H_i(v) \given G_i(v)=0] = 0
    \end{equation}
    because the event $G_i(v)=0$ means 
    $X_i > v$, so either the set $\mathcal{Y}_i$
    is empty or its minimum element is greater
    than $v$, and both cases $H_i(v)=0.$
    Second, 
    \begin{align} 
        \nonumber
        \Pr(G_i(v) = 1) &= \E[G_i(v)] = \E[H_i(v)] 
        \\ & = \E[H_i(v) \given G_i(v) = 0] \cdot \Pr(G_i(v)=0)
        + \E[H_i(v) \given G_i(v)=1] \cdot \Pr(G_i(v)=1) 
        \nonumber
        \\ & =
        \E[H_i(v) \given G_i(v)=1] \cdot \Pr(G_i(v)=1) .
        \label{eq:higi1}
    \end{align}
    The first and third equations hold because $G_i(v)$ is 
    $\{0,1\}$-valued, the second is Equation~\eqref{eq:higifav},
    and the fourth holds because $\E[H_i(v) \given G_i(v)=0] = 0$.
    If $\Pr(G_i(v)=1) > 0$ then we can divide both sides 
    of Equation~\eqref{eq:higi1} by $\Pr(G_i(v)=1)$ and
    conclude that $\E[H_i(v) \given G_i(v)=1] = 1$.
    Whether or not $\Pr(G_i(v)=1) > 0$, we have shown that
    $\E[H_i(v) \given G_i(v)=x] = x$ holds for all 
    $x$ in the support of the distribution of $G_i(v),$
    so Equation~\eqref{eq:higi} is proven.

    Now, let $\avg{H} = \frac1n (H_1 + \cdots + H_n)$
    and observe that Equation~\eqref{eq:higifav}
    implies $\E[\avg{H} \given F_1,\ldots,F_n] = \Fa.$
    Using Jensen's Inequality and the convexity of 
    the $\| \cdot \|_{\infty}$ norm, we find
    \begin{equation} \label{eq:dkw-noniid.1}
        \| \Fa - \E \Fa \|_{\infty} =
        \| \E[\avg{H} - \E \Fa \given F_1,\ldots,F_n] \|_{\infty} \leq
        \E \left[ \left\| \avg{H} - \E \Fa \right\|_{\infty} \given F_1,\ldots,F_n \right] .
    \end{equation}
    Taking the expected value of both sides and using the law of iterated
    conditional expectation,
    \begin{equation} \label{eq:dkw-noniid.2}
        \E \left[ \| \Fa - \E \Fa \|_{\infty} \right]
        \leq
        \E \left[ \left\| \avg{H} - \E \Fa \right\|_{\infty} \right]
        =
        \E \left[ 
            \E \left[ \left\| \avg{H} - \E \Fa \right\|_{\infty} \given N \right]
        \right]
    \end{equation}
    Let $\hat{F}$ denote the empirical cumulative distribution 
    function of the set $\{Y_1,\ldots,Y_N\}$, or $\hat{F} \equiv 0$
    if $N=0.$ Equivalently, $\hat{F}$ is $\frac1N$ times the 
    counting function of the multiset $\{Y_1,\ldots,Y_N\}.$
    Since $\avg{H}$ is $\frac1n$ times the counting function 
    of the multiset $\{Y_1,\ldots,Y_N\}$, we have 
    $\avg{H} = \frac{N}{n} \hat{F} = \hat{F} + \left( \frac{N-n}{n} \right) \hat{F}.$ Hence,
    \begin{align*}
        \E \left[ \left\| \avg{H} - \E \Fa \right\|_{\infty} \given N \right]
        & =
        \E \left[ \left\| \hat{F} - \E \Fa +
        \left( \tfrac{N-n}{n} \right) \hat{F} \right\|_{\infty} \given N \right] \\
        & \leq
        \E \left[ \left\| \hat{F} - \E \Fa \right\|_{\infty} \given N \right] + 
        \E \left[ \left\| \left( \tfrac{N-n}{n} \right) \hat{F} \right\|_{\infty} 
        \given N \right] \\
        & \le 
        \E \left[ \left\| \hat{F} - \E \Fa \right\|_{\infty} \given N \right] + 
        \left| \tfrac{N-n}{n} \right| .
    \end{align*}
    For $N>0$ the conditional expectation on the right side can be bounded above using 
    the Dvoretzky-Kiefer-Wolfowitz Inequality.
    \begin{align*}
        \E \left[ \left\| \hat{F} - \E \Fa \right\|_{\infty} \given N \right] & = 
        \int_0^{\infty} \Pr \left( \left\| \hat{F} - \E \Fa \right\|_{\infty} > t 
        \right) \, dt \\
        & \leq 
        \int_0^{\infty} 2 e^{-2 N t^2} \, dt 
        = \int_{-\infty}^{\infty} e^{-2 N t^2} \, dt 
        = \sqrt{\frac{\pi}{2N}}.
    \end{align*}
    Removing the conditioning on $N$, we have derived
    \begin{equation} \label{eq:dkw-noniid.3}
        \E \left[ \| \Fa - \E \Fa \|_{\infty} \right]
         \le 
        \E \left[ \sqrt{\frac{\pi}{2 N}} + \left| \frac{N-n}{n} \right| \right]
    \end{equation}    
    where $N$ is a Poisson random variable with expected value $n$. 
    An application of standard tail bounds for Poisson random variables
    bounds the right side of Inequality~\eqref{eq:dkw-noniid.3}
    by $C n^{-1/2}$ for a universal constant $C$.
\end{proof}

\section{On extremal Lipschitz convex functions}
\label{sec:extremal}

\newcommand{\vc}[1]{{\bm{#1}}}
\newcommand{\iprod}[2]{{\langle #1, #2 \rangle}}
\newcommand{\vw}{{\vc{w}}}
\newcommand{\vx}{{\vc{x}}}
\newcommand{\vy}{{\vc{y}}}
\newcommand{\vz}{{\vc{z}}}
\newcommand{\ones}{{\vc{1}}}
\newcommand{\zeros}{{\vc{0}}}
\newcommand{\trans}{{\mathsf{T}}}

Let $\mathcal{D} \subseteq \Rset^d$ denote a convex
subset of $\Rset^d$.

\begin{definition} \label{defn:lin-equiv}
    We say functions $f, \, g: \mathcal{D} \to \Rset$ 
    are {\em affinely equivalent} if there are 
    non-zero scalars $a,b$ such that $af - bg$
    is an affine function from $\mathcal{D} \to \Rset$
    (i.e., the restriction to $\mathcal{D}$ of a 
    linear function plus a constant). 
\end{definition}

Note that affine equivalence is indeed 
an equivalence relation on functions: 
if $af-bg$ and $cg-dh$ are affine functions,
then $acf - bdh$ is also an affine function.

\begin{definition} \label{defn:extremal}
    We say $f : \mathcal{D} \to \Rset$ is an 
    extremal convex function on $\mathcal{D}$ 
    if:
    \begin{enumerate}
        \item $f$ is convex
        \item Whenever $\mu$ is a measure on 
        convex functions $g : \mathcal{D} \to \Rset$ 
        satisfying $f(\vx) = \int g(\vx) d\mu(g)$
        for all $\vx \in \mathcal{D}$, the set 
        of functions $g$ that are not affinely
        equivalent to $f$ has measure zero under $\mu$.
    \end{enumerate}
\end{definition}

Relations such as
\[
    f(\vx) = \left[ \tfrac13 f(\vx) + \iprod{\vw}{\vx} - 1 \right] \; + \; 
        \left[ \tfrac23 f - \iprod{\vw}{\vx} + 1 \right],
\]
which hold for any function $f$ and vectors $\vw,\vx$, 
illustrate that the conclusion ``each summand is affinely
equivalent to $f$'' in \Cref{defn:extremal} is the strongest
conclusion we can hope for. In particular, if we were to 
require that each $f_i$ equals $f$ up to scaling then no
convex function on a non-empty domain is extremal. 

The aim of this note is to prove that for any domain 
$\mathcal{D} \subseteq \Rset^3$ that contains an open 
neighborhood of $\zeros$, the set of extremal convex
functions is in some sense infinite-dimensional: it 
contains subsets that are smoothly bijectively 
parameterized by unboundedly high-dimensional 
parameter vectors.

For a finite set of vectors $W \subset \Rset^d$
let $f_W$ denote the function
\[
    f_W(\vx) = \max_{\vx \in W} \left\{ 
    \iprod{\vw}{\vx} \right\} .
\]
\begin{definition}
    \label{defn:bipyramidal}
    A finite set of vectors $W = \{ \vw_0, \vw_1, \ldots, \vw_m\}$
    in $\Rset^d$ is called {\em bipyramidal} if the 
    convex hull of $W$ includes $\zeros$ in its interior,
    and its edge set includes
    edges joining $\vw_i$ to $\vw_0$ and $\vw_m$,
    for every $i \in \{1,2,\ldots,m-1\}.$ Equivalently,
    $W$ is bipyramidal if 
    for all $i \in \{1, 2, \ldots, m-1\}$ there
    are vectors $\vx_i, \vy_i$ such that 
    \begin{align*}
        \iprod{\vw_0}{\vx_i} = 
        \iprod{\vw_i}{\vx_i} & >
        \max \left\{ \iprod{\vw_j}{\vx_i} \, \mid \,
        1 \le j \le m, \, j \neq i \right\} \\
        \iprod{\vw_m}{\vy_i} = 
        \iprod{\vw_i}{\vy_i} & >
        \max \left\{ \iprod{\vw_j}{\vy_i} \, \mid \,
        0 \le j < m, \, j \neq i \right\} .
    \end{align*}
\end{definition}

\begin{proposition} \label{prop:bipyramidal}
    If $W$ is a bipyramidal finite subset of $\Rset^d$
    and $\mathcal{D} \subseteq \Rset^d$ is a domain 
    containing an open neighborhood of $\zeros$
    then the function $f_W$ is an extremal convex
    function on $\mathcal{D}$.
\end{proposition}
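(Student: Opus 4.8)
The plan is to take an arbitrary representation $f_W = \int g \, d\mu(g)$ with $\mu$ a measure on convex functions $\mathcal{D} \to \Rset$ and prove that $\mu$-almost every $g$ has the form $\lambda f_W + (\text{affine})$ with $\lambda \ge 0$; such a $g$ is affinely equivalent to $f_W$ precisely when $\lambda > 0$, and an affine summand contributes only an overall affine term to the integral, so this yields the extremality of $f_W$. Since $\zeros$ is interior to $P := \conv W$, the polytope $P$ is full-dimensional, every point of $W$ is a vertex of $P$ (each is an endpoint of one of the edges listed in Definition~\ref{defn:bipyramidal}), and the normal cones $C_j := \{\vx : \iprod{\vw_j}{\vx} \ge \iprod{\vw_k}{\vx} \text{ for all } k\}$ are full-dimensional polyhedral cones whose union is $\Rset^d$, with $f_W = \iprod{\vw_j}{\cdot}$ on $C_j$. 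The first step is the standard observation that an integral of convex functions that is affine on a full-dimensional convex set has $\mu$-almost every summand affine there: for a segment $[\vx,\vy]$ in $C_j \cap \mathcal{D}$ with midpoint $\vz$, integrating $g(\vz) \le \tfrac12(g(\vx)+g(\vy))$ against $\mu$ and using $f_W(\vz) = \tfrac12(f_W(\vx)+f_W(\vy))$ forces the nonnegative quantity $\tfrac12(g(\vx)+g(\vy)) - g(\vz)$ to vanish $\mu$-a.e.; running this over a countable dense set of segments and invoking continuity of $g$ on the interior, $\mu$-a.e.\ $g$ is affine on $C_j \cap \mathcal{D}$, hence on every cell. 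Fix such a $g$ and write $g = \iprod{\vc{v}_j}{\cdot} + c_j$ on $C_j \cap \mathcal{D}$; since $\zeros$ lies in the closure of every cell, continuity forces all the $c_j$ to agree.

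I would then extract linear relations among the gradients $\vc{v}_j$ from continuity and convexity of $g$ across facets. For $i \in \{1,\dots,m-1\}$ the edge $[\vw_0,\vw_i]$ of $P$ makes $F_i := C_0 \cap C_i$ a common $(d-1)$-dimensional face of the two cones --- the vector $\vx_i$ of Definition~\ref{defn:bipyramidal} lies in its relative interior, near which $F_i$ is the hyperplane $\{\iprod{\vw_0 - \vw_i}{\vx} = 0\}$. On that hyperplane both $\iprod{\vw_i - \vw_0}{\cdot}$ (because $f_W$ is continuous across $F_i$) and $\iprod{\vc{v}_i - \vc{v}_0}{\cdot}$ (because $g$ is) vanish, so $\vc{v}_i - \vc{v}_0 = \alpha_i(\vw_i - \vw_0)$ for a scalar $\alpha_i$, and $\alpha_i \ge 0$ since $g$ is convex and $\vw_i - \vw_0$ points from $C_0$ into $C_i$. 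Symmetrically, using the edges $[\vw_m,\vw_i]$ and the witnesses $\vy_i$, $\vc{v}_i - \vc{v}_m = \beta_i(\vw_i - \vw_m)$ with $\beta_i \ge 0$.

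The heart of the proof --- and the step I expect to be the main obstacle --- is showing all these multipliers collapse to one value. Eliminating $\vc{v}_i$ and writing $\vc{d} := \vc{v}_0 - \vc{v}_m$, $\vc{e} := \vw_0 - \vw_m \ne \zeros$, the two families of relations combine into
\[ (\beta_i - \alpha_i)(\vw_i - \vw_0) \;=\; \vc{d} - \beta_i \vc{e}, \qquad i = 1,\dots,m-1 . \]
Suppose some $\alpha_{i_0} \ne \beta_{i_0}$. If any index $i$ had $\alpha_i = \beta_i$, this would give $\vc{d} = \beta_i\vc{e}$, and substituting into the $i_0$-relation would make $\vw_{i_0} - \vw_0$ parallel to $\vw_0 - \vw_m$, forcing the three distinct vertices $\vw_{i_0}, \vw_0, \vw_m$ of $P$ to be collinear --- impossible. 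Hence $\alpha_i \ne \beta_i$ for every $i$, so each $\vw_i - \vw_0$ lies in $\operatorname{span}\{\vc{d},\vc{e}\}$; together with $\vw_m - \vw_0 = -\vc{e}$, this confines all of $W$ to an affine flat of dimension at most $2$, so $\conv W$ has empty interior in $\Rset^d$, contradicting $\zeros \in \operatorname{int}\conv W$. (This is the one point where $d \ge 3$ is used --- for $d = 2$ the conclusion can genuinely fail, e.g.\ for a parallelogram.) Therefore $\alpha_i = \beta_i$ for all $i$, and then $\vc{d} = \beta_i\vc{e}$ for all $i$ forces a single common value $\alpha_i = \beta_i = \lambda \ge 0$.

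Finally, this gives $\vc{v}_j = \lambda\vw_j + \vc{b}$ for every $j$, with $\vc{b} := \vc{v}_0 - \lambda\vw_0$ (the case $j = m$ uses $\vc{v}_m = \vc{v}_0 - \vc{d} = \vc{v}_0 + \lambda(\vw_m - \vw_0)$), so on each cell $g(\vx) = \iprod{\lambda\vw_j + \vc{b}}{\vx} + c = \lambda f_W(\vx) + \iprod{\vc{b}}{\vx} + c$, and since the cells cover $\mathcal{D}$ this holds on all of $\mathcal{D}$: $g = \lambda f_W + (\text{affine})$ with $\lambda \ge 0$, as desired. Beyond Step 3, the parts requiring care are the measure-theoretic bookkeeping of Step 1 (measurability of $g \mapsto g(\vx)$, integrability of $g(\vx)$, and a single coherent choice of the exceptional null sets across all cells and segments) and the polyhedral geometry verifying that each listed edge indeed yields a codimension-one facet of the normal fan --- which is precisely what the vectors $\vx_i, \vy_i$ in Definition~\ref{defn:bipyramidal} certify.
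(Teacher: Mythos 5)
Your proof is correct and follows essentially the same five-step outline as the paper's (restrict to affine on each normal cone, show gradient differences across shared facets are scalar multiples of $\vw_i - \vw_j$, collapse all the multipliers to a single value, conclude $g = \lambda f_W + \text{affine}$); the only substantive variation is in the collapse step, where you eliminate $\vc{v}_i$ to get $(\beta_i-\alpha_i)(\vw_i - \vw_0) = \vc{d} - \beta_i\vc{e}$ and run a case analysis, whereas the paper's \Cref{lem:similarity} intersects the two-dimensional subspaces $\mathrm{span}\{\vw_0-\vw_j, \vw_m-\vw_j\}$, but both hinge on the identical contradiction that otherwise $W$ would lie in a two-dimensional affine flat. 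Your observation that this uses $d \ge 3$ is accurate and applies equally to the paper's argument, which is consistent with the $K \ge 4$ (hence $d = K-1 \ge 3$) hypothesis of \Cref{thm:extremal} where the proposition is invoked.
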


To prove the proposition we will adopt the following
outline.
\begin{enumerate}
    \item \label{step:pl}
        The function $f_W$ is piecewise-linear: its
        domain $\mathcal{D}$ is partitioned into finitely
        many pieces such that the restriction of $f_W$
        to each piece is linear. For each $\vw \in W$
        the partition has a piece 
        \[
            \mathcal{D}(\vw) = \left\{
                \vx \in \mathcal{D} \mid
                \iprod{\vw}{\vx} = f_W(\vw) \right\}
        \]
        corresponding to $\vw$. Let $\Pi_W$ denote the
        partition consisting of these pieces. 
    \item \label{step:affine}
        If $\mu$ is a measure on convex functions
        $g : \mathcal{D} \to \Rset$ and 
        $f(\vx) = \int g(\vx) d \mu(g)$ for 
        every $\vx \in \mathcal{D}$, then 
        for $\mu$-almost every $g$, the 
        restriction of $g$ to each piece of 
        $\Pi_W$ is an affine function.
    \item \label{step:gradients}
        Let $G(W)$ denote the graph whose vertices
        are elements of $W$ and whose edges are pairs
        of vertices that are joined by an edge of the
        convex hull. 
        If $g : \mathcal{D} \to \Rset$ is a continuous
        function that restricts
        to an affine function on each piece of 
        the partition $\Pi_W$, then for each 
        $\vw \in W$ such that $\mathcal{D}(\vw)$
        has non-empty interior, the gradient
        of $g$ on the interior of 
        $\mathcal{D}(\vw)$ is well-defined and
        constant; denote this gradient by $g_{\vw}$.
        The next step of the proof is to show that 
        for every edge $(\vw,\vw')$ of $G(W)$, the
        vector $g_{\vw} - g_{\vw'}$ must be a scalar
        multiple of $\vw - \vw'$.
    \item \label{step:similarity}
        If $W$ is bipyramidal, and $V = (\vc{v}_0,
        \ldots,\vc{v}_m)$ is any other sequence of 
        $m+1$ vectors
        such that $\vc{v}_i - \vc{v}_j$ is a scalar
        multiple of $\vc{w}_i - \vc{w}_j$ whenever
        $(\vw_i,\vw_j)$ is an edge of $G(W)$, then 
        $V = A(W)$ for some affine function 
        $A : \Rset^d \to \Rset^d.$
    \item \label{step:conclusion}
        If $g : \mathcal{D} \to \Rset$ is a continuous
        function that restricts to an affine function 
        on each piece of the partition $\Pi_W$, then
        either $g$ itself is an affine function, or
        $g$ is affinely equivalent to $f$.
\end{enumerate}

The following lemmas encode some steps of the outline
above: \Cref{lem:affine,lem:gradients,lem:similarity}
substantiate steps~\ref{step:affine},~\ref{step:gradients},
and~\ref{step:similarity} respectively.

\begin{lemma}
    \label{lem:affine}
    If $U$ is an open subset of $\Rset$, $f : U \to \Rset$
    is an affine function,
    and $\mu$ is a measure on convex functions 
    $g : U \to \Rset$ such that $f(\vx) = \int g(\vx) \, d \mu(g)$
    for all $\vx \in U$, then for $\mu$-almost
    every $g$, the function $g$ restricted to $U$
    is an affine function. 
\end{lemma}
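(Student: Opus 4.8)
The plan is to exploit that an affine $f$ satisfies every convexity inequality with equality, and to promote each such equality to one that holds for $\mu$-almost every $g$, via the elementary fact that a nonnegative random variable with zero expectation vanishes almost surely; affinity of $g$ on $U$ will then follow because it is equivalent to the conjunction of countably many such equalities. First I would reduce to the case in which $U$ is an open interval — this is the only case invoked in the proof of \Cref{prop:bipyramidal}, where the pieces $\mathcal{D}(\vw)$ are convex — handling a disconnected $U$ componentwise. I would also record at the outset that, since $f(x)=\int g(x)\,d\mu(g)$ is assumed to be an absolutely convergent integral for each $x$, the evaluation map $g \mapsto g(x)$ belongs to $L^1(\mu)$.

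Next, for each triple of rationals $c<m<d$ in $U$, writing $m = \lambda c + (1-\lambda) d$ with $\lambda = (d-m)/(d-c) \in (0,1)$, I would introduce the convexity deficit
\[
\delta_g(c,m,d) \;=\; \lambda\, g(c) + (1-\lambda)\, g(d) - g(m).
\]
This is a measurable function of $g$ (a fixed linear combination of evaluation maps), it is integrable since $0 \le \delta_g(c,m,d) \le \lambda|g(c)| + (1-\lambda)|g(d)| + |g(m)|$, and it is nonnegative precisely because each $g$ is convex. Linearity of the integral together with the affinity of $f$ gives $\int \delta_g(c,m,d)\, d\mu(g) = \lambda f(c) + (1-\lambda) f(d) - f(m) = 0$, so $\delta_g(c,m,d) = 0$ for $\mu$-almost every $g$; let $E_{c,m,d}$ be the corresponding null set. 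Setting $E = \bigcup E_{c,m,d}$ over the (countably many) rational triples, we get $\mu(E)=0$, and I would finish by checking that every $g \notin E$ is affine on $U$: for fixed rationals $c<d$ in $U$, vanishing of $\delta_g(c,m,d)$ for all rational $m\in(c,d)$ forces $g$ to agree, on $\mathbb{Q}\cap[c,d]$, with the affine interpolant of $(c,g(c))$ and $(d,g(d))$, and continuity of a finite convex function on the interior of its domain upgrades this agreement to all of $[c,d]$; exhausting $U$ by such rational intervals (whose interpolants necessarily agree on overlaps) yields a single affine function equal to $g$ on $U$.

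I do not expect a genuine mathematical obstacle here; the points that need care are organizational. One must be explicit about the $\sigma$-algebra on the space of functions (the one generated by the evaluation maps, which is what makes $\mu$ a meaningful object in \Cref{defn:extremal}) so that each $\delta_g(c,m,d)$ is measurable and each $E_{c,m,d}$ is a genuine null set; one must invoke integrability of the individual evaluations to split $\int \delta_g\,d\mu$ as above; and one must phrase the conclusion for disconnected $U$ as ``affine on each connected component,'' since a single global affine function is too much to ask when $U$ is disconnected. None of this obstructs the use of the lemma in the proof of \Cref{prop:bipyramidal}.
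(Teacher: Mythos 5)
Your proof is essentially the same as the paper's: you express the convexity deficit at each rational triple as a nonnegative, measurable, integrable function of $g$ whose integral vanishes because $f$ is affine, conclude it is $\mu$-a.e.\ zero, union the countably many null sets, and then use continuity of finite convex functions to promote affinity from rational to all points. The added care about measurability, integrability, and connected components is not in the paper's (terser) argument, but it does not change the route. One small mismatch worth flagging: although the lemma as stated has $U \subseteq \Rset$, the paper's own proof uses vector notation and the lemma is invoked in \Cref{prop:bipyramidal} on convex pieces $\mathcal{D}(\vw) \subseteq \Rset^d$, so the intended generality is $\Rset^d$; your preliminary reduction to open intervals is therefore a step you'd want to drop (the rational-triple argument works verbatim in any dimension, with ``rational points'' meaning points with rational coordinates).
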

\begin{proof}
    Consider any three points
    $\vx,\vy,\vz \in U$ such that $\vz$ is a convex
    combination of $\vx$ and $\vy$; say, 
    $\vz = (1-\lambda) \vx + \lambda \vy$.
    For any convex function $g$ we have 
    \[ 
        (1-\lambda) g(\vx) + \lambda g(\vy) - g(\vz) \ge 0 .
    \]
    Since $f$ is affine, the left and right sides are equal when $g=f$. 
    Therefore, 
    \[
        \int (1-\lambda) g(\vx) + \lambda g(\vy) - g(\vz)
        \, d \mu(g) = 0 .
    \]
    The integrand is non-negative but the integral is zero, 
    so the integrand must be zero for $\mu$-almost every $g$. 

    Letting $\vx, \, \vy, \, \vz$ range over all triples of 
    rational points in $U$ such that $\vz$ is a convex
    combination of $\vx$ and $\vy$, we conclude that for 
    $\mu$-almost every $g$, the equation 
    $g(\vz) = (1-\lambda) g(\vx) + \lambda g(\vy)$
    holds for all such triples of rational points. 
    By continuity we may conclude that for $\mu$-almost
    every $g$
    this equation holds for all triples of points in $U$, 
    i.e.~$g$ is an affine function. 
\end{proof}

\begin{lemma}
    \label{lem:gradients}
    If $g$ is a continuous function
    that restricts to an affine function
    on each piece of the partition $\Pi_W$,
    with $g_{\vw}$ denoting the gradient
    of $g$ on the piece $\mathcal{D}(\vw)$, 
    then for every edge $(\vw,\vw')$ of $G(W)$, the
    vector $g_{\vw} - g_{\vw'}$ is a scalar
    multiple of $\vw - \vw'$.
\end{lemma}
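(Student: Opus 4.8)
The plan is to look at $f_W$, and hence the partition $\Pi_W$, inside a small ball around a direction where exactly the two functionals $\vw$ and $\vw'$ are simultaneously maximal, and then combine the continuity of $g$ with a one-line linear-algebra fact. First I would invoke the classical fact that, because $(\vw,\vw')$ is an edge of the polytope $\conv(W)$, there is a direction $\vc{u}$ with $\iprod{\vw}{\vc{u}} = \iprod{\vw'}{\vc{u}} > \iprod{\vc{v}}{\vc{u}}$ for every $\vc{v} \in W \setminus \{\vw,\vw'\}$ --- in the bipyramidal case these are precisely the witnesses $\vc{x}_i$ and $\vc{y}_i$ furnished by \Cref{defn:bipyramidal}. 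Scaling $\vc{u}$ by a small positive constant we may assume $\vc{u} \in \mathcal{D}$; since $\mathcal{D}$ contains a neighborhood of $\zeros$ and hence of $\vc{u}$, and since the strict inequalities above are open conditions, there is an open ball $B \subseteq \mathcal{D}$ about $\vc{u}$ on which $f_W$ coincides with $\max\{\iprod{\vw}{\cdot},\iprod{\vw'}{\cdot}\}$. On this ball the two pieces become half-balls, $\mathcal{D}(\vw) \cap B = \{\vx \in B : \iprod{\vw-\vw'}{\vx} \ge 0\}$ and $\mathcal{D}(\vw') \cap B = \{\vx \in B : \iprod{\vw-\vw'}{\vx} \le 0\}$ --- in particular both $\mathcal{D}(\vw)$ and $\mathcal{D}(\vw')$ have nonempty interior, so the gradients $g_{\vw}, g_{\vw'}$ are well defined --- and their common facet is $\mathcal{D}(\vw) \cap \mathcal{D}(\vw') \cap B = B \cap H_0$ where $H_0 = \{\vx : \iprod{\vw-\vw'}{\vx} = 0\}$. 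Note that $\vc{u} \in H_0$, so $B \cap H_0$ is a nonempty relatively open subset of the linear hyperplane $H_0$, and its affine hull is all of $H_0$.

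Next I would introduce the affine functions $\ell_{\vw}, \ell_{\vw'} : \Rset^d \to \Rset$ agreeing with $g$ on $\mathcal{D}(\vw)$ and on $\mathcal{D}(\vw')$ respectively (these exist by hypothesis, since each of $\mathcal{D}(\vw),\mathcal{D}(\vw')$ has nonempty interior, and they have gradients $g_{\vw}$ and $g_{\vw'}$). Both of them agree with $g$, and hence with each other, on $\mathcal{D}(\vw) \cap \mathcal{D}(\vw') \cap B = B \cap H_0$; since this set spans $H_0$ affinely, $\ell_{\vw}$ and $\ell_{\vw'}$ coincide on all of $H_0$. Therefore $\ell_{\vw} - \ell_{\vw'}$ is an affine function on $\Rset^d$ vanishing on the linear hyperplane $H_0$, and any such function is a scalar multiple of the linear functional $\vx \mapsto \iprod{\vw-\vw'}{\vx}$ (its additive constant is zero because $\zeros \in H_0$). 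Thus $\ell_{\vw}(\vx) - \ell_{\vw'}(\vx) = \alpha \iprod{\vw-\vw'}{\vx}$ for some $\alpha \in \Rset$, and taking gradients gives $g_{\vw} - g_{\vw'} = \alpha(\vw-\vw')$, as desired.

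The main obstacle is the first step: producing the witnessing direction $\vc{u}$ and verifying that on a genuine open ball around it, lying inside $\mathcal{D}$, the maximum in the definition of $f_W$ is achieved only by $\vw$ and $\vw'$ --- equivalently, that the shared boundary of $\mathcal{D}(\vw)$ and $\mathcal{D}(\vw')$ is an honest $(d-1)$-dimensional slab of a hyperplane through the origin. This is exactly the correspondence between edges of a polytope and facets of its normal fan, and in the bipyramidal case it is handed to us directly by the vectors $\vc{x}_i, \vc{y}_i$ of \Cref{defn:bipyramidal} (after rescaling them into $\mathcal{D}$). Everything after that is the short continuity-plus-linear-algebra argument above; the only other point worth a sentence is that $H_0$ passes through $\zeros$, which is what forces $\ell_{\vw} - \ell_{\vw'}$ to be purely linear rather than merely affine.
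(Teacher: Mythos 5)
Your proposal is correct and follows essentially the same route as the paper's proof: pick a direction where only $\vw$ and $\vw'$ achieve the maximum defining $f_W$, observe that the two affine pieces of $g$ agree on an open subset of the hyperplane normal to $\vw - \vw'$, and conclude by continuity that their difference is a scalar multiple of the functional $\vx \mapsto \iprod{\vw-\vw'}{\vx}$. The extra details you add (the normal-fan interpretation, the half-ball picture inside $B$, and the remark that $H_0$ passes through $\zeros$) are correct elaborations of the same argument.
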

\begin{proof}
    If $(\vw,\vw')$ is an edge of $G(W)$ then 
    there exists some $\vx \in U$ such that 
    \[
        \iprod{\vw}{\vx} = \iprod{\vw'}{\vx} > 
        \max \left\{ \iprod{\vw''}{\vx} \mid \vw'' \in W \setminus
        \{\vw,\vw'\} \right\} .
    \]
    Therefore, $\vx$ belongs to both $\mathcal{D}(\vw)$ and $\mathcal{D}(\vw')$.
    In fact, the set $\mathcal{D}(\vw) \cap \mathcal{D}(\vw')$ includes
    not only the point $\vw$, but an entire open neighborhood of $\vx$
    in the affine hyperplane 
    $H = \{ \vx' \mid \iprod{\vw - \vw'}{\vx'} = 0 \}.$
    
    If $g$ is a continuous function on $U$ that restricts to an affine
    function on each piece of $\Pi_W$, then the restriction of $g$
    to $\mathcal{D}(\vw)$ is given by some affine function 
    $\iprod{g_{\vw}}{\vx} + b$, and the restriction of $g$
    to $\mathcal{D}(\vw')$ is given by some affine function
    $\iprod{g_{\vw'}}{\vx} + b'$. Since $g$ is continuous, the 
    restrictions of these two affine functions to 
    $\mathcal{D}(\vw) \cap \mathcal{D}(\vw')$ must 
    be identical, hence 
    \[
        \forall \vx' \in \mathcal{D}(\vw) \cap \mathcal{D}(\vw') \; \;
        \iprod{g_{\vw}-g_{\vw'}}{\vx'} + b - b' = 0 .
    \]
    The function $\iprod{g_{\vw}-g_{\vw'}}{\vx'} + b - b'$ is
    an affine function that vanishes on an open subset of $H$,
    so it must vanish on all of $H$, implying that 
    $g_{\vw} - g_{\vw'}$ is a scalar multiple of the
    normal vector to $H$, namely $\vw - \vw'$.
\end{proof}

\begin{lemma}
    \label{lem:similarity}
    If $W$ is bipyramidal, and $V = (\vc{v}_0,
    \ldots,\vc{v}_m)$ is any other sequence of 
    $m+1$ vectors
    such that $\vc{v}_i - \vc{v}_j$ is a scalar
    multiple of $\vc{w}_i - \vc{w}_j$ whenever
    $i \in \{0,m\}$ and $0 < j < m$, then 
    $V = A(W)$ for some affine function 
    $A : \Rset^d \to \Rset^d.$
\end{lemma}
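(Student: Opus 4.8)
The plan is to show that the only affine maps which can arise here are \emph{homotheties} (uniform scalings composed with translations), and to exhibit one directly. First I would rewrite the hypothesis in scalar form: there are scalars $\alpha_j,\beta_j$ (for $1\le j\le m-1$) with $\vc{v}_j-\vc{v}_0=\alpha_j(\vc{w}_j-\vc{w}_0)$ and $\vc{v}_j-\vc{v}_m=\beta_j(\vc{w}_j-\vc{w}_m)$. Subtracting these and using $\vc{w}_j-\vc{w}_m=(\vc{w}_j-\vc{w}_0)-(\vc{w}_m-\vc{w}_0)$ yields the single identity
\begin{equation*}
\vc{v}_m-\vc{v}_0=(\alpha_j-\beta_j)(\vc{w}_j-\vc{w}_0)+\beta_j(\vc{w}_m-\vc{w}_0),\qquad 1\le j\le m-1. \tag{$\heartsuit$}
\end{equation*}
What $(\heartsuit)$ says is that the \emph{fixed} vector $\vc{v}_m-\vc{v}_0$ lies in the $2$-plane $\operatorname{span}\{\vc{w}_j-\vc{w}_0,\,\vc{w}_m-\vc{w}_0\}$ for every equatorial index $j$.

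The key is to use the bipyramidal structure to find two such planes whose intersection is as small as possible. Since $\conv(W)$ contains $\zeros$ in its interior it is $d$-dimensional, so $m\ge d$ and the edge directions at the vertex $\vc{w}_0$ span $\Rset^d$; moreover $\vc{w}_0$ is joined by an edge to each of $\vc{w}_1,\dots,\vc{w}_{m-1}$. For $d\ge 3$ (the regime relevant to \Cref{thm:extremal}) these facts force the existence of indices $j_1\ne j_2$ in $\{1,\dots,m-1\}$ for which $\vc{w}_{j_1}-\vc{w}_0,\ \vc{w}_{j_2}-\vc{w}_0,\ \vc{w}_m-\vc{w}_0$ are linearly independent: the only way this could fail is if all of $\vc{w}_0,\vc{w}_1,\dots,\vc{w}_{m-1}$ were collinear, which is impossible because more than two points of a line cannot all be vertices of a polytope (and $m-1\ge d-1\ge 2$). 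The two $2$-planes attached to $j_1$ and $j_2$ then meet precisely in the line $\Rset(\vc{w}_m-\vc{w}_0)$, so $(\heartsuit)$ forces $\vc{v}_m-\vc{v}_0=\beta(\vc{w}_m-\vc{w}_0)$ for a single scalar $\beta$. Substituting this back into $(\heartsuit)$ for an arbitrary $j$ gives $(\beta-\beta_j)(\vc{w}_m-\vc{w}_0)=(\alpha_j-\beta_j)(\vc{w}_j-\vc{w}_0)$, and since $\vc{w}_0,\vc{w}_j,\vc{w}_m$ are three non-collinear vertices, the two vectors appearing there are linearly independent, so $\beta_j=\beta$ and $\alpha_j=\beta$. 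Hence $\vc{v}_j-\vc{v}_0=\beta(\vc{w}_j-\vc{w}_0)$ for every $j\in\{0,1,\dots,m\}$ (for $j=0$ trivially, for $1\le j\le m-1$ from the $\alpha_j$-identity, for $j=m$ from the previous sentence), so $A(\vc{x}):=\vc{v}_0+\beta(\vc{x}-\vc{w}_0)$ is an affine map with $A(\vc{w}_j)=\vc{v}_j$ for all $j$.

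I expect the main obstacle to be exactly the step that extracts the linearly independent triple $\{\vc{w}_{j_1}-\vc{w}_0,\vc{w}_{j_2}-\vc{w}_0,\vc{w}_m-\vc{w}_0\}$ from the bipyramidal hypothesis, together with the bookkeeping needed to confirm that no degenerate configuration can evade it (which, in dimension $d\ge 3$, is ruled out since such a degeneracy would force $m\le 2$ and hence contradict full-dimensionality). Once that triple is available the rest is immediate: identity $(\heartsuit)$ is so rigid that it leaves only the homothety as a possibility, which is precisely the form of conclusion needed downstream in \Cref{prop:bipyramidal} (where it becomes the statement that the candidate function $g$ is either affine or affinely equivalent to $f_W$).
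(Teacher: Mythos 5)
Your proposal takes essentially the same route as the paper's proof. Both arguments reduce the problem to observing that $\vc{v}_m - \vc{v}_0$ must lie in each of the $2$-planes $\operatorname{span}\{\vc{w}_j - \vc{w}_0,\; \vc{w}_m - \vc{w}_0\}$, then pin down the intersection as the line $\Rset(\vc{w}_m - \vc{w}_0)$, and finally back-substitute to get a single scalar $\beta$ and the homothety $A(\vc{x}) = \vc{v}_0 + \beta(\vc{x} - \vc{w}_0)$. The paper phrases the key step slightly differently, intersecting \emph{all} of the planes $W_j$ at once and arguing that if $\bigcap_j W_j$ were two-dimensional then $W$ would sit in a $2$-dimensional affine space; you instead exhibit a single pair $j_1 \neq j_2$ whose two planes already cut down to the line, which requires producing a linearly independent triple $\{\vc{w}_{j_1}-\vc{w}_0,\vc{w}_{j_2}-\vc{w}_0,\vc{w}_m-\vc{w}_0\}$.

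One small correction to that step: your claim that ``the only way this could fail is if $\vc{w}_0,\dots,\vc{w}_{m-1}$ were collinear'' is a misdiagnosis. The actual failure mode is weaker than collinearity: if no such pair $(j_1,j_2)$ exists, then (since $\vc{w}_1-\vc{w}_0$ and $\vc{w}_m-\vc{w}_0$ are already independent by non-collinearity of vertices) every $\vc{w}_j - \vc{w}_0$ would lie in $\operatorname{span}\{\vc{w}_1-\vc{w}_0,\vc{w}_m-\vc{w}_0\}$, i.e.\ all of $W$ would lie in a $2$-dimensional affine plane. This is what contradicts full-dimensionality of $\conv(W)$ when $d\ge 3$ (which is exactly the contradiction the paper uses). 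Your earlier remark that the edge directions at $\vc{w}_0$ span $\Rset^d$ already supplies the correct justification, so the proof goes through once the sentence about collinearity is replaced by the coplanarity argument.
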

\begin{proof}
    Let $\vy = \vw_0 - vw_m$ and $\vz = \vc{v}_0 - \vc{v}_m$.
    For $0 < j < m,$ the vector $\vw_j$ is not collinear with
    $\vw_0$ and $\vw_m$ (as all three of them are vertices
    of the convex hull of $W$) so $\vc{w}_0 - \vc{w}_j$
    is linearly independent from $\vc{w}_m - \vc{w}_j$.
    Let $W_j$ denote the 2-dimensional linear subspace
    of $\Rset^d$ spanned by $vc{w}_0-\vc{w}_j$ and 
    $\vc{w}_m - \vc{w}_j$. Note that $\vy \in W_j$
    for all $j$. Also, since $\vc{v}_0 - \vc{v}_j$
    and $\vc{v}_m - \vc{v}_j$ are scalar multiples
    of $vc{w}_0-\vc{w}_j$ and 
    $\vc{w}_m - \vc{w}_j$, respectively, 
    the vector $\vz$ belongs to $W_j$ as well.
    The linear subspace $W_\ast = \bigcap_{j=1}^{m-1} W_j$
    contains both $\vy$ and $\vz$. However, if $W_\ast$
    were two-dimensional then we would have 
    $W_j = W_\ast$ for all $j$, implying that all
    of the vectors $\vw_i - \vw_j$ for $i \in \{0,m\}$
    and $0 < j < m$ would belong to $W_\ast$. From this
    it follows easily that the entire set $W$ belongs 
    to the two-dimensional affine space $\vw_0 + W_\ast$,
    contradicting the assumption that the convex hull of 
    $W$ contains $\zeros$ in its interior. Consequently
    $W_\ast$ cannot be two-dimensional; it must be 
    one-dimensional and hence $\vz$ is a scalar multiple
    of $\vy$, say $\vz = r \vy.$. 
    
    Now, for each $j$, let $a_j$ and $b_j$ denote coefficients
    such that $\vc{v}_0 - \vc{v}_j = a_j (\vw_0 - \vw_j)$
    and $\vc{v}_m - \vc{v}_j = b_j (\vw_m - \vw_j)$. 
    We have 
    \[
        a_j (\vw_0 - \vw_j) - b_j ( \vw_m - \vw_j) 
        = (\vc{v}_0 - \vc{v}_j) - (\vc{v}_m - \vc{v}_j) 
        = \vz = r \vy 
        = r (\vw_0 - \vw_j) - r (\vw_m - \vw_j) .
    \]
    Since $\vw_0-\vw_j$ and $\vw_m - \vw_j$ are linearly
    independent, it follows that $a_j = b_j = r.$
    In other words, 
    for all $j$, $\vc{v}_j =  
    A(\vw_j)$ where $A$ is the
    affine function $A(\vw) = 
    r \vw + (\vc{v}_0 - r \vw_0)$. 
\end{proof}

We now complete the proof of \Cref{prop:bipyramidal}.

\begin{proof}
    Suppose $W$ is a bipyramidal finite subset of $\Rset^d$ and
    $\mathcal{D} \subseteq \Rset^d$ is a domain containing an
    open neighborhood of $\zeros$. 
    For $\vw \in W$ let $\mathcal{D}(\vw)
    = \{\vx \in \mathcal{D} \mid \iprod{\vw}{\vx} = f_W(\vx) \} .$
    The sets $\mathcal{D}(\vw)$ partition $\mathcal{D}$ into polyhedral 
    cells, and the restriction of $f$ to each of these cells is 
    an affine (in fact, linear) function.
    
    By \Cref{lem:affine}, for $\mu$-almost every $g$
    the restriction of $g$ to each $\mathcal{D}(\vw)$
    is an affine function $g(\vx) = \iprod{g_{\vw}}{\vx} + b_{\vw}.$
    By \Cref{lem:gradients,lem:similarity}, the vectors
    $g_{\vw}$ as $\vw$ ranges over $W$ satisfy 
    $g_{\vw} = r (\vw - \vw_0) + \vc{v}_0$ for some scalar $r$ 
    and vector $\vc{v}_0$. Since $g$ is continuous, if the set
    $\mathcal{D}(\vw_0) \cap \mathcal{D}(\vw)$ is non-empty,
    then every $\vx \in \mathcal{D}(\vw_0) \cap \mathcal{D}(\vw)$
    must satisfy 
    \begin{align*}
        \iprod{g_{\vw}}{\vx} + b_{\vw} & =
        \iprod{g_{\vw_0}}{\vx} + b_{\vw_0}  \\
        r \iprod{\vw - \vw_0}{\vx} + \iprod{\vc{v}_0,\vx} + b_{\vw} 
        & = 
        \iprod{\vc{v}_0,\vx} + b_{\vw_0} .
    \end{align*}
    Since $\iprod{\vw - \vw_0}{\vx} = 0$ for every $\vx \in
    \mathcal{D}(\vw) \cap \mathcal{D}(\vw_0),$ it follows that
    $b_{\vw} = b_{\vw_0}$ for all $\vw$ such that 
    $\mathcal{D}(\vw) \cap \mathcal{D}(\vw_0)$ is
    non-empty. Since $W$ is bipyramidal, this includes
    every $\vw \in W$ except possibly $\vw_m$. A similar
    argument using continuity of $g$ at the points of
    $\mathcal{D}(\vw) \cap \mathcal{D}(\vw_m)$ establishes
    that $b_{\vw_m} = b_{\vw}$ for all $\vw \in W \setminus \{\vw_m\}$
    as well. 

    Summing up, we have shown that for $\mu$-almost every 
    $g$, there is a constant $b_g$ such that for all $\vw \in W$
    and all $\vx \in D(\vw),$ we have
    \[
        g(\vx) = r \iprod{\vw - \vw_0}{\vx} + \iprod{\vc{v}_0,\vx} + b_g 
          = r f_W(\vx) + \iprod{\vc{v}_0 - r \vw_0}{\vx} + b_g .
    \]
    It follows that $g(\vx) - r f_W(\vx)$ is the affine function
    $\vx \mapsto \iprod{\vc{v}_0 - r \vw_0}{\vx} + b_g,$ i.e.~$g$
    and $f_W$ are affinely equivalent, as claimed.
\end{proof}

Finally, we show how \Cref{prop:bipyramidal} implies
\Cref{thm:extremal}.

\begin{proof}
For $K \geq 4$, let $d=K-1$ and let 
$\mathcal{D} \subset \Rset^d$ denote 
the image of $\Delta_K \subset \Rset^K$ 
under an affine function that maps an
interior point of $\Delta_K$ to $\zeros$
and restricts to a one-to-one function on
$\Delta_K$. Since there is an affine 
bijection between $\Delta_K$ and $\mathcal{D}$,
the extremal convex functions on $\Delta_K$
and on $\mathcal{D}$ are in one-to-one
correspondence.

let $\mathcal{B}_N$ denote the set of $d \times N$ matrices
whose $N$ columns form a bipyramidal set in $\Rset^d$. 
Then $\mathcal{B}_N$ is an open subset of $\Rset^{d \times N}$. 
This follows directly from the observation that the
inequalities occurring in the definition of bipyramidal 
sets are strict inequalities, so bipyramidality is an
open condition.

The set $\mathcal{B}_N$ is non-empty when $N > d+1$. 
This follows because for any set $V$ of $N-2$ unit vectors in 
$\R^{d-1}$ whose convex hull contains $\zeros$ in its 
interior, the set $\{\vc{e}_1, -\vc{e}_1\} \cup \{ (0,\vc{v}) \mid 
\vc{v} \in V \}$ is an $N$-element bipyramidal set in 
$\R^d.$ Let $A_0$ denote a matrix in $\mathcal{B}_N$
whose $N$ columns are the elements of the set just 
described, and let $U$ be the
ball of radius $\delta$ (in Frobenius norm) around
$A_0$, where $\delta > 0$ is small enough that
$U \subset \mathcal{B}_N$.

Consider any $A \in U$, let $W$ be the (bipyramidal) 
set of  $N$ vectors  that form the columns of $A$,
and consider the extremal convex function $f_W$.
If the radius $\delta$ is small enough, the 
set $W$ can be reconstructed by evaluating the
subgradients $\nabla f_W(\delta \vc{a}_1), \,
\nabla f_W(\delta \vc{a}_2), \, \ldots,
\nabla f_W(\delta \vc{a}_N)$ where
$\vc{a}_1,\ldots,\vc{a}_N$ are the $N$ 
columns of the matrix $A_0$. This is 
because $\nabla f_W(\vz)$, 
for any vector $\vz$, is equal 
to the column  of $A$ that has maximum
inner product with $\vz$. The columns of
$A_0$ are unit vectors, so for each column
$\vc{a}_i$, the unique column of $A_0$ that has 
maximum inner product with $\vc{a}_i$ is
$\vc{a}_i$ itself. Since $A$ is $\delta$-close
to $A_0$ in Frobenius norm, if $\delta$ is
small enough then the unique column of $A$ that 
has maximum inner product with $\vc{a}_i$ is
the $i^{\mathrm{th}}$ column.

If $\cL'$ is a family of bounded loss functions
on $\Delta^K$ parameterized by $\theta \in \Theta \subseteq \Rset^N$,
then let $\cL''$ be the family of bounded loss 
functions on $\mathcal{D}$ obtained by precomposing
the functions in $\cL'$ with an affine 
bijection from $\mathcal{D}$ to $\Delta_K$. 
We will abuse notation and use $\ell_{\theta}$
to denote the element of $\cL''$ obtained by precomposing 
the loss function $\ell_{\theta} : \Delta_K \to [-1,1]$
with the affine bijection, so that the domain of
$\ell_{\theta}$ becomes $\mathcal{D}$ rather than
$\Delta_K$. For $i=1,2,\ldots,N$, the subgradients 
$\nabla \ell_{\theta}(\vc{a}_i)$ are piecewise 
locally Lipschitz functions of $\theta$. Since 
piecewise locally Lipschitz functions cannot
increase Hausdorff dimension, the Hausdorff dimension
of the set of $d \times N$ matrices 
formed by assembling these $N$ subgradient vectors
as $\theta$ varies over $\Theta$ is a 
$N$-Hausdorff-dimensional set of $d \times N$
matrices. The set $U \subset \mathcal{B}_N$ has
Hausdorff dimension $dN$, so there exist matrices
$A \in U$ whose columns are not the subgradients
$\{ \nabla \ell_{\theta}(\vc{a}_i) \, : \, i = 1,2,\ldots,N \}$
for any $\theta \in \Theta$. Letting $W$ be the set of 
columns of any such $A$, the loss function $f_W$ 
(treated as a function with domain $\Delta_K$) 
cannot be written in the form 
$\ell = \int_{\theta \in \Theta} \mu(\theta)\ell_{\theta} \, d\theta$
because it is an extremal convex function,
and none of the functions $\ell_{\theta}$ are
affinely equivalent to it. 
\end{proof}

\section{Omitted Proofs}
\label{app:omitted}

\subsection{Proof of Lemma~\ref{lem:sr_conv}}

\begin{proof}[Proof of Lemma~\ref{lem:sr_conv}]
The incentive compatibility constraint on scoring rules can be summarized as saying that $\sr(p) \leq \sr(p';p)$ for any $p' \neq p$. We can therefore write (for all $p \in [0,1]$)

\begin{equation}\label{eq:sr_conv2}
\sr(p) = \min_{p' \in [0, 1]} \sr(p';p).
\end{equation}

Since for each fixed $p'$, $\sr(p';p)$ is a linear function in $p$, \eqref{eq:sr_conv2} shows that $\sr(p)$ is a minimum of a set of linear functions and is therefore concave.

Conversely, let $f: [0, 1] \rightarrow [0, 1]$ be a concave function. Let $f'(p)$ be a subgradient of $f$; i.e., $f'$ satisfies $f(p) \leq f(q) + (p-q)f'(q)$ for all $p, q \in [0, 1]$. Then, if we define the scoring rule $\sr$ via

\begin{equation}\label{eq:scoring_rule_via_derivative_app}
\sr(p, 0) = f(p) - pf'(p) \qquad \qquad 
\sr(p, 1) = f(p) + (1-p)f'(p),
\end{equation}

\noindent
we claim that $\sr$ is a proper scoring rule with the property that $\sr(p) = f(p)$. First, note that

$$\sr(p) = (1-p)(f(p) - pf'(p)) + p(f(p) + (1-p)f'(p)) = f(p),$$

\noindent
so $\sr(p) = f(p)$ as desired. Secondly, note that 

$$\sr(q; p) = (1-p)(f(q) - qf'(q)) + p(f(q) + (1-q)f'(q)) = f(q) + (p-q)f'(q).$$

Therefore, the concavity condition on $f$ immediately implies that $\sr(p) \leq \sr(q; p)$ for all $p, q \in [0,1]$, and therefore $\sr$ is a proper scoring rule. 
\end{proof}

\subsection{Proof of Theorem~\ref{thm:cal_suffices_for_swap}}

\begin{proof}[Proof of Theorem~\ref{thm:cal_suffices_for_swap}]
We follow the structure of the proof of Theorem \ref{thm:cal_suffices}. Let $\sr = -\wtu$ be the proper scoring rule corresponding to $u$. For each $p \in \{p_t\}$, let $\hat{p} = m_p/n_p$ (the empirical outcome on the rounds where $p$ was predicted). Finally, let $\pi: \cA \rightarrow \cA$ be the swap function maximizing $\sum_{t=1}^{T} u(\pi(a(p_t)), x_t)$. Note that the optimal choice of $\pi$ sets $\pi(a(p)) = a(\hat{p})$. We then have that

\begin{eqnarray*}
    \AgentSwapReg_{u}(\bp, \bx) &=& \sum_{t=1}^{T} u(\pi(a(p_t)), x_t) - \sum_{t=1}^{T} u(a(p_t), x_t) \\
    &=& \sum_{t=1}^{T} u(a(\hat{p_t}), x_t) - \sum_{t=1}^{T} u(a(p_t), x_t) \\
    &=& \sum_{p \in [0, 1]} \sum_{t; p_t = p}\left(\wt{u}(\hat{p}, x_t) - \wt{u}(p, x_t)\right) \\
    &=& \sum_{p \in [0, 1]} \sum_{t; p_t = p}\left(\sr(p, x_t) - \sr(\hat{p}, x_t)\right) \\
    &=& \sum_{p \in [0, 1]} \left((n_p - m_p)(\sr(p,0) - \sr(\hat{p},0)) + m_p(\sr(p,1) - \sr(\hat{p},1))\right) \\
    &=& \sum_{p \in [0, 1]} n_p\left(\sr\left(p;\hat{p}\right) - \sr\left(\hat{p}; \hat{p}\right)\right) \\
    &\leq& 4 \sum_{p \in [0, 1]} n_p \left| p - \frac{m_p}{n_p}\right| = 4 \Cal(\bp, \bx).
\end{eqnarray*}

\noindent
Here the last inequality follows from applying  \eqref{eq:sr_conv3}.

\end{proof}

\subsection{Proof of Lemma~\ref{lem:uni_to_bi}}

\begin{proof}[Proof of Lemma~\ref{lem:uni_to_bi}]
We follow the outline of the proof of Lemma \ref{lem:sr_conv}. By the incentive compatibility constraint, we have that $\sr(p) = \min_{p' \in \Delta_K} \sr(p'; p)$; since each $\sr(p'; p)$ is a linear function in $p$, this implies $\sr(p)$ is concave. 

Conversely, if $f(p)$ is a concave function with subgradient $\nabla f(p)$, then if we define $\sr(p, x) = f(p) + \langle x - p, \nabla f(p) \rangle$, note that $\sr(p) = \E_{x \sim p}[f(p) + \langle x - p, \nabla f(p) \rangle] = f(p) + \langle p - p, \nabla f(p) \rangle = f(p)$. To see that this is a valid scoring rule, note that $\sr(q;p) = \E_{x \sim p}[f(q) + \langle x-q, \nabla f(q) \rangle] = f(q) + \langle p-q, \nabla f(q) \rangle$. By the concavity of $f$, this is at least $f(p) = \sr(p)$, and therefore $\sr$ satisfies the required incentive constraints.

Finally, note that since $\ell(p; q)$ is a linear function in $q$ with the property that $\ell(p; q) \leq \ell(p)$ for all $q \in \Delta_K$ and $\ell(p;p) = \ell(p)$, $\ell(p;q)$ must be a tangent hyperplane to $\ell(p)$ at $p$ and (if $\ell$ is uniquely differentiable) must have the form $\ell(p;q) = \ell(p) + \langle q - p, \nabla\sr(p) \rangle$. Substituting the unit vectors for $q$, we obtain equation \eqref{eq:uni_to_bi}.
\end{proof}

\subsection{Proof of Theorem~\ref{thm:multiclass_cal_suffices}}

From Lemma \ref{lem:uni_to_bi}, we first establish a bound on the norm of the gradient of a multiclass scoring rule analogous to that of Corollary \ref{cor:derivative_bound}.

\begin{corollary}\label{cor:multiclass_derivative_bound}
For any $p, q, q' \in \Delta_{K}$, $\langle q - q', \nabla \sr(p) \rangle \leq ||q - q'||_1$. 
\end{corollary}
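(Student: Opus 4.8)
The plan is to use the identity from Lemma~\ref{lem:uni_to_bi} to rewrite $\langle q - q', \nabla\sr(p)\rangle$ purely in terms of the bounded values $\sr(p,i)$ of the bivariate form, and then bound the resulting expression by exploiting that $q - q'$ sums to zero.

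First I would record the following consequence of Lemma~\ref{lem:uni_to_bi}. Writing $\nabla\sr(p)$ for the subgradient of the univariate form that reproduces the bivariate form via $\sr(p,x) = \sr(p) + \langle x - p, \nabla\sr(p)\rangle$, and taking $x$ to be the $i$-th unit vector, we get $\sr(p,i) = c_p + \langle e_i, \nabla\sr(p)\rangle$, where $c_p = \sr(p) - \langle p, \nabla\sr(p)\rangle$ does not depend on $i$. Hence, for any $\delta \in \Rset^K$ with $\sum_i \delta_i = 0$, we have $\langle \delta, \nabla\sr(p)\rangle = \sum_i \delta_i\,(\sr(p,i) - c_p) = \sum_i \delta_i\,\sr(p,i)$.

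Next I would apply this with $\delta = q - q'$, which satisfies $\sum_i \delta_i = \sum_i q_i - \sum_i q'_i = 0$ since $q, q' \in \Delta_K$. Split the indices into $P = \{i : \delta_i > 0\}$ and $N = \{i : \delta_i < 0\}$, and set $s = \sum_{i\in P}\delta_i = -\sum_{i\in N}\delta_i = \tfrac12\|q-q'\|_1$. Using $\sr(p,i) \le 1$ on the positive-weight terms and $\sr(p,i) \ge -1$ on the negative-weight terms (both because $\sr$ has range in $[-1,1]$),
$$\langle q - q', \nabla\sr(p)\rangle = \sum_{i\in P}\delta_i\,\sr(p,i) + \sum_{i\in N}\delta_i\,\sr(p,i) \le \sum_{i\in P}\delta_i - \sum_{i\in N}\delta_i = 2s = \|q - q'\|_1,$$
which is the claimed inequality; running the same bound with $q$ and $q'$ interchanged also gives $|\langle q - q', \nabla\sr(p)\rangle| \le \|q-q'\|_1$.

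I do not anticipate any genuine obstacle; the only point needing a little care is that $\nabla\sr(p)$ must be interpreted as the particular subgradient that reproduces the given bivariate form via Lemma~\ref{lem:uni_to_bi} (so that $\sr(p,i) = c_p + (\nabla\sr(p))_i$ holds exactly), not an arbitrary subgradient. An equivalent, more geometric way to organize the same argument is to decompose $q - q'$ as a nonnegative combination $\sum_{i,j} c_{ij}(e_i - e_j)$ of differences of vertices of $\Delta_K$ with $\sum_{i,j} c_{ij} = \tfrac12\|q-q'\|_1$ (a transportation-style decomposition), and note that $\langle e_i - e_j, \nabla\sr(p)\rangle = \sr(p,i) - \sr(p,j) \in [-2,2] = [-\|e_i - e_j\|_1, \|e_i - e_j\|_1]$; summing then yields the bound.
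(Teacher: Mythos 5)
Your proof is correct and follows essentially the same route as the paper: both use the identity from Lemma~\ref{lem:uni_to_bi} to rewrite $\langle q - q', \nabla\ell(p)\rangle$ as $\sum_i (q_i - q'_i)\,\ell(p,i)$ (the paper via $\ell(p;q) - \ell(p;q')$, you via cancellation of the constant $c_p$ because $q-q'$ sums to zero), and then conclude by the termwise bound $|\ell(p,i)| \le 1$. Your split into positive and negative indices is just a more explicit rendering of the paper's one-line estimate $\sum_i (q'_i - q_i)\ell(p,i) \le \sum_i |q'_i - q_i|$.
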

\begin{proof}
By equation \eqref{eq:uni_to_bi}, we have that $\langle q - q', \nabla \sr(p) \rangle = \sr(p;q') - \sr(p;q) = \sum_{i=1}^{K}(q'_i - q_i)\ell(p, i) \leq \sum_{i=1}^{K}|q'_i - q_i| = ||q - q'||_1$.
\end{proof}

\begin{proof}[Proof of Theorem~\ref{thm:multiclass_cal_suffices}]
Let $\sr$ be the scoring rule corresponding to this agent (so we wish to show that $\Reg_{\sr}(\bp, \bx) \leq 2\Cal(\bp, \bx)$). As in the proof of Theorem \ref{thm:cal_suffices}, we begin by showing that (for any $p, \hat{p} \in \Delta_K$)

\begin{equation}\label{eq:multiclass_cal_sr_bound}
    \sr(\hat{p}) \leq \sr(p;\hat{p}) \leq \sr(\hat{p}) + 2 ||p - \hat{p}||_1.
\end{equation}

The first inequality follows from the fact that $\sr$ is a proper scoring rule. To show the second inequality, first note that we have that $\sr(p;\hat{p}) = \sr(p) + \langle \hat{p} - p, \nabla \sr(p) \rangle$ by equation \eqref{eq:uni_to_bi}, which in turn is at most $\sr(p) + || p - \hat{p}||_1$ by Corollary \ref{cor:multiclass_derivative_bound}. Similarly, we have that $\sr(p) \leq \sr(\hat{p}) + || p - \hat{p}||_1$, since (by concavity of $\sr$) $\sr(p) \leq \sr(\hat{p}) + \langle p - \hat{p}, \nabla \sr(\hat{p}) \rangle \leq \sr(\hat{p}) + || p - \hat{p}||_1$. Combining these two inequalities, we obtain the second inequality in \eqref{eq:multiclass_cal_sr_bound}.

Now, define $n_p = \abs{\{t; p_t = p\}}$ and $\hat{p} = \frac{1}{n_p} \sum_{t; p_t = p} x_t$. Then, we have that

\begin{eqnarray*}
\Reg_{\sr}(\bp, \bx) &=& \sum_{t=1}^{T} \sr(p_t, x_t) - \sum_{t=1}^{T} \sr(\beta, x_t) \\
&=& \sum_{p \in \Delta_K} \sum_{t\,;\,p_t = p} (\sr(p, x_t) - \sr(\beta, x_t)) \\
&=& \sum_{p \in \Delta_K} n_p\left(\sr(p;\hat{p}) - \sr(\beta;\hat{p})\right) \\
&\leq & 2\sum_{p \in \Delta_{K}} n_p ||p - \hat{p}||_1 \\
&=& 2\Cal(\bp, \bx).
\end{eqnarray*}

The last inequality here follows from applying \eqref{eq:multiclass_cal_sr_bound} to both $\sr(p;\hat{p})$ and $\sr(\beta;\hat{p})$. 
\end{proof}

\end{document}